\title{Understanding Deep Gradient Leakage via Inversion Influence Functions}
\definecolor{Gray}{gray}{0.9}
\newcolumntype{g}{>{\columncolor{Gray}}c}
\definecolor{Gray}{gray}{0.93}
\newcolumntype{a}{>{\columncolor{Gray}}c}
\newtheorem{theorem}{Theorem}[section]
\newtheorem{lemma}{Lemma}[section]
\newtheorem{assumption}{Assumption}[section]
\theoremstyle{definition}
\theoremstyle{remark}
\newcommand{\cI}{{\mathcal{I}}}
\newcommand{\cO}{{\mathcal{O}}}
\newcommand{\cX}{{\mathcal{X}}}
\DeclareMathOperator*{\argmin}{arg\,min}
\newcommand{\bc}{\begin{center}}
\newcommand{\ec}{\end{center}}
\newcommand{\bdm}{\begin{displaymath}}
\newcommand{\edm}{\end{displaymath}}
\newcommand{\beq}{\begin{equation}}
\newcommand{\eeq}{\end{equation}}
\newcommand{\bfl}{\begin{flushleft}}
\newcommand{\efl}{\end{flushleft}}
\newcommand{\bt}{\begin{tabbing}}
\newcommand{\et}{\end{tabbing}}
\newcommand{\beqn}{\begin{align}}
\newcommand{\eeqn}{\end{align}}
\newcommand{\beqs}{\begin{align*}} %
\newcommand{\eeqs}{\end{align*}}  %
\newcommand{\norm}[1]{\left\|#1\right\|}
\newcommand{\Ebb}{\mathbb{E}}
\author{%
	Haobo Zhang\footnotemark[1]
 \\
	Michigan State University\\
	\texttt{zhan2060@msu.edu} \\
	\And
	Junyuan Hong\footnotemark[1]
 \\
	Michigan State University\\
	University of Texas at Austin\\
	\texttt{jyhong@utexas.edu} \\
	\And
	Yuyang Deng
 \\
	Pennsylvania State University\\
	\texttt{yzd82@psu.edu} \\
    \And
	Mehrdad Mahdavi
 \\
	Pennsylvania State University\\
	\texttt{mahdavi@cse.psu.edu} \\
    \And
	Jiayu Zhou
 \\
	Michigan State University\\
	\texttt{jiayuz@msu.edu} \\
}
\begin{document}

\maketitle

\renewcommand{\thefootnote}{\fnsymbol{footnote}}
\footnotetext[1]{Equal contribution.}

\begin{abstract}
  Deep Gradient Leakage (DGL) is a highly effective attack that recovers private training images from gradient vectors.
  This attack casts significant privacy challenges on distributed learning from clients with sensitive data, where clients are required to share gradients.
  Defending against such attacks requires but lacks an understanding of \emph{when and how privacy leakage happens}, mostly because of the black-box nature of deep networks.
  In this paper, we propose a novel Inversion Influence Function (I$^2$F) that establishes a closed-form connection between the recovered images and the private gradients by implicitly solving the DGL problem.
  Compared to directly solving DGL, I$^2$F is scalable for analyzing deep networks, requiring only oracle access to gradients and Jacobian-vector products. %
  We empirically demonstrate that I$^2$F effectively approximated the DGL generally on different model architectures, datasets, modalities, attack implementations, and perturbation-based defenses.
  With this novel tool, we provide insights into effective gradient perturbation directions, the unfairness of privacy protection, and privacy-preferred model initialization.
  Our codes are provided in \url{https://github.com/illidanlab/inversion-influence-function}.
  
\end{abstract}

\section{Introduction}
\label{sec:intro}

With the growing demands for more data and distributed computation resources, distributed learning gains its popularity in training large models from massive and distributed data sources~\citep{mcmahan2017communication,gemulla2011large}.
While most distributed learning requires participating clients to share only gradients or equivalents and provides data confidentiality, adversaries can exploit such information to infer training data.  
Deep Gradient Leakage (DGL) showed such a real-world risk in distributed learning of deep networks~\citep{zhu2019deep,jin2021cafe}.
Only given a private gradient and the corresponding model parameters, DGL reverse-engineers the gradient and recovers the private images with high quality.
The astonishing success of the DGL started an intensive race between privacy defenders~\citep{sun2020provable,wei2021gradient,gao2021privacy,scheliga2022precode} and attackers~\citep{geiping2020inverting,zhao2020idlg,jin2021cafe,jeon2021gradient,zhu2021rgap,fowl2022robbing}.

A crucial question to be answered behind the adversarial game is \emph{when and how DGL happens}, understanding of which is fundamental for designing attacks and defenses.
Recent advances in theoretical analysis shed light on the question:
\cite{balunovic2021bayesian} unified different attacks into a Bayesian framework, leading to Bayesian optimal adversaries, showing that finding the optimal solution for the DGL problem might be non-trivial and heavily relies on prior knowledge.
\cite{pan2022exploring,wang2023reconstructing}
advanced the understanding of the leakage in a specific class of network architectures, for example, full-connected networks.
However, it remains unclear how the privacy leakage happens for a broader range of models, for example, the deep convolutional networks, and there is a pressing need for a \emph{model-agnostic} analysis. 

In this paper, we answer this question by tracing the recovered sample by DGL algorithms back to the private gradient, where the privacy leakage ultimately derives from.
To formalize the optimal DGL adversary without practical difficulties in optimization, we ask the counterpart question:
\emph{what would happen if we slightly change the private gradient?
}

Answering this question by perturbing the gradient and re-evaluating the attack can be prohibitive due to the difficulty of converging to the optimal attack in a highly non-convex space.
Accordingly, we propose a novel Inversion Influence Function (I$^2$F) that provides an analytical description of the connection between gradient inversion and perturbation.    
Compared to directly solving DGL, the analytical solution in I$^2$F efficiently scales up for deep networks and requires only oracle access to gradients and Jacobian-vector products.
We note that I$^2$F shares the same spirit of the influence function \citep{koh2017understanding,hampel1974influence} that describes how optimal model parameters would change upon the perturbation of a sample and I$^2$F can be considered as an extension of the influence function from model training to gradient inversion.

The proposed I$^2$F characterizes leakage through the lens of private gradients and provides a powerful tool and new perspectives to inspect the privacy leakage:
(1) First, we find that gradient perturbation is not homogeneous in protecting the privacy and is more effective if it aligns with the Jacobian singular vector with smaller singular values.
(2) As the Jacobian hinges on the samples, the variety of their Jacobian structures may result in unfair privacy protection under homogeneous Gaussian noise.
(3) We also examine how the initialization of model parameters reshapes the Jocabian and therefore leads to quite distinct privacy risks.

These new insights provide useful tips on how to defend DGL, such as perturbing gradient in specific directions rather than homogeneously, watching the unfairness of protection, and carefully initializing models.
We envision such insights could lead to the development of fine-grained privacy protection mechanisms.
Overall, our contributions can be summarized as follows.
(1) For the first time, we introduce the influence function for analyzing DGL;
(2) We show both theoretical and empirical evidence of the effectiveness of the proposed I$^2$F which efficiently approximates the DGL in different settings;
(3) The tool brings in multiple new insights into when and how privacy leakage happens and provides a tool to gauge improved design of attack and defense in the future.

\section{Related Work}
\label{sec:related}

\emph{Attack.}
Deep Gradient Leakage (DGL) is the first practical privacy attack on deep networks~\citep{zhu2019deep} by only matching gradients of private and synthetic samples. 
As accurately optimizing the objective could be non-trivial for nonlinear networks, a line of work has been developed to strengthen the attack \citep{zhao2020idlg}.
\cite{geiping2020inverting} introduce scaling invariance via cosine similarity.
\cite{balunovic2021bayesian} summarizes these attacks as a variety of prior choices.
The theorem is echoed by empirical studies that show better attacks using advanced image priors~\citep{jeon2021gradient}.
It was shown that allowing architecture modification could weigh in higher risks in specific cases~\citep{zhu2021rgap,fowl2022robbing}.
Besides the study of gradient leakage in initialized models, recent evidence showed that the well-trained deep network can also leak the private samples~\citep{haim2022reconstructing}.
Yet, such an attack still requires strict conditions to succeed, e.g., small dataset and gradient flow assumptions.
Though these attacks were shown to be promising, exactly solving the attack problem for all types of models or data is still time-consuming and can be intractable in reasonable time~\citep{huang2021evaluating}.

\emph{Defense.}
Most defense mechanisms introduce a perturbation in the pipeline of gradient computation.
\emph{1)~Gradient Noise.}
Since the DGL is centered on reverse engineering gradients, the straightforward defense is to modify the gradient itself.
\cite{zhu2019deep} investigated a simple defense by Gaussian noising gradients or pruning low-magnitude parameters-based defense.
The Gaussian noise was motivated by Differential Privacy Gradient Descent~\citep{dwork2006differential,abadi2016deep}, which guarantees privacy in an aggregated gradient.
Except for random perturbation, the perturbation could also be guided by representation sensitivity~\citep{sun2020provable} or information bottleneck~\cite{scheliga2022precode}.
\emph{2)~Sample Noise.}
The intuition of sample noise is to change the source of the private gradient.
For example, \cite{gao2021privacy} proposed to seek transformations such that the transformed images are hard to be recovered.
Without extensive searching, simply using strong random data augmentation, e.g., MixUp~\citep{zhang2017mixup}, can also help defense~\citep{huang2021evaluating}.
If public data are available, hiding private information inside public samples is also an alternative solution~\citep{huang2020instahide}.
Though the defenses showed considerate protection against attacks in many empirical studies, the security still conceptually relied on the expected risks of a non-trivial number of evaluations.
Explanation of how well a single-shot protection, for example, noising the gradient, is essential yet remains unclear.

\emph{Understanding DGL.}
There are many recent efforts in understanding when noise-based mechanisms help privacy protection.
Though Differential Privacy (DP) provides a theoretical guarantee for general privacy, it describes the worst-case protection and does not utilize the behavior of DGL, motivating a line of studies to understand when and how the DGL happens.
\cite{sun2020provable} connected the privacy leakage to the representation, which aligns with later work on the importance of representation statistics in batch-normalization layers \citep{hatamizadeh2021towards,huang2021evaluating}.
\cite{balunovic2021bayesian} unified different attacks in a Bayesian framework where existing attacks differ mainly on the prior probability.
\cite{chen2021understanding} investigated the model structure to identify the key reasons for gradient leakage.
\cite{hatamizadeh2021towards,huang2021evaluating} evaluated the DGL in practical settings and showed that BN statistics is critical for inversion.
There are efforts advancing theoretical understandings of DGL.
\cite{pan2022exploring} studied the security boundary of ReLU activation functions.
In a specific class of fully-connected networks, \cite{wang2023reconstructing} showed that a single gradient query at randomized model parameters like DP protection can be used to reconstruct the private sample.
\cite{hayes2023bounding,fan2020rethinking} proposed an upper bound of the success probability of the data reconstruction attack and the relative recovery error, respectively, which is considered as the least privacy risk. Instead, we consider the worst-case privacy risk.

Though many useful insights are provided, existing approaches typically focus on specific model architectures. %
This work provides a new tool for analyzing the DGL without assumptions about the model architectures or attacking optimization.
Independent of the model architectures, our main observation is that the Jacobian, the joint derivative of loss w.r.t. input and parameters, is the crux.

\section{Inversion Influence Function}
\label{sec:method}

In this section, we propose a new tool to unravel the black box of DGL. 
We consider a general loss function $L(x, \theta)$ defined on the joint space of data $\cX$ and parameter $\Theta$.
For simplicity, we assume $x$ is an image with oracle supervision if considering supervised learning.
Let $x_0$ be the private data and $g_0\triangleq \nabla_\theta L(x_0, \theta)$ be its corresponding gradient which is treated as a constant.
An inversion attack aims at reverse engineering $g_0$ to obtain $x_0$.
Deep Gradient Leakage (DGL) attacks by directly searching for a synthetic sample $x$ to reproduce the private gradient $g$ by $\nabla_\theta L(x, \theta)$~\citep{zhu2019deep}.
Formally, the DGL is defined as a mapping from a gradient $g$ to a synthetic sample $x^*_g$:
\begin{align}
    x^*_g = G_r(g) \triangleq \argmin_{x\in \cX} \left\{ L_I (x; g) \triangleq  \norm{\nabla_\theta L(x, \theta) - g}^2 \right\}, \label{eq:inversion_attack}
\end{align}
where $L_I$ is the \emph{inversion loss} matching two gradients, $\norm{\cdot}$ denotes the $\ell_2$-norm either for a vector or a matrix. The $L_2$-norm of a matrix $A$ is defined by the induced norm $\norm{A} = \sup_{x \neq 0}(\norm{Ax}/\norm{x})$.
The Euclidean distance induced by $\ell_2$-norm can be replaced by other distances, e.g., cosine similarity~\citep{geiping2020inverting}, which is out of our scope, though.

Our goal is to understand \emph{how} the gradient of a deep network at input $x_0$ encloses the information of $x_0$.
For a linear model, e.g., $L(x,\theta)= x^\top \theta$, the gradient is $x$ and directly exposes the direction of the private sample.
However, the task gets complicated when a deep neural network is used and $L(x,\theta)$ is a nonlinear loss function.
The complicated functional makes exactly solving the minimization in \cref{eq:inversion_attack} highly non-trivial and requires intensive engineering in hyperparameters and computation~\citep{huang2021evaluating}.
Therefore, analyses tied to a specific DGL algorithm may not be generalizable. 
We introduce an assumption of a \emph{perfect attacker} to achieve a generalizable analysis:
\begin{assumption}\label{ass:reverse}
    Given a gradient vector $g$, there is only a unique minimizer for $L_I(x;g)$.
\end{assumption}
Since $\nabla_\theta L(x, \theta)$ is a minimizer for $G_r(\nabla_\theta L(x, \theta))$, \cref{ass:reverse}  implies that $G_r(\nabla_\theta L(x, \theta))$ is a perfect inversion attack that recovers $x$ exactly. 
The assumption of a perfect attacker considers the worst-case. It allows us to develop a concrete analysis, even though such an attack may not always be feasible in practice, for example, in non-linear deep networks. 
To start, we outline the desired properties for our privacy analysis.
\textbf{(1) Efficiency}: Privacy evaluation should be efficient in terms of computation and memory; 
\textbf{(2) Proximity}: The alternative should provide a good approximation or a lower bound of the risk, at least in the high-risk region; 
\textbf{(3) Generality}: The evaluation should be general for different models, datasets, and attacks.

\subsection{Perturbing the Private Gradient}
\label{sec:pert_grad}

To figure out the association between the leakage and the gradient $g$, we formalize a counterfactual: what kind of defense can diminish the leakage?
A general noise-based defense can be written as $g = \nabla_\theta L(x_0, \theta) + \delta$ where $\delta$ is a small perturbation.
According to \cref{ass:reverse}, a zero $\delta$ is not private. 
Instead, we are interested in non-zero $\delta$ and the recovery error (RE) of the recovered image:
\begin{align}
    \text{RE}(x_0, G_r(g_0+\delta)) \triangleq \norm{x_0 - G_r(g_0 + \delta)} \ \ 
\end{align}
For further analysis, we make two common and essential assumptions as follows.
\begin{assumption}
    $L(x, \theta)$ is twice-differentiable w.r.t. $x$ and $\theta$, i.e., $J \triangleq \nabla_x \nabla_\theta L(x, \theta) \in \mathbb{R}^{d_x \times d_{\theta}}$ exist, where $d_x$ and $d_{\theta}$ are the dimensions of $x$ and $\theta$, respectively.
\end{assumption}
\begin{assumption}
    $JJ^\top$ is invertible.
\end{assumption}
We then approximate $G_r(g_0+\delta)$ by the first-order Taylor expansion:
\begin{align}
    G_r(g_0+\delta) \approx G_r(g_0) + \frac{\partial G_r(g_0)}{\partial g_0} \delta = x_0 + \frac{\partial G_r(g_0)}{\partial g_0} \delta. \label{eq:taylor}
\end{align}
By the implicit function theorem, we can show that $\frac{\partial G_r(g_0)}{\partial g_0} = (JJ^\top)^{-1} J$ (proof in \cref{sec:proof:IG}).
Thus, for a small perturbation $\delta$, we can approximate the privacy leakage through DGL by
\begin{align}
    \text{\textbf{Inversion Influence Function} (I$^2$F):  }\ \  \norm{G_r(g_0+\delta) - x_0} \approx \cI(\delta; x_0) \triangleq \norm{(JJ^\top)^{-1} J \delta}. \label{eq:I2F}
\end{align} 
The I$^2$F includes a matrix inversion, computing which may be expensive and unstable for singular matrixes. Thus, we use a tractable lower bound of I$^2$F as:
\begin{align}
    \norm{(JJ^\top)^{-1} J \delta} \ge \frac{\norm{J\delta}}{ \lambda_{\max}(JJ^\top)} \triangleq \cI_{\text{lb}}(\delta; x_0),
    \label{eq:up_lw_bounds}
\end{align}
where $\lambda_{\max}(A)$ denotes the maximal eigenvalues of a matrix $A$. 
Computing the maximal eigenvalue is usually much cheaper than the matrix inversion.
We note that the approximation uses \cref{ass:reverse} and provides a general risk measurement of different attacks.

The lower bound describes how risky the gradient is in the worst case.
This follows the similar intuition of Differential Privacy (DP)~\citep{dwork2006differential} to bound the max chance (worst case) of identifying a private sample from statistic aggregation.
The proposed lower bound differentiates the DP worst case in that it leverages the problem structure of gradient inversion and therefore characterizes the DGL risk more precisely.

\textbf{Efficient Evaluation.}
The evaluation of $\cI$ or its lower bound $\cI_{\text{lb}}$ is non-trivial due to three parts: computation of the Jacobian, the matrix inversion, and eigenvalue.
The former two computations can be efficiently evaluated using established techniques, for example, \citep{koh2017understanding}.
\emph{1)~Efficient evaluation of $J\delta$.}
Instead of computing the dense Jacobian, an efficient alternative is to leverage the Jacobian-vector product, which is similar to the Hessian-vector product.
We can efficiently evaluate the product of Jacobian and $\delta$ by rewriting the product as $J\delta = \nabla_x (\nabla_\theta^\top L(x_0, \theta) \delta)$.
Since $\nabla_\theta^\top L(x_0, \theta) \delta$ is a scalar, the second derivative w.r.t. $x$ can be efficiently evaluated by autograd tools, e.g., PyTorch.
The computation complexity is equivalent to two times of gradient evaluation in addition to one vector production.
\emph{2)~Efficient matrix inversion.}
We can compute $b\triangleq(JJ^\top)^{-1} J \delta$ by solving
$\min_b \frac{1}{2}\norm{\delta - J^\top b}^2$,
whose gradient computation only contains two Jacobian-vector products.
There are other alternative techniques, for example, Neumann series or stochastic approximation, as discussed in \citep{koh2017understanding}.
For brevity, we discuss these alternatives in \cref{sec:app:method}.
In this paper, we use the \emph{least square trick} and solve the minimization by gradient descent directly.
\emph{3)~Efficient evaluation of privacy with the norm of Jacobian.}
Because that $\norm{J} = \sqrt{\lambda_{\max} (J J^\top)}$, we need to compute the maximum eigenvalue of $J J^\top$, which can be done efficiently by power iteration and the trick of Jacobian-vector products.

\textbf{Complexity.}
Suppose the computation complexity for evaluating gradient is a constant in the order $\cO(d)$ where $d$ is the maximum between the dimension of $\theta$ and $x$. %
Then evaluating the Jacobian-vector product is of complexity $\cO(d)$.
By the least square trick with $T$ iterations, we can compute $\cI$ in $\cO(dT)$ time.
The lower bound $\cI_{\text{lb}}$ replaces the computation of matrix inversion with computation on the max eigenvalue.
By $T'$-step power iteration, the complexity is $\cO(d(T+T'))$.

\textbf{Case study on Gaussian perturbation.}
Gaussian noise is induced from DP-SGD~\citep{abadi2016deep} where gradients are noised to avoid privacy leakage.
Assuming the perturbation follows a standard Gaussian distribution,
we can compute the expectation as follows:
\begin{align}
    \Ebb[\text{RE$^2$}(x_0, G_r(g_0+\delta))] \approx \Ebb[\cI^2(\delta; x_0)] = \sum \nolimits_{i=1}^d \lambda_i^{-1}(J^\top J), \label{eq:gauss_bound}
\end{align}
where $\lambda_i(J^\top J)$ is the $i$-th eigenvalue of $J^\top J$. The above illustrates the critical role of the Jacobian eigenvalues. 
For a perturbation to offer a good defense and yield a large recovery error, the Jacobian should have at least one small absolute eigenvalue. 
In other words, the protection is great when the Jacobian is nearly singular or rank deficient. 
The intuition for the insight is that a large eigenvalue indicates the high sensitivity of gradients on the sample, and therefore an inaccurate synthesized sample will be quickly gauged by the increased inversion loss.

\subsection{Theoretic Validation}

The proposed I$^2$F assumes that the perturbation is small enough for an accurate approximation, and yet it is interesting to know how good the approximation is for larger $\delta$. 
To study the scenario, we provide a lower bound of the inversion error on a non-infinitesimal $\delta$, using weak Lipschitz assumptions for any two samples $x$ and $x'$.
\begin{assumption} \label{ass:lip_jacobian} There exists $\mu_J>0$ such that
$\norm{\nabla_x \nabla_\theta L(x, \theta) - \nabla_x \nabla_\theta L(x', \theta)} \le \mu_J \norm{x-x'}$.
\end{assumption}
\begin{assumption} \label{ass:smooth} 
There exists $\mu_L>0$ such that $\norm{\nabla_\theta L(x, \theta) - \nabla_\theta L(x', \theta)} \le \mu_L \norm{x-x'}$.
\end{assumption}
\begin{theorem}\label{thm:cert_defense}
If \cref{ass:lip_jacobian} and \ref{ass:smooth} hold, then the recovery error satisfies:
\begin{align}
    \norm{x_0 - G_r(g_0 + \delta)} \ge \frac{\norm{J \delta}}{\mu_L \norm{J} + 2 \mu_J \norm{ g_0+\delta}}, \label{eq:lb_theorem}
\end{align}
where $J=\nabla_x \nabla_\theta L(x_0, \theta)$.
\end{theorem}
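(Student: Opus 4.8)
The plan is to bound the recovery error $\norm{r}$, where $r\triangleq G_r(g_0+\delta)-x_0$, by exploiting two facts that hold because $\hat x\triangleq G_r(g_0+\delta)$ is the (unique, by \cref{ass:reverse}) minimizer of the inversion loss $L_I(\cdot\,;g_0+\delta)$. Evaluating the objective in \cref{eq:inversion_attack} at the feasible competitor $x_0$, whose inversion loss equals $\norm{\delta}^2$ since $\nabla_\theta L(x_0,\theta)=g_0$, gives the value-comparison bound $\norm{s}\le\norm{\delta}$ for the residual $s\triangleq \nabla_\theta L(\hat x,\theta)-(g_0+\delta)$. The first-order optimality condition $\nabla_x L_I(\hat x;g_0+\delta)=0$ reads $J(\hat x)\,s=0$, where $J(x)\triangleq\nabla_x\nabla_\theta L(x,\theta)$ and $J\triangleq J(x_0)$; that is, the residual lies in the kernel of the Jacobian evaluated at the recovered point. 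These two consequences of optimality are the backbone of the argument.

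Next I would manufacture $J\delta$ out of quantities that are proportional to $\norm{r}$. Because $\nabla_\theta L(x_0,\theta)=g_0$, the gradient gap $w\triangleq \nabla_\theta L(\hat x,\theta)-g_0$ obeys the exact identity $\delta=w-s$, so $J\delta=Jw-Js$. The first term is controlled directly by smoothness (\cref{ass:smooth}): $\norm{Jw}\le\norm{J}\,\norm{w}\le\mu_L\norm{J}\,\norm{r}$. The second term is the crux. A naive estimate $\norm{Js}\le\norm{J}\,\norm{s}\le\norm{J}\,\norm{\delta}$ is useless, since it is not proportional to $\norm{r}$ and would block isolating a lower bound. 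Instead I would invoke the stationarity relation $J(\hat x)s=0$ to rewrite $Js=(J-J(\hat x))\,s$ and then apply the Jacobian-Lipschitz bound (\cref{ass:lip_jacobian}), obtaining $\norm{Js}\le\norm{J-J(\hat x)}\,\norm{s}\le\mu_J\norm{r}\,\norm{s}$, which is now first order in $\norm{r}$.

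To land on the stated denominator I would split the residual via the triangle inequality, $\norm{s}\le\norm{\nabla_\theta L(\hat x,\theta)}+\norm{g_0+\delta}$, and bound each summand by the perturbed-gradient norm (near the optimum the recovered gradient nearly reproduces the target, so $\norm{\nabla_\theta L(\hat x,\theta)}$ is comparable to $\norm{g_0+\delta}$), giving $\norm{s}\le 2\norm{g_0+\delta}$ and hence $\norm{Js}\le 2\mu_J\norm{g_0+\delta}\,\norm{r}$. Collecting the two estimates yields $\norm{J\delta}\le\big(\mu_L\norm{J}+2\mu_J\norm{g_0+\delta}\big)\norm{r}$, and dividing through by the positive parenthesized factor produces exactly \cref{eq:lb_theorem}.

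The main obstacle is the $Js$ term: the whole proof hinges on turning it into an $O(\norm{r})$ quantity, and the only available leverage is the exact kernel relation $J(\hat x)s=0$ from stationarity, which lets the Jacobian discrepancy $J-J(\hat x)$ — rather than $J$ itself — act on $s$. A secondary delicacy is the bookkeeping of $\norm{s}$: the tightest route uses $\norm{s}\le\norm{\delta}$ straight from the value comparison, whereas routing through $\norm{g_0+\delta}$ yields the cleaner constant reported in the statement; in either case one must verify that every error term remains proportional to $\norm{r}$ and that the denominator stays strictly positive, so that the final division is legitimate.
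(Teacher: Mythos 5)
Your backbone---stationarity $J(\hat x)s=0$ at the minimizer, the decomposition $J\delta = Jw - Js$, and the two bounds $\norm{Jw}\le\mu_L\norm{J}\,\norm{r}$ (from \cref{ass:smooth}) and $\norm{Js}=\norm{(J-J(\hat x))s}\le\mu_J\norm{r}\,\norm{s}$ (from \cref{ass:lip_jacobian})---is correct, and it is essentially the paper's argument reorganized: the paper packages the same algebra as a Lipschitz estimate for $\nabla_x L_I(\cdot\,;g_0+\delta)$ between $x_0$ and $\hat x$ and then divides by the Lipschitz constant. The genuine gap is your last step, $\norm{s}\le 2\norm{g_0+\delta}$. ``Near the optimum the recovered gradient nearly reproduces the target'' is not a proof: what optimality actually gives is the value comparison $\norm{s}\le\norm{\delta}$, whence $\norm{\nabla_\theta L(\hat x,\theta)}\le\norm{g_0+\delta}+\norm{\delta}$ and therefore at best $\norm{s}\le 2\norm{g_0+\delta}+\norm{\delta}$. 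Neither estimate yields the stated denominator. If you instead finish with $\norm{s}\le\norm{\delta}$, you prove a \emph{different} theorem, with denominator $\mu_L\norm{J}+\mu_J\norm{\delta}$, and that does not imply \cref{eq:lb_theorem}: take $\delta=-g_0$, where the stated denominator collapses to $\mu_L\norm{J}$ while yours retains the extra $\mu_J\norm{g_0}$, so the stated bound is strictly stronger than what you can conclude. As submitted, the argument therefore does not establish the theorem.

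The missing ingredient is exactly what the paper invokes at the corresponding point: under the worst-case perfect-attacker reading of \cref{ass:reverse}, the minimizer reproduces the perturbed gradient exactly, $\nabla_\theta L(G_r(g_0+\delta),\theta)=g_0+\delta$, i.e.\ $s=0$ (the paper uses this identity explicitly to evaluate its Lipschitz constant). With that one line, your troublesome term $Js$ vanishes identically and your decomposition gives $\norm{r}\ge\norm{J\delta}/(\mu_L\norm{J})$, which implies \cref{eq:lb_theorem} a fortiori. Your route then actually exposes slack in the paper's constant: the paper bounds $\norm{\nabla_\theta L(\hat x,\theta)}$ and $\norm{g_0+\delta}$ separately inside its Lipschitz constant even though their difference $s$ is zero under its own assumption, and that is precisely where the $2\mu_J\norm{g_0+\delta}$ term comes from. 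So the repair is short and your organization is arguably cleaner, but the proof as written has a real hole at the $\norm{s}$ step.
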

The proof is available in \cref{sec:app:proof:bnd}. 
Similar to I$^2$F, the inversion error could be lower bound using the Jacobian matrix.
\cref{eq:lb_theorem} matches our lower bound $\cI_{\text{lb}}$ in \cref{eq:up_lw_bounds} with an extra term of gradients.
The bound implies that our approximation could be effective as a lower bound even for larger $\delta$.
Besides, the lower bound provide additional information when $\delta$ is large:
when $\norm{\delta}$ gets infinitely large, the bound converges to $\Omega(\norm{J\delta}/(\mu_J \norm{\delta}))$, which does not scale up to infinity.
The intuition is that the $x$ is not necessarily unbounded for matching an unbounded `gradient'.

\section{Empirical Validation and Extensions}
\label{sec:validation}

We empirically show that our metric is general for different attacks with different models on different datasets from different modalities.

\textbf{Setup.}
We evaluate our metric on two image-classification datasets: MNIST~\citep{lecun1998mnist} and CIFAR10~\citep{krizhevsky2009learning}.
We use a linear model and a non-convex deep model ResNet18~\citep{he2015deep} (RN18) trained with cross-entropy loss.
We evaluate our metric on two popular attacks: the DGL attack~\citep{zhu2019deep} and the GS attack~\citep{geiping2020inverting}.
DGL attack minimizes the $L_2$ distance between the synthesized and ground-truth gradient, while the GS attack maximizes the cosine similarity between the synthesized and ground-truth gradient.
We use the Adam optimizer with a learning rate of 0.1 to optimize the dummy data.
Per attacking iteration, the dummy data is projected into the range of the ground-truth data $[0,1]$ with the GS attack.
When the DGL attack cannot effectively recover the data, e.g., on noised linear gradients, we only conduct the GS attack to show the highest risk.
We use the root of MSE (RMSE) to measure the difference between the ground-truth and recovered images.
All the experiments are conducted on one NVIDIA RTX A5000 GPU with the PyTorch framework. More details and extended results are attached in \cref{sec:app:exp}.

\textbf{Extension to singular Jacobians.}
We consider the $JJ^\top$ be singular, which often happens with deep neural networks where many eigenvalues are much smaller than the maximum.
In this case, the inversion of $JJ^\top$ does not exist or could result in unbounded outputs.
To avoid the numerical instability, we use $\norm{(JJ^\top + \epsilon I)^{-1} J \delta}$ with a constant $\epsilon$.
In our experiments, the $\epsilon$ is treated as a hyperparameter and will be tuned on samples to best fit the linear relation between RMSE and $\cI$.

\textbf{Batch data.}
In practice, an attacker may only be able to carry out batch gradient inverting, $\norm{\frac{1}{n}\sum_i^n \nabla_\theta L(x_i, \theta) - \frac{1}{n}\sum_i^n g_i}$, which can be upper bounded by $\frac{1}{n}\sum_i^n \norm{ \nabla_\theta L(x_i, \theta) - g_i}$. %
Such decomposition can be implemented following~\citep{wen2022fishing}, which modifies the parameters of the final layer to reduce the averaged gradient to an update of a single sample. 
Thus, we only consider the more severe privacy leakage by individual samples instead of a batch of data.

\textbf{How well are the influence approximations?}
In \cref{fig:approx_comp}, we compare the two approximations of I$^2$F.
$\cI_{\text{lb}}$ relaxes the matrix inversion in $\cI$ to the reciprocal of $\norm{JJ^\top}$.
$\cI_{\text{nom}}$ further drops the denominator and only keep the nominator $\norm{J\delta}$.
The two figures show that $\cI_{\text{lb}}$ serves as a tight approximation to the real influence while $\cI_{\text{nom}}$ is much smaller than the desired value.
In \cref{fig:bound_MSE}, we see that the relaxation $\cI_{\text{lb}}$ is also a good approximation or lower bound for the RMSE.

\begin{figure}[t]
    \centering
    \subfigure[Compare different approximation]{
        \includegraphics[width=0.23\textwidth]{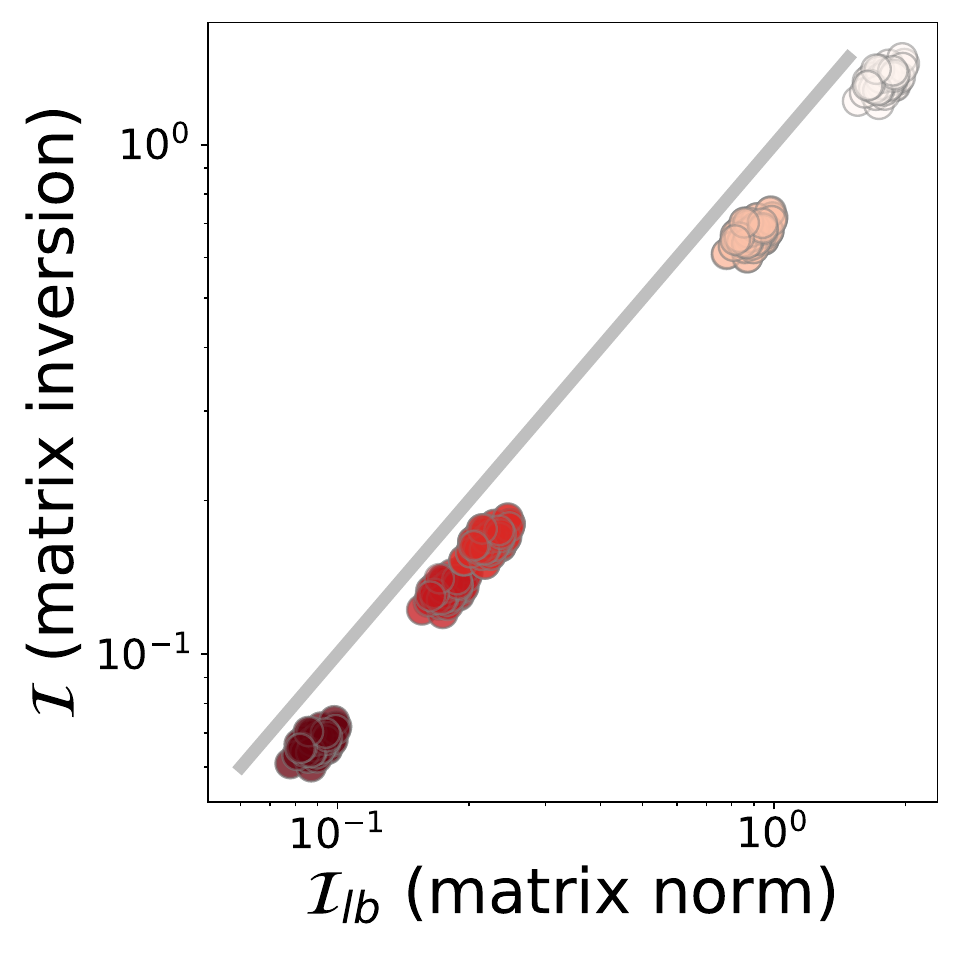} %
        \includegraphics[width=0.23\textwidth]{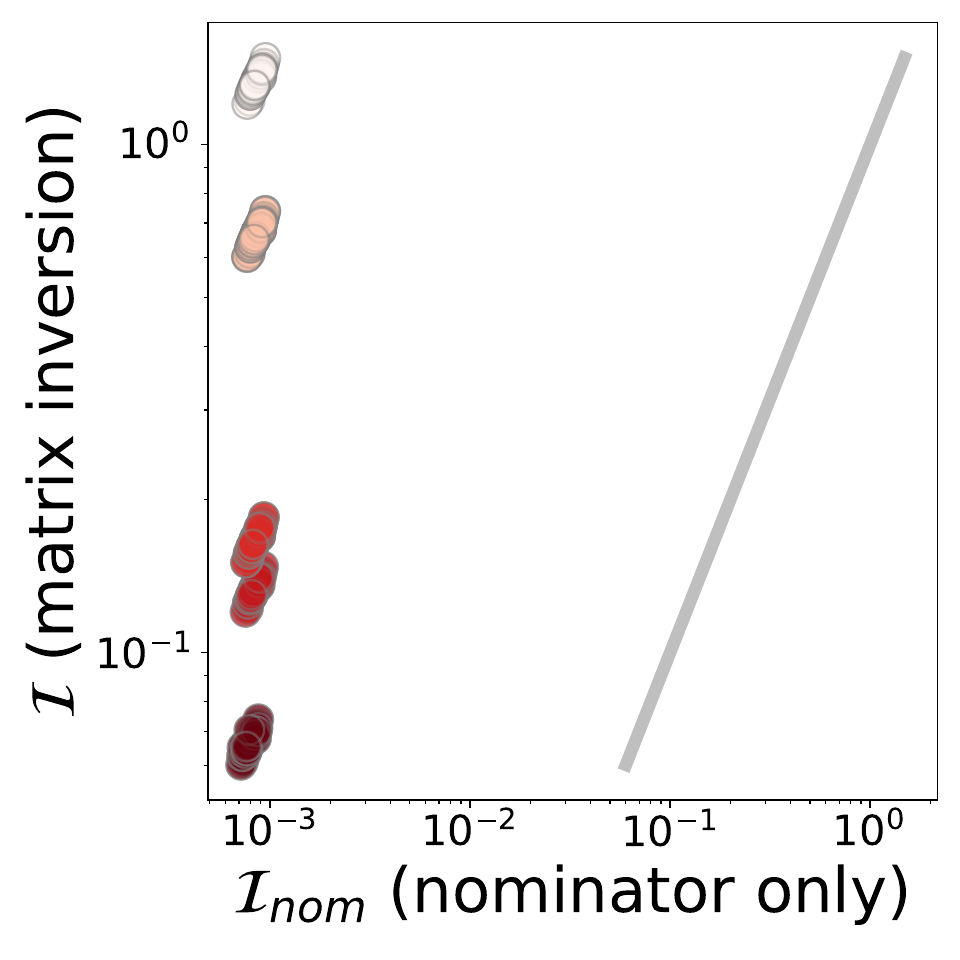}
        \label{fig:approx_comp}
    }
    \subfigure[Compare I$^2$F to RMSE]{
        \includegraphics[width=0.23\textwidth]{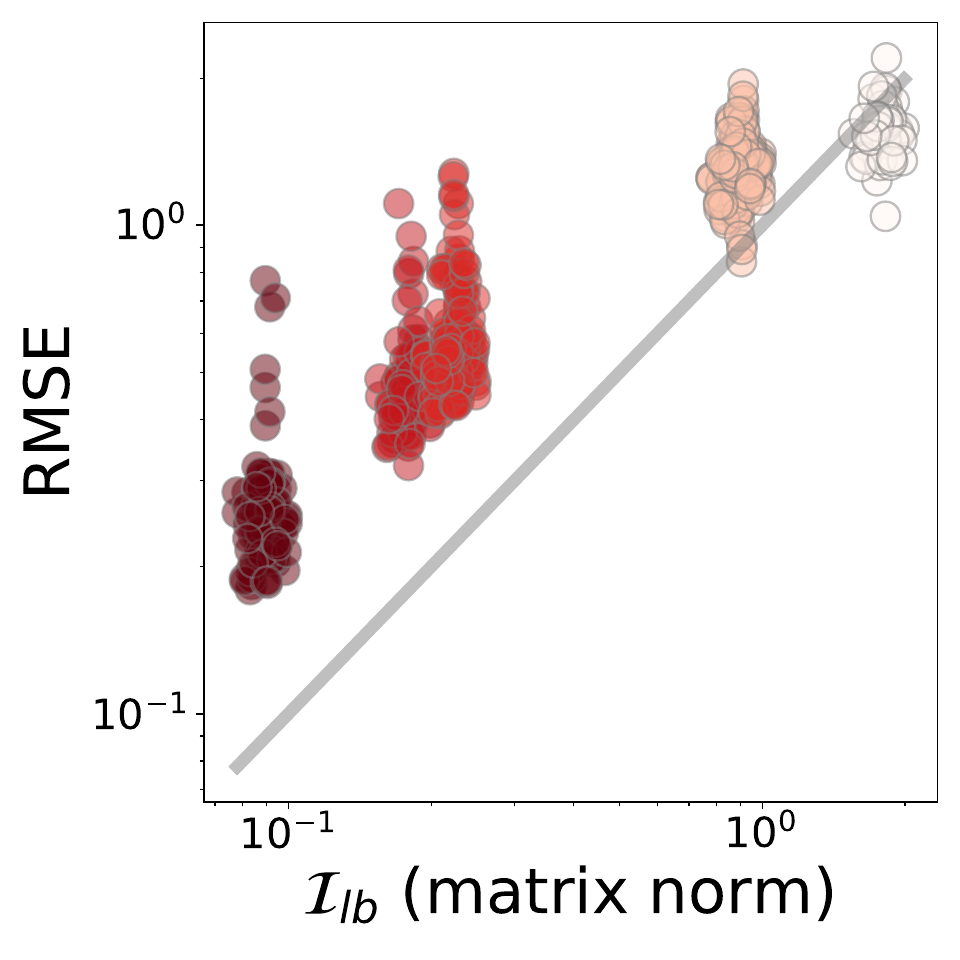} %
        \includegraphics[width=0.23\textwidth]{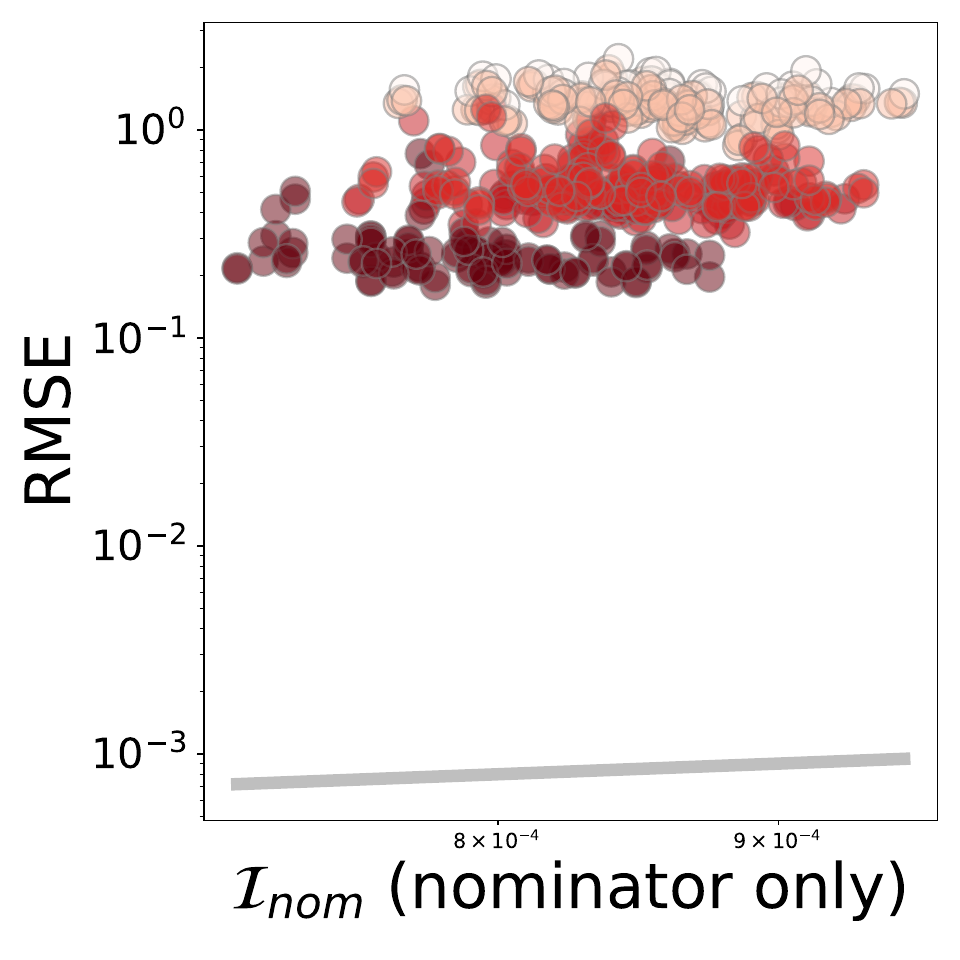}
        \label{fig:bound_MSE}
    }
    \vspace{-0.1in}
    \caption{Value comparisons attacking ResNet18 on MNIST by DGL, where the grey line indicates the equal values and darker dots imply smaller Gaussian perturbation $\delta$. 
    In (a), the y-axis is calculated as defined in \cref{eq:I2F} and $\mathcal{I}_{lb}$ is calculated as defined in \cref{eq:up_lw_bounds}.
    I$^2$F lower bound ($\cI_{\text{lb}}$) provides a good approximation to the exact value with matrix inversion and to the root of mean square error (RMSE) of recovered images. Instead, removing the denominator in $\cI$ results in overestimated risks. %
    }
    \label{fig:valu_comp}
\end{figure}

\begin{figure*}[t]
\centering
    \subfigure[CIFAR10, DGL, RN18]{
        \includegraphics[width=0.22\textwidth]{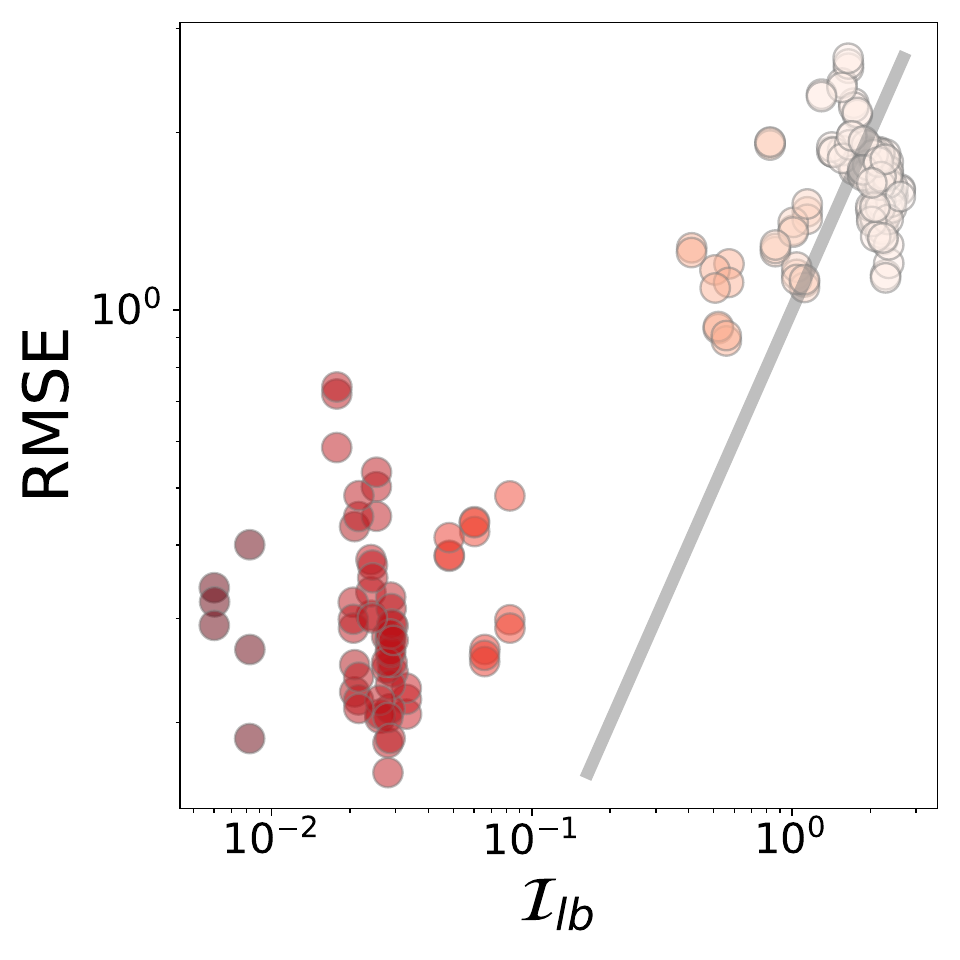}
        \label{fig:cifar10-dgl-rn18}
    }\hfill
    \subfigure[CIFAR10, GS, RN18]{
        \includegraphics[width=0.22\textwidth]{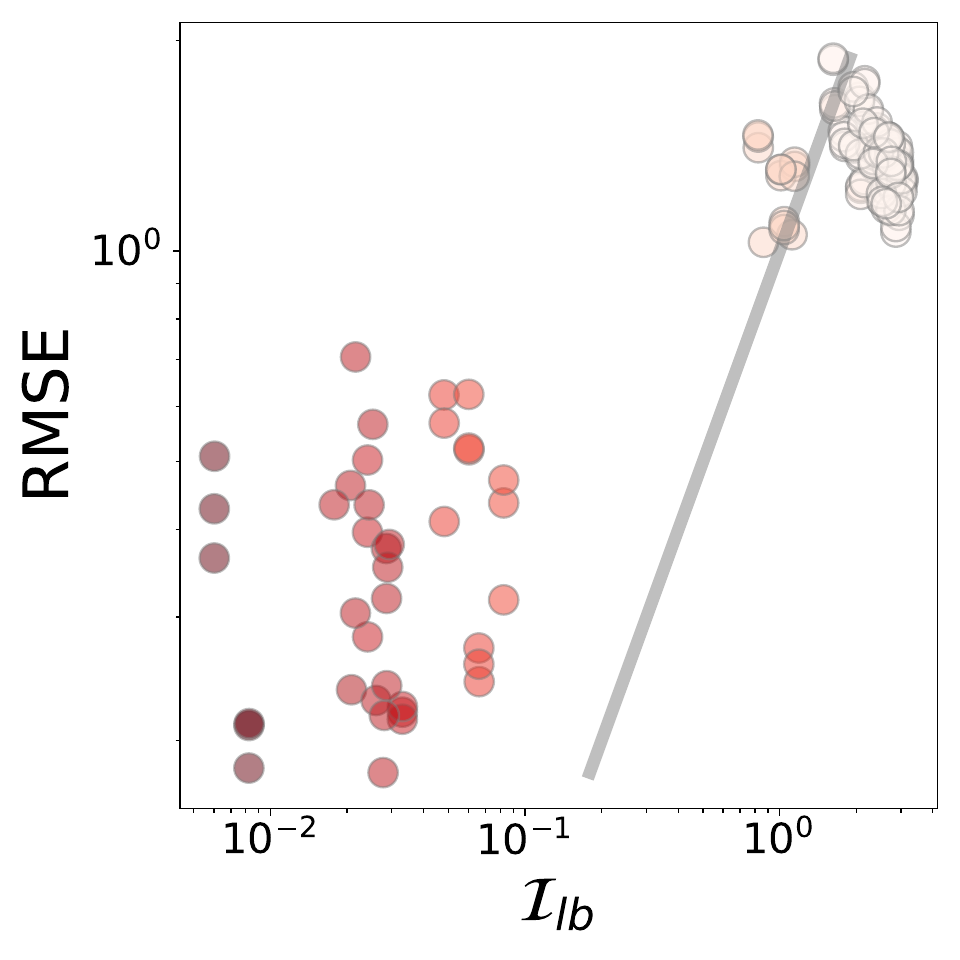}
        \label{fig:cifar10-gs-rn18}
    }\hfill
    \subfigure[MNIST, GS, RN18]{
        \includegraphics[width=0.22\textwidth]{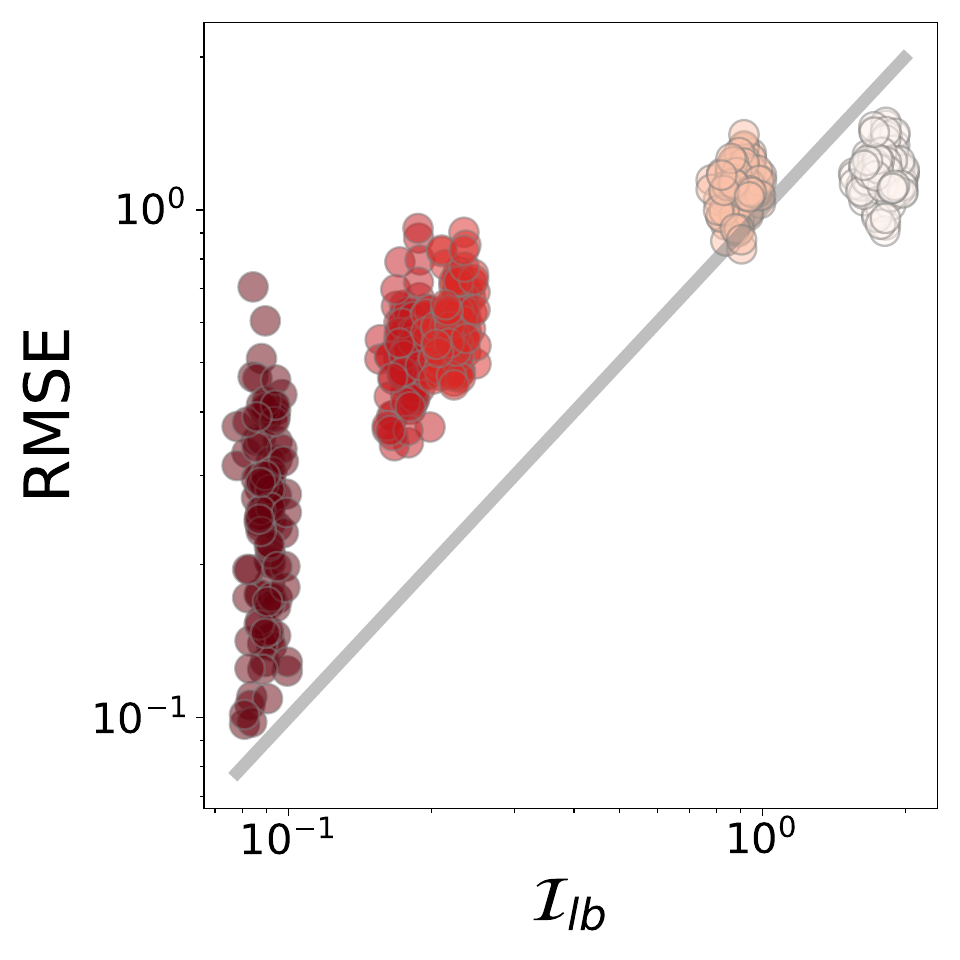}
        \label{fig:mnist-gs-rn18}
    }\hfill
    \subfigure[MNIST, GS, Linear]{
        \includegraphics[width=0.22\textwidth]{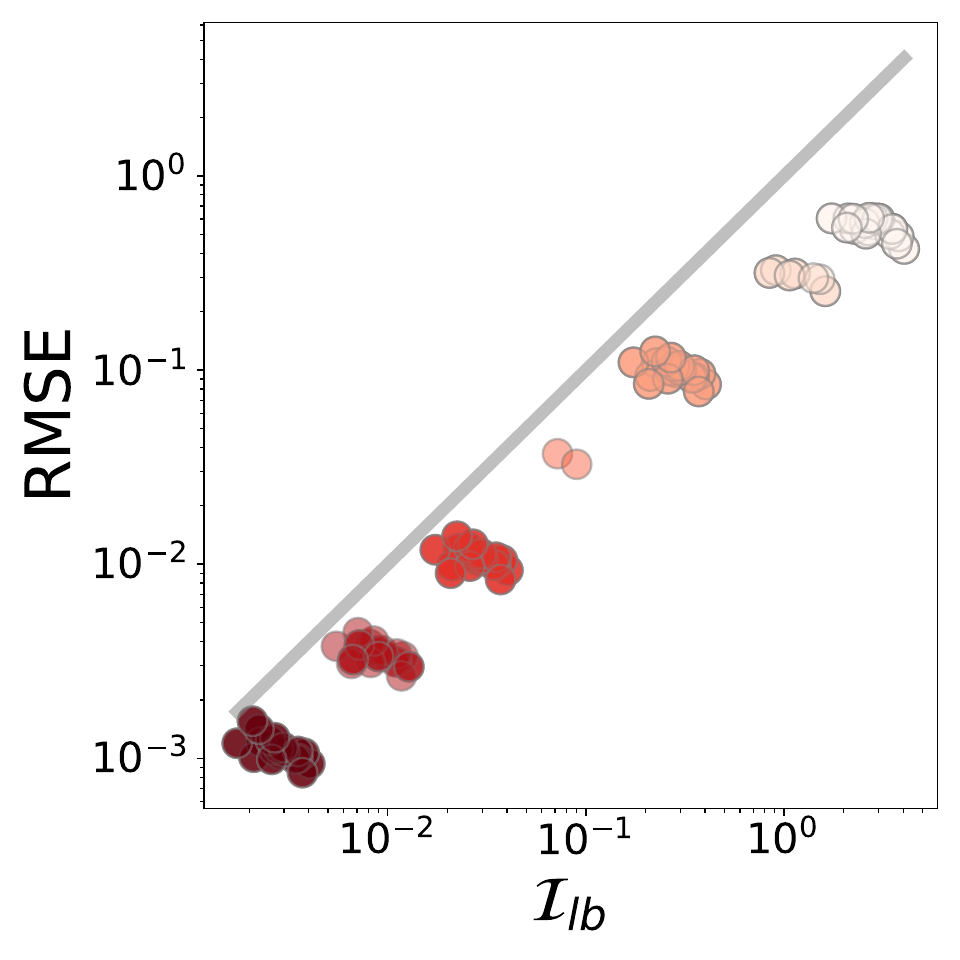}
        \label{fig:mnist-gs-linear}
    }
    \vspace{-0.1in}
    \caption{I$^2$F works under different settings: datasets, attacks, and models. The grey line indicates the equal values, and darker dots imply smaller Gaussian perturbation $\delta$.}
    \label{fig:vary_data_attack_model}
    \vspace{-0.1in}
\end{figure*}

\textbf{Validation on different models, datasets, and attacks.}
In \cref{fig:vary_data_attack_model}, we show that I$^2$F can work approximately well in different cases though the approximation accuracy varies by different cases.
\emph{(1)~Attack:} In \cref{fig:cifar10-dgl-rn18,fig:cifar10-gs-rn18}, we compare DGL to the Gradient Similarly (GS)~\citep{geiping2020inverting} that matches gradient using cosine similarity instead of the $L_2$ distance.
We see that different attacks do not differ significantly.
\emph{(2)~Dataset:} As shown in \cref{fig:cifar10-gs-rn18,fig:mnist-gs-rn18}, different datasets could significantly differ regarding attack effectiveness.
The relatively high RMSE implies that CIFAR10 tends to be harder to attack due to more complicated features than MNIST.
Our metric suggests that there may exist a stronger inversion attack causing even lower RMSE.
\emph{(3)~Model:} Comparing \cref{fig:mnist-gs-rn18,fig:mnist-gs-linear}, the linear model presents a better correlation than the ResNet18 (RN18).
Because the linear model is more likely to be convex, the first-order Taylor expansion in \cref{eq:taylor} can approximate its RMSE than one of the deep networks.

\textbf{Results on Large Models and Datasets.}
We also evaluate our I$^2$F metric on ResNet152 with ImageNet.
For larger models, the RMSE is no longer a good metric for the recovery evaluation. Even if state-of-the-art attacks are used and the recovered image is visually similar to the original image in \cref{fig:bound-resnet152-imagenet}, the two images are measured to be different by RMSE, due to the visual shift: The dog head is shifted toward the left side. To capture such shifted similarity, we use LPIPS~\citep{zhang2018unreasonable} instead, which measures the semantic distance between two images instead of the pixel-to-pixel distance like RMSE.
\cref{fig:bound-resnet152-imagenet} shows that I2F is correlated to LPIPS using large models and image scales. This implies that I2F is a good estimator of recovery similarity.
In \cref{fig:demo-resnet152-imagenet}, original
images with a lower I2F also have a smaller LPIPS, which means a better reconstruction. 
Recovered images on the right (the German shepherd and the panda) cannot be recognized while those on the left (the owl and the beagle) still have enough semantic information for humans to recognize.

\begin{figure*}[t]
\centering
  \hfill
  \subfigure[]{
    \includegraphics[width=0.22\textwidth]{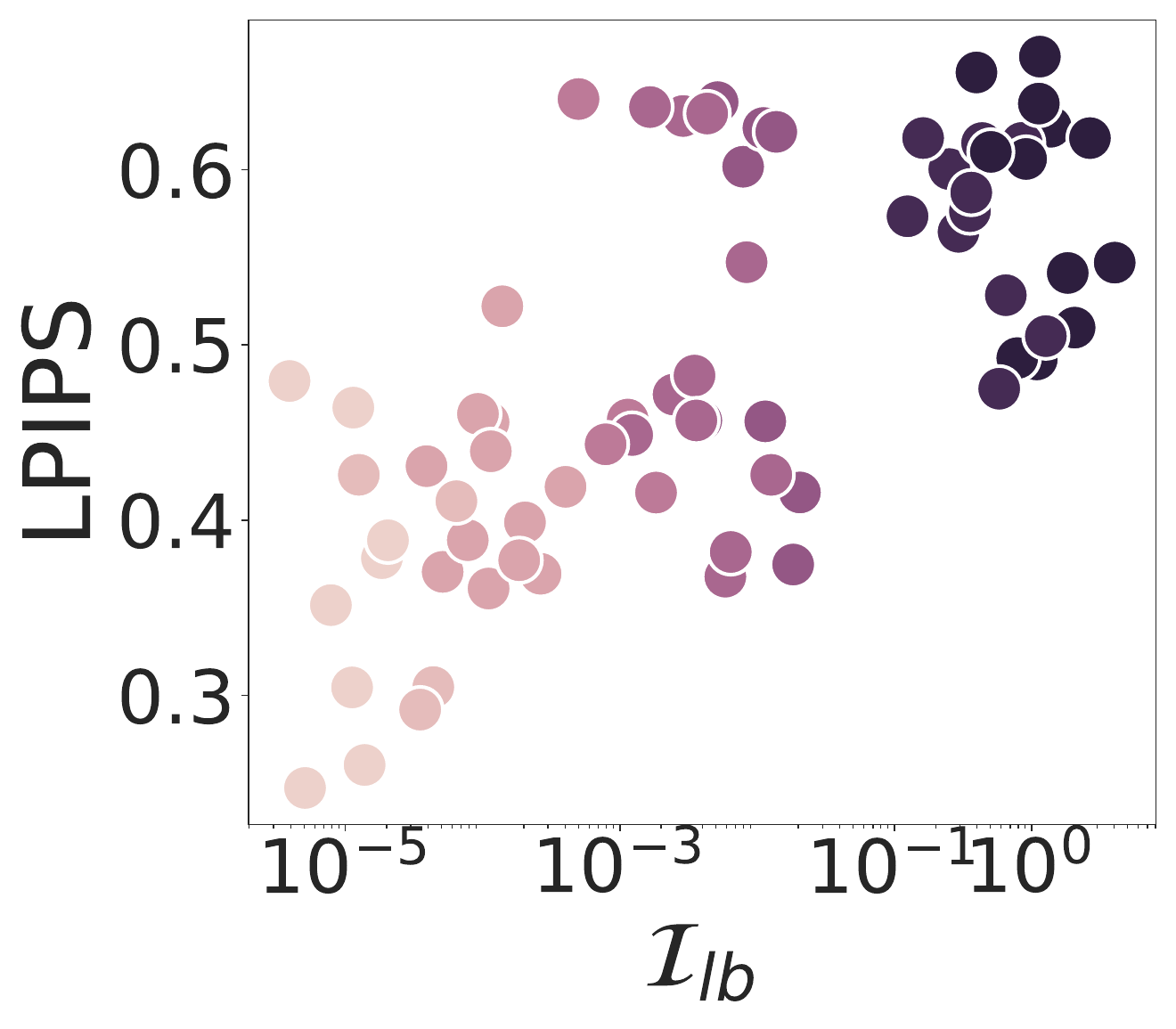}
    \label{fig:bound-resnet152-imagenet}
    } \hfill
  \subfigure[]{
    \includegraphics[width=0.4\textwidth]{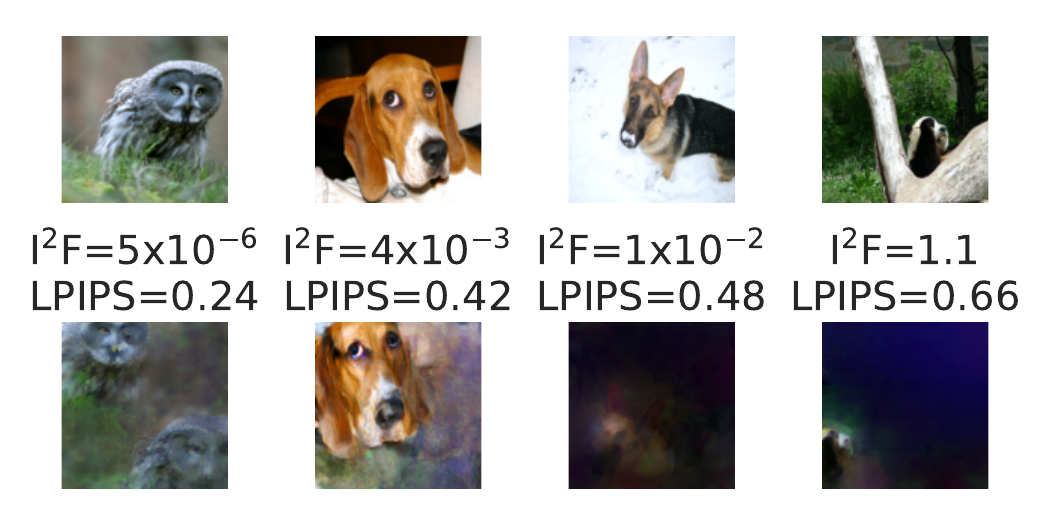}
    \label{fig:demo-resnet152-imagenet}
    } \hfill
    \caption{Evaluation of I$^2$F on ResNet152 and ImageNet. (a): Darker color means larger noise variance. 
    LPIPS is used to evaluate the semantic information of the recovered and original images. 
    $I_{lb}$ is a good estimator of the semantic distance between the recovered images and original images.
    (b): Original (top) and recovered (bottom) images with their corresponding I$^2$F and LPIPS. Images with a lower I$^2$F also have a smaller LPIPS, which implies a better reconstruction. 
}
    \label{fig:resnet152-imagenet}
    \vspace{-0.1in}
\end{figure*}

\textbf{Results on Language Models and Datasets.}
\def\LLMFigWidth{.23}
Since the input space of language models is not continuous like images, current attacks on images cannot be directly transferred to text.
Besides, continuously optimized tokens or embeddings should be projected back to the discrete text, which induces a gap between the optimal and the original text.
Thus, the investigation of privacy risks in text is an interesting yet non-trivial problem.
We evaluate the proposed I$^2$F metric on BERT~\citep{devlin2018bert} and GPT-2~\citep{radford2019language} with TAG attack~\citep{deng2021tag}, which minimizes $L_2$ and $L_1$ distance between gradients of the original and recovered images.
We use \emph{ROUGE-L}~\citep{lin2004rouge} and \emph{Google BLEU}~\citep{wu2016googles} to measure the semantic similarity between the original text and the recovered text.
More experimental details and results with \textit{ROUGE-1} and \textit{feature MSE} are provided in \ref{app:validation-language}.
The results are presented in \cref{fig:LLM}.
A darker color indicates a larger noise variance.
Since the \textit{ROUGE-L} and \textit{Google BLEU} measure the semantic similarity of the text pair while our metric estimates the difference, two semantic metrics are negatively correlated to $\mathcal{I}_{lb}$. The results demonstrate the effectiveness of our metric in evaluating the privacy risks of discrete data.

\begin{figure*}
    \centering 
    \subfigure[ROUGE-L]{
    \includegraphics[width=\LLMFigWidth\textwidth]{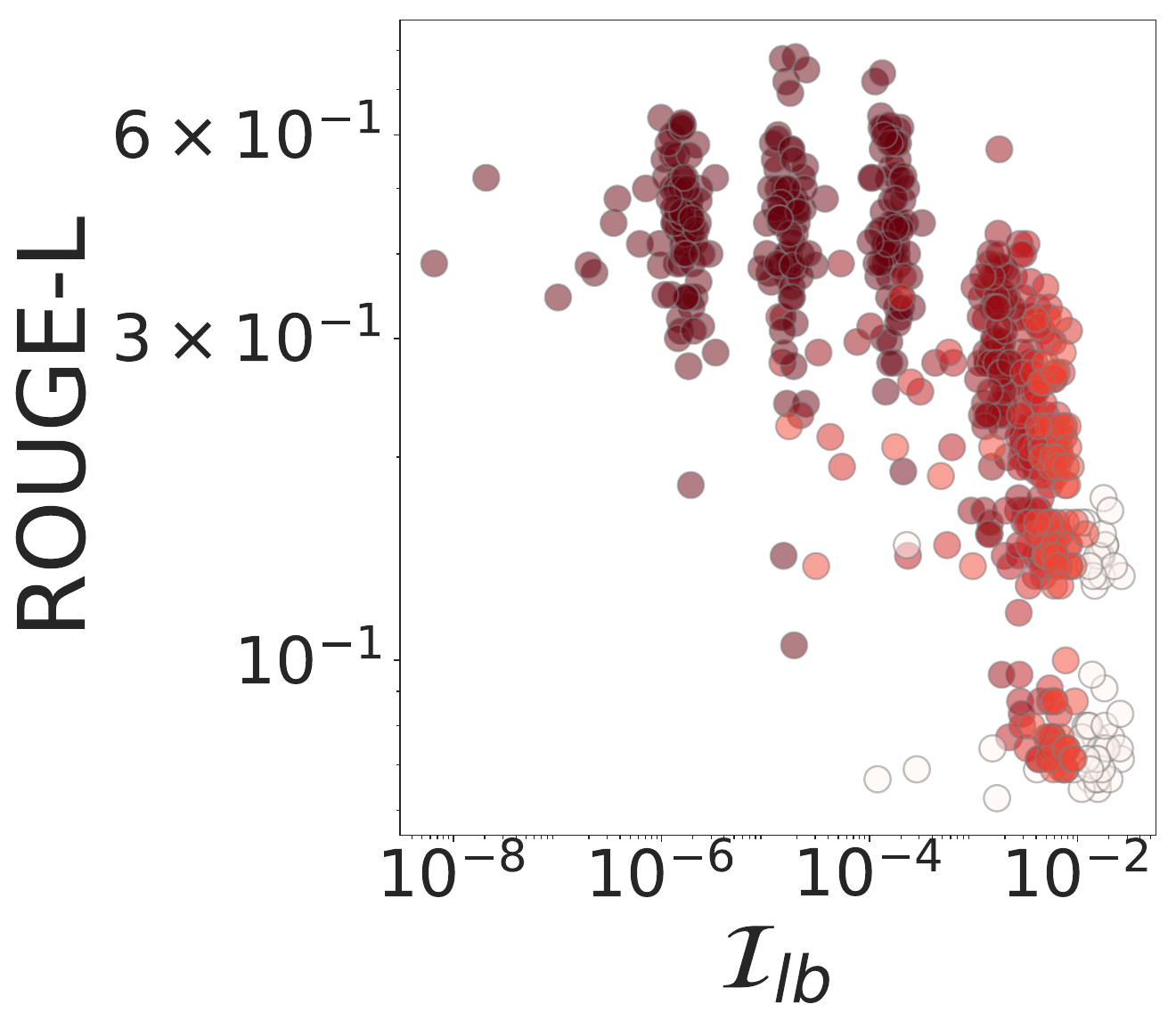}
    }
    \subfigure[Google BLEU]{
    \includegraphics[width=\LLMFigWidth\textwidth]{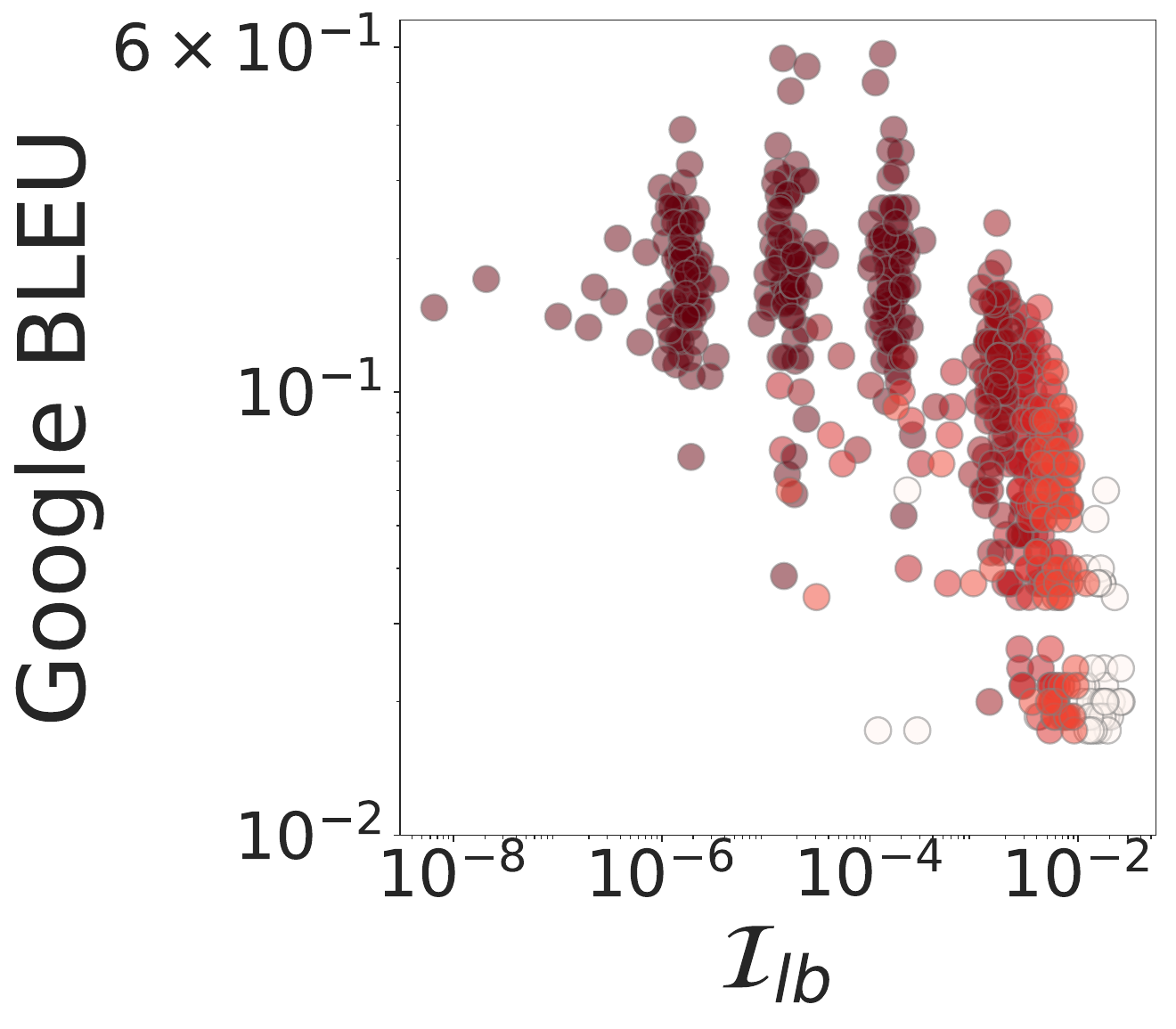}
    }
    \subfigure[ROUGE-L]{
    \includegraphics[width=\LLMFigWidth\textwidth]{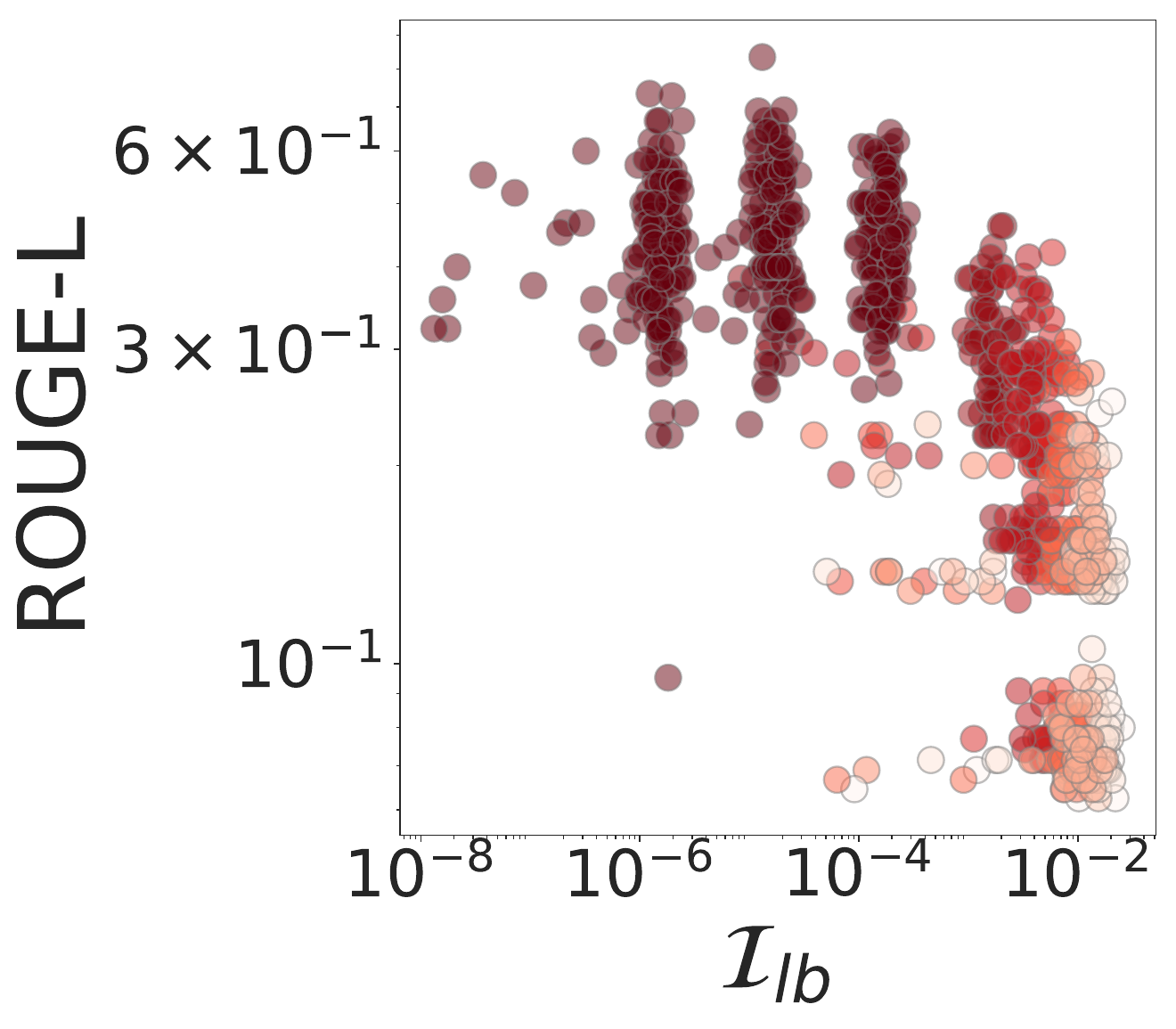}
    }
    \subfigure[Google BLEU]{
    \includegraphics[width=\LLMFigWidth\textwidth]{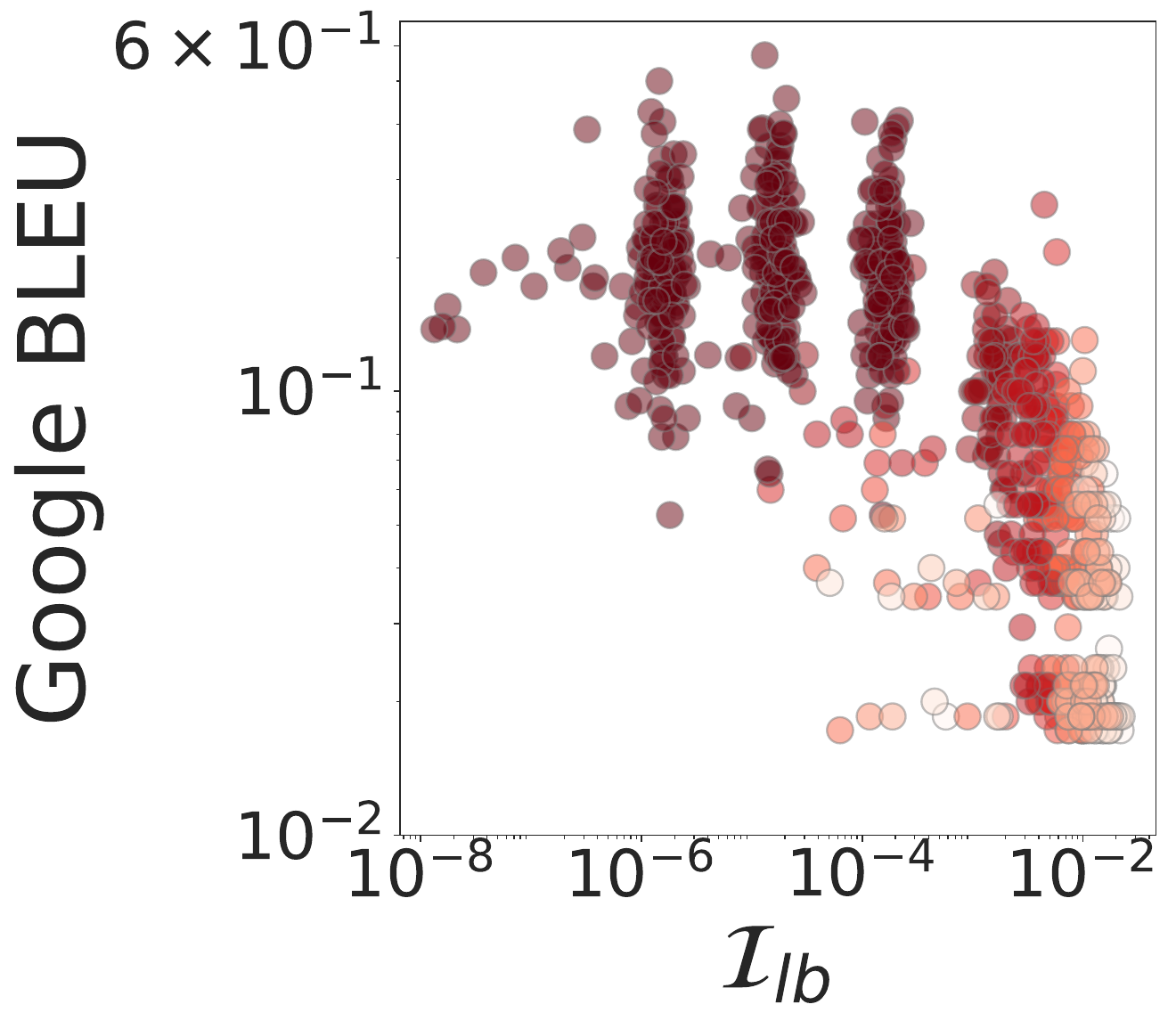}
    }
    \caption{Evaluation of $\mathcal{I}_{lb}$ on BERT (a-b) and GPT-2 (c-d). A darker color means a larger noise variance. ROUGE-L and Google BLEU are used to evaluate the semantic similarity between the original text and the recovered text. $\mathcal{I}_{lb}$ is linearly correlated to the two semantic metrics, which means $\mathcal{I}_{lb}$ can be used to estimate the privacy risk of the private text.}
    \label{fig:LLM}
\end{figure*}

\section{When Does Privacy Leakage Happen?}
\label{sec:insights}

Guided by the formulation of I$^2$F, we provide insights on when privacy leakage happens based on the proposed metric.
Our insights are rooted in the properties of Jacobian.
Since we want to consider the whole property of the Jacobian matrix, we choose a shallow convolutional neural network LeNet~\citep{lecun1998lenet} to trade off the utility and efficiency of experiments by default.

\subsection{Perturbations Directions Are Not Equivalent}

\textbf{Implication on choosing $\delta$.}
\cref{eq:I2F} clearly implies that the perturbation is not equal in different directions.
Decomposing $J=U\Sigma V^\top$ using Singular Value Decomposition (SVD), we obtain $\cI(\delta; x_0) = \norm{U\Sigma^{-1} V^\top \delta}$.
Thus, $\delta$ tends to yield a larger I$^2$F value if it aligns with the directions of small eigenvalues of $JJ^\top$.

\begin{wrapfigure}{r}{0.25\textwidth}
    \centering
    \vspace{-0.3in}
    \includegraphics[width=0.24\textwidth]{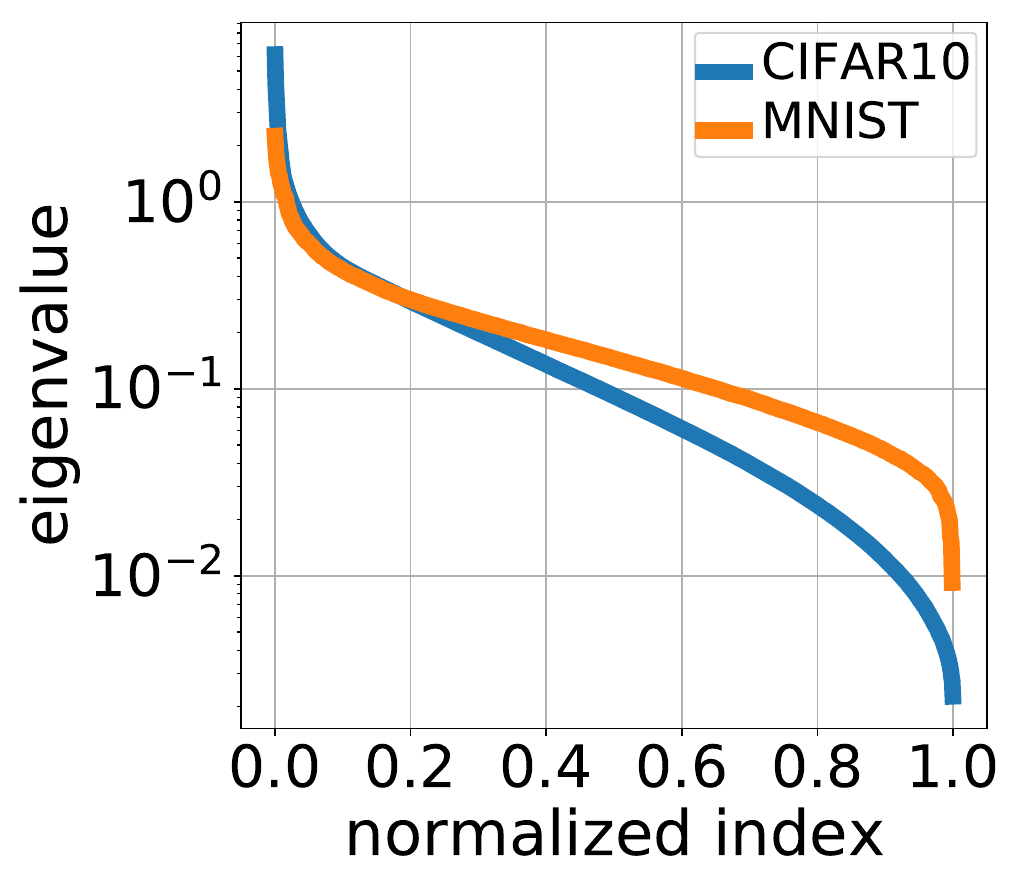}
    \caption{The d\textbf{}istribution of eigenvalues of $JJ^\top$ of two datasets on the LeNet model.}
    \label{fig:conv5-eigenval-distribution}
    \vspace{-0.2in}
\end{wrapfigure}

\textbf{Singular values of Jacobian of deep networks.}
In \cref{fig:conv5-eigenval-distribution}, we examine the eigenvalues of LeNet on the MNIST and CIFAR10 datasets.
On both datasets, there are always a few eigenvalues of $JJ^\top$ (around $10^{-2}$) that are much smaller than the largest one ($\ge 1$).
Observing such a large gap between $JJ^\top$ eigenvalues, it is natural to ask: how do the maximal/minimal eigenvalues affect the DGL attacks?

\begin{figure*}[t]
\centering
  \hfill
  \subfigure[CIFAR10]{
    \includegraphics[width=0.24\textwidth]{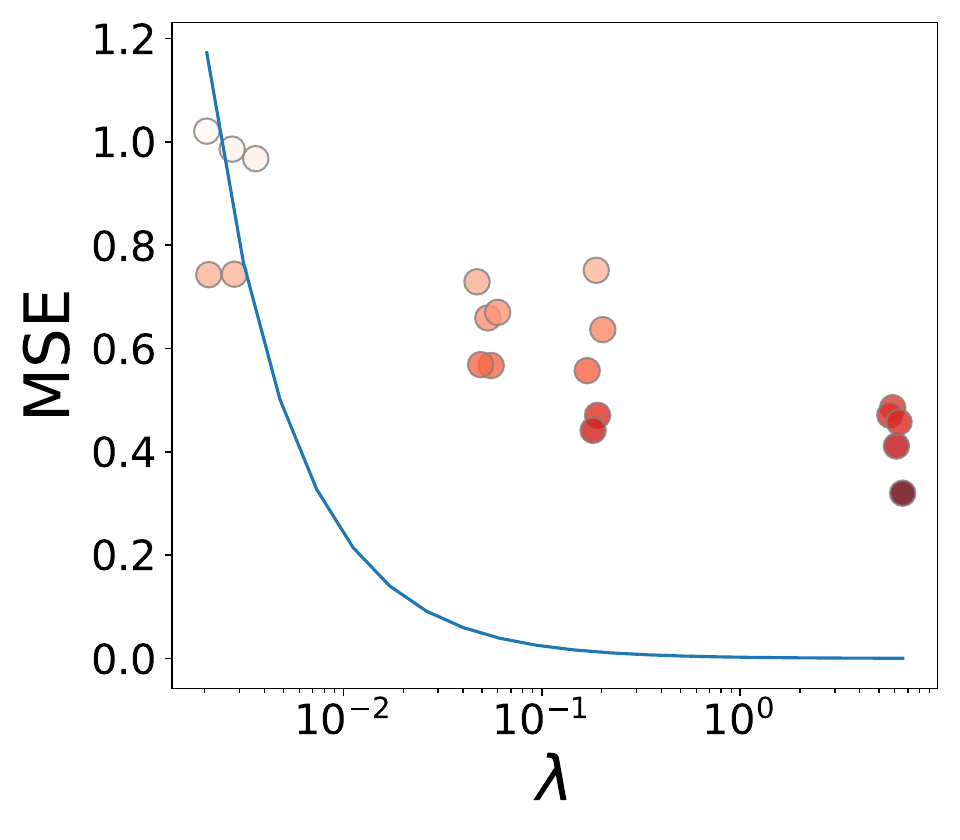}
    \label{fig:mse-eigenvec-random-conv5-mnist-l2}
    } \hfill
  \subfigure[MNIST]{
    \includegraphics[width=0.24\textwidth]{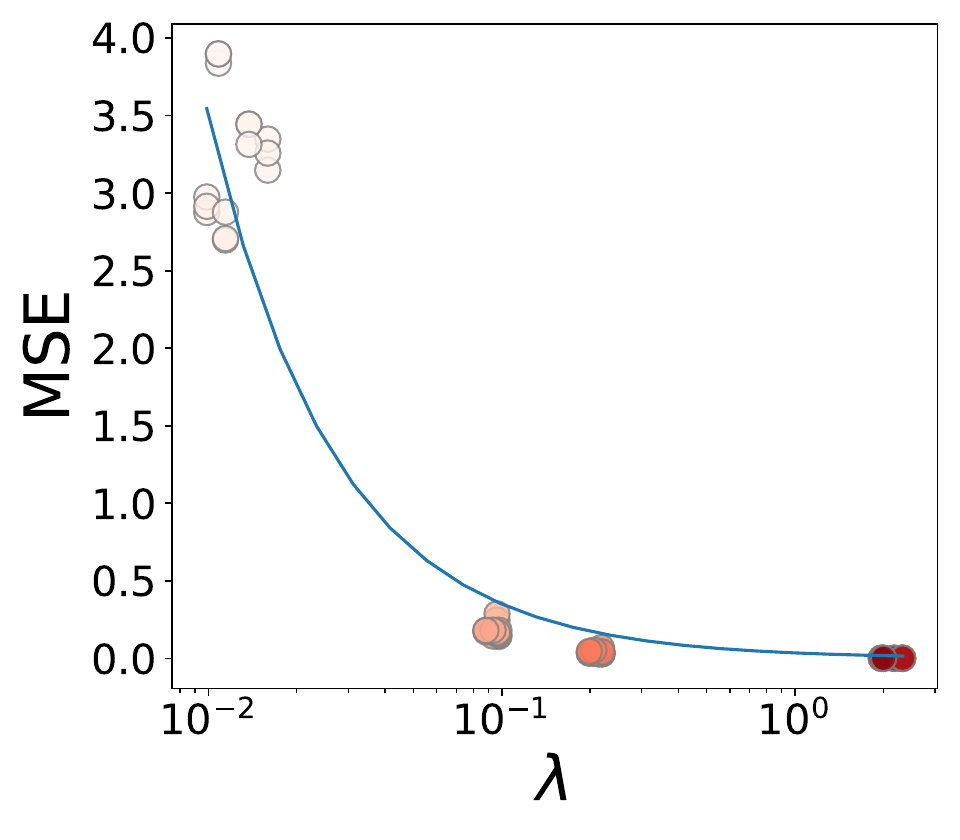}
    \label{fig:mse-eigenvec-random-conv5-mnist-l2}
    } \hfill
  \includegraphics[width=0.45\textwidth]{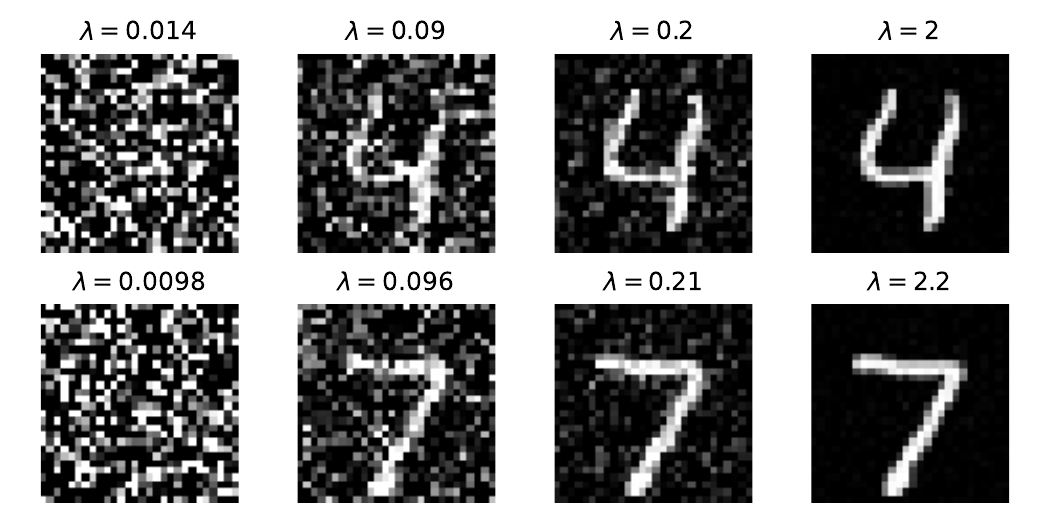}

    \caption{Same perturbation sizes but different protection effects by different eigenvectors. In (a) and (b), MSEs of DGL attacks are reversely proportional to eigenvalues on the LeNet model. Blue curves are scaled $1/\lambda$. Darker dots indicate smaller MSE (higher risks). Recovered MNIST images associated with different eigenvalues are present on the right.
    }
    \label{fig:mse-eigenvec-conv5-random}
    \vspace{-0.1in}
\end{figure*}

\textbf{Comparing eigenvectors in defending DGL.}
We consider a special case of perturbation by letting $\delta$ be an eigenvector of $JJ^\top$.
Then the I$^2$F will be $1/\lambda$ where $\lambda$ is the corresponding eigenvalue. %
We conjecture $1/\lambda$ could predict the MSE of DGL attacks.
To verify the conjecture, we choose 4 eigenvectors with distinct eigenvalues per sample.
The results for the LeNet model are present in \cref{fig:mse-eigenvec-conv5-random}.
We see that the MSE decreases by $\lambda$.
For the MNIST dataset, the MSE-$\lambda$ relation is very close to the predicted $1/\lambda$.
Though the curve is biased from the ground truth for CIFAR10, we still can use $1/\lambda$ to lower bound the recovery error.
The bias in CIFAR10 is probably due to the hardness of recovering the more complicated patterns than the digit images.
The recovered images in \cref{fig:mse-eigenvec-conv5-random} suggest that even with the same perturbation scale, there exist many bad directions for defense.
In the worst case, the image can be fully covered.
The observation is an alerting message to the community: \emph{protection using random noise may leak private information}.

\subsection{Privacy Protection Could Be Unfair}

Our analysis of the Gaussian perturbation in \cref{eq:gauss_bound} indicates that the privacy risk hinges on the Jacobian and therefore the sample.
We conjecture that the resulting privacy risk will vary significantly due to the sample dependence.
In \cref{fig:dp-mse-sim}, we conduct fine-grained experiments to study the variance of privacy protection, when Gaussian perturbation on gradients is used for protection.
We use a well-trained model which is thought to be less risky than initial models, for example, in~\citep{balunovic2021bayesian}.
Though the average of MSE implies a reasonable privacy degree as reported in previous literature, the large variance delivers the opposite message that some samples or classes are not that safe.
In the sense of samples, many samples are more vulnerable than the average.
For the classes, some classes are obviously more secure than others.
Thus, when the traditional metric focusing on average is used, it may deliver a fake sense of protection unfairly for specific classes or samples.

To understand which kind of samples could be more vulnerable to attack, we look into the Jacobian and ask when $JJ^\top$ will have larger eigenvalues and higher risks in \cref{fig:dp-mse-sim}.
Consider a poisoning attack, where an attacker aims to maximize the training loss by adding noise $\delta_x$ to the sample $x$.
Suppose the parameter is updated via $\theta'(x+\delta_x) = \theta - \nabla_\theta L(x+\delta_x, \theta)$. Let $J_\delta =\nabla_x\nabla_\theta L(x_1+\delta_x, \theta)$ and we can derive the maximal loss amplification on a test sample $x_1$ when perturbing $x$ as:
\begin{align*}
    \max_{\norm{\delta_x}\le 1} \Ebb_{x_1}[ L(x_1, \theta'(x+\delta_x)) ] &\approx \max_{\norm{\delta_x}\le 1} \Ebb_{x_1}[ L(x_1, \theta'(x)) ] + \Ebb_{x_1}[ \delta_x^\top \nabla_x L(x_1, \theta'(x)) ] \\
    &= \max_{\norm{\delta_x}\le 1} \delta_x^\top J_\delta \Ebb_{x_1}[ \nabla_{\theta'} L(x_1, \theta') ] \\
    &= \norm{J_\delta \Ebb_{x_1}[ \nabla_{\theta'} L(x_1, \theta') ]} \\
    &\le \norm{J_\delta} \norm{\Ebb_{x_1}[ \nabla_{\theta'} L(x_1, \theta') ]}.
\end{align*}
As shown by the inequality, the sample with large $\norm{J_{\delta}}$ may significantly bias the training after mild perturbation $\delta_x$.
If \cref{ass:lip_jacobian} holds, then $\norm{J}$ can be associated with $\norm{J_{\delta}}$ by $| \norm{J} - \norm{J_{\delta}}| \le \mu_J$.
Now we can connect the privacy vulnerability to the data poisoning.
With \cref{eq:gauss_bound}, samples that suffer high privacy risks due to large $\norm{J}$ could also be influential in training and can be easily used to poison training.

\begin{figure*}[t]
    \centering
        \includegraphics[width=0.24\textwidth]{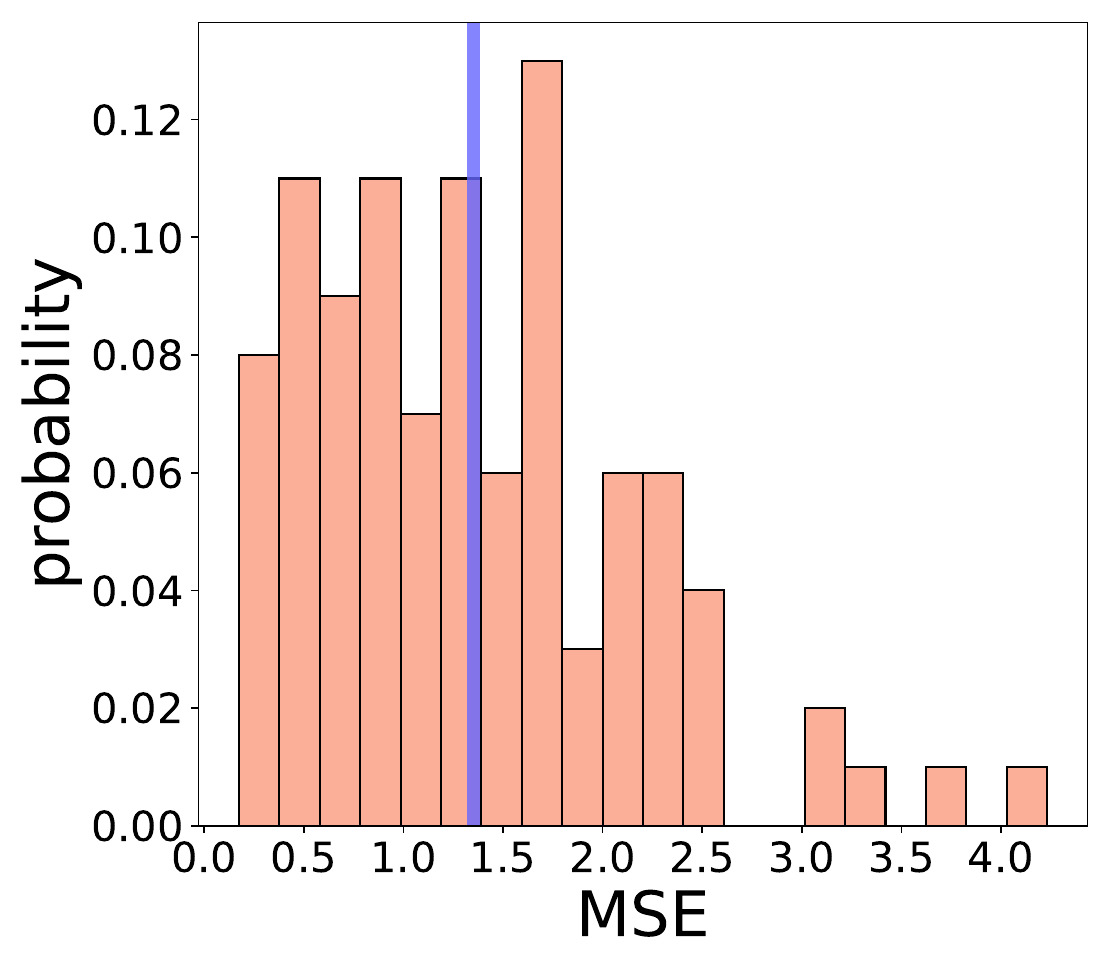}
        \includegraphics[width=0.24\textwidth]{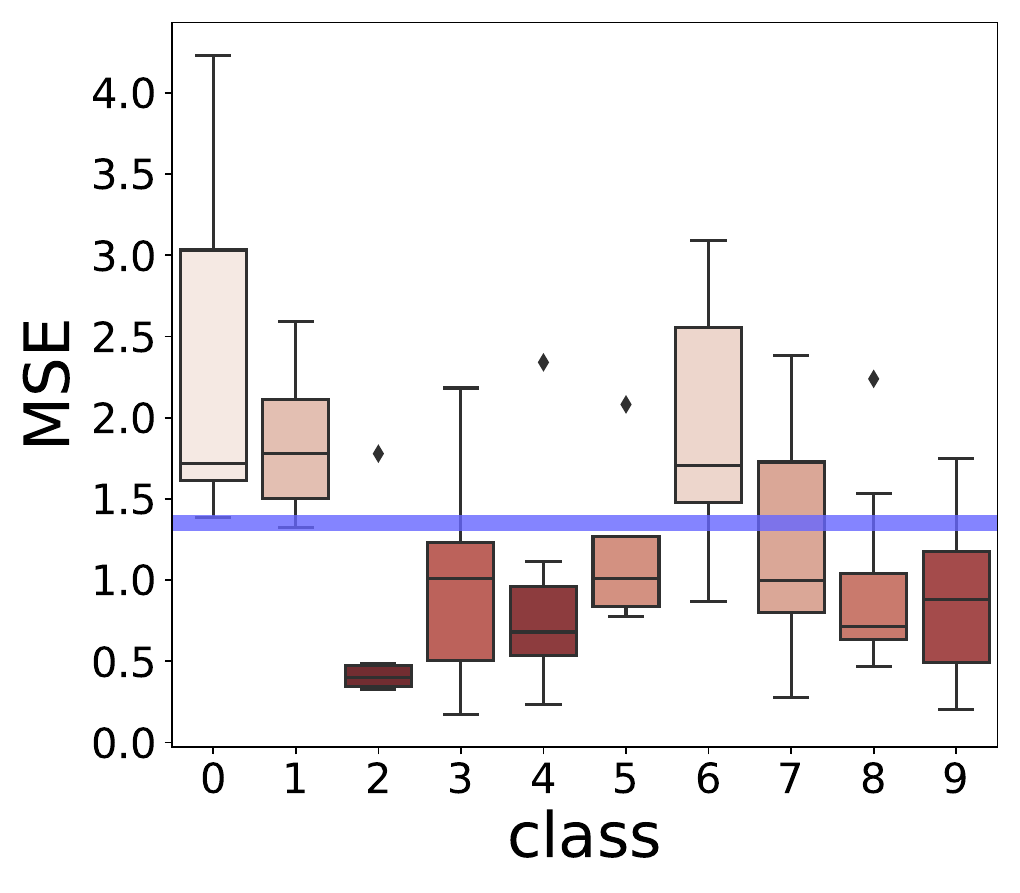}
        \includegraphics[width=.44\textwidth]{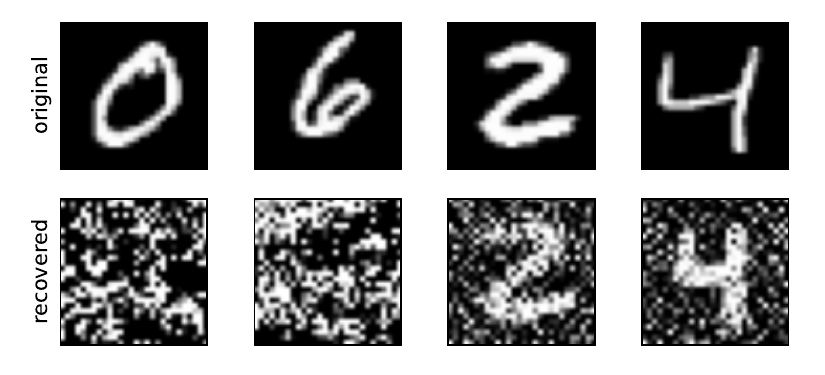}
            
    \caption{The sample-wise and class-wise statistics of the DGL MSE on the MNIST dataset, when gradients are perturbed with Gaussian noise of variance $10^{-3}$. The purple lines indicate the average values. Large variances are observed among samples and classes.
    The original (top) and recovered (bottom) images for the well- and poorly-protected classes are depicted on the right side.
    }
    \label{fig:dp-mse-sim}
    \vspace{-0.2in}
\end{figure*}

\subsection{Model Initialization Matters}

Since Jocabian leans on the parameters, we hypothesize that the way we initialize a model can also impact privacy.
Like previous work~\citep{sun2020provable,balunovic2021bayesian,zhu2019deep}, we are interested in the privacy risks when a model is randomly initialized without training.
Unlike previous work, we further ask which initialization strategy could favor privacy protection under the same Gaussian perturbation on gradients.

\begin{wrapfigure}{R}{0.53\textwidth}
  \centering
  \vspace{-0.3in}
  \subfigure[MNIST]{
    \includegraphics[width=0.24\textwidth]{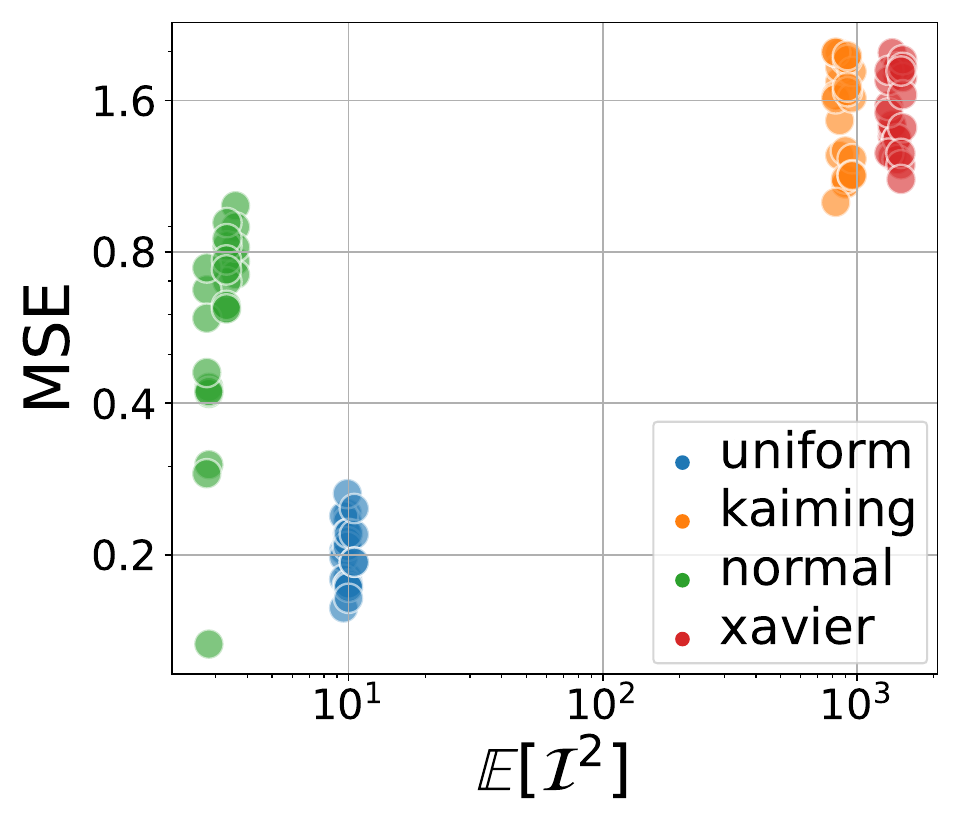}
    \label{fig:mse-initalization-conv5-mnist-l2}
    }
  \subfigure[CIFAR10]{
    \includegraphics[width=0.24\textwidth]{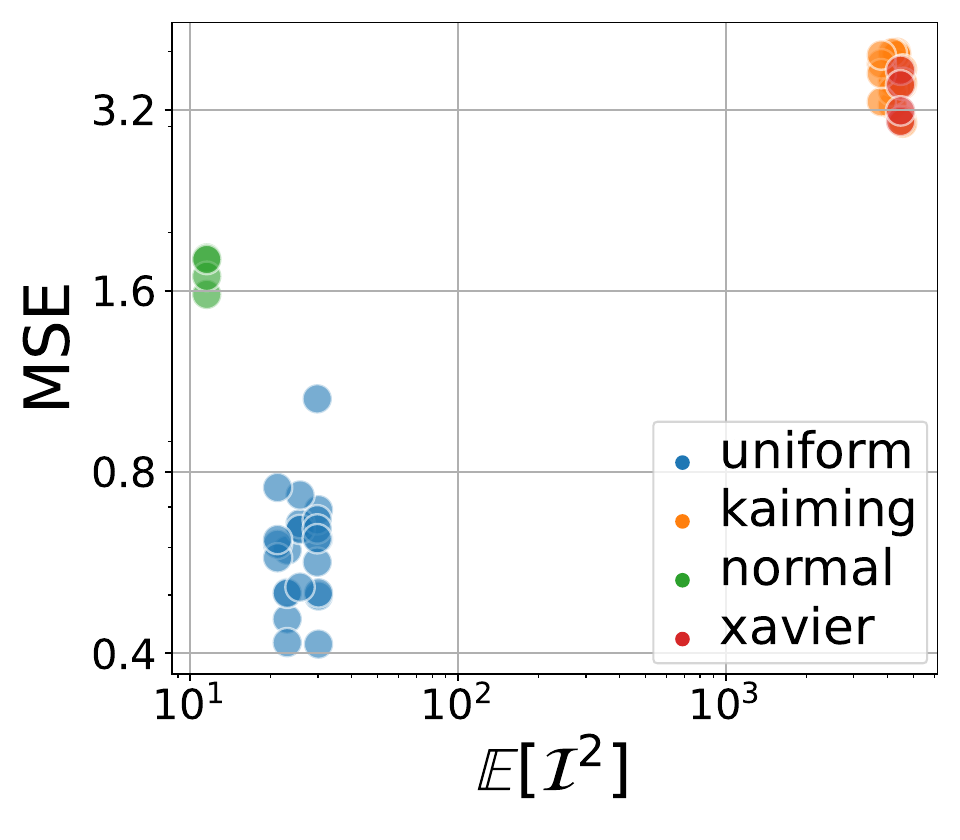}
    \label{fig:mse-initialization-conv5-cifar10-l2}
    }
    \vspace{-0.1in}
    \caption{Different initialization strategies could result in distinct MSEs.}
    \label{fig:mse-initialization-conv5}
    \vspace{-0.2in}
\end{wrapfigure}

\textbf{Case Study: One-layer network.}
A one-layer network with nonlinear activation $\sigma(\cdot)$ is a simple yet sufficient model for analyzing initialization.
Let $L(x, \theta) = \frac{1}{2} \norm{\sigma(\theta^\top x) - b}^2$, where $b$ is a constant.
Denote the product $\theta^\top x$ as $a$. 
The Jacobian is given by:
\begin{align*}
    J = \frac{\partial^2 L}{\partial x \partial \theta} &= \frac{\partial \sigma}{\partial a} \left\{ (\sigma(\theta^\top x) - b) I + x\theta^\top \right\}.
\end{align*}
(1) Apparently, a well-trained $\theta$ with $(\sigma(\theta^\top x) - b)=0$ will cause $J$ to be a rank-one matrix.
Note that the gradient will be zero in such a case and no longer provide any information about the private sample.
Consistent with our theoretical results, where the inverse Jacobian makes $\cI$ infinite, the gradient will be perfectly private at any perturbation.
(2) If $\sigma(\theta^\top x) - b$ is non-zero, the singular values of $J$ will depends on how close $\Ebb[\sigma(\theta^\top x)]$ is to $b$.
If $\Ebb[\sigma(\theta^\top x)]$ approaches $b$ at initialization, then $J$ will be approximately singular, and therefore the initialization enhances privacy preservation.

\textbf{Comparing initialization.}
To evaluate the effect of initialization on the inversion, we conduct experiments of the LeNet model on MNIST and CIFAR10 datasets with DGL and GS attacks.
We add Gaussian noise to the gradients, which implies an expected MSE should be proportional to $\Ebb[\cI^2] = \sum_i \lambda_i^{-1}$ as proved in \cref{eq:gauss_bound}.
Here, we evaluate how the initialization will change eigenvalues of $JJ^\top$ and thus change the MSE of DGL.
Four commonly used initialization techniques are considered here: \texttt{uniform} distribution in range $[-0.5,0.5]$, \texttt{normal} distribution with zero mean and variance $0.5$, \texttt{kaiming} ~\citep{he2015delving} and \texttt{xavier}~\citep{glorot2010understanding} initialization.
For each initialization, the same 10 samples are used to conduct the attack three times.
The results are shown in \cref{fig:mse-initialization-conv5}.
We observe a significant gap between initialization mechanisms.
Using \texttt{uniform} initialization cast serious risks of leaking privacy under the same Gaussian defense.
Though not as significant as \texttt{uniform} initialization, the normal initialization is riskier than rest two techniques.
\texttt{kaiming} and \texttt{xavier} methods can favor convergence in deep learning and here we show that they are also preferred for privacy.
A potential reason is that the two methods can better normalize the activations to promote the Jacobian singularity.

In parallel with our work, \cite{wang2023more} also finds that the initialization is impactful on inversion attacks.
Despite their analysis being based on a different metric, the layer-wise variance of the model weights, their work and ours have a consistent conclusion that models initialized with a \texttt{uniform} distribution face higher privacy risks than that with a \texttt{kaiming} distribution

\section{Conclusion and Discussion}

In this paper, we introduce a novel way to use the influence functions for analyzing Deep Gradient Leakage (DGL). We propose a new and efficient approximation of DGL called the Inversion Influence Function (I$^2$F). By utilizing this tool, we gain valuable insights into the occurrence and mechanisms of DGL, which can greatly help the future development of effective defense methods.

\textbf{Limitations.}
Our work may be limited by some assumptions and approximations.
First, we worked on the worst-case scenario where a strong attack conducts perfect inversion attacks. 
In practice, such an assumption can be strong, especially for highly complicated deep networks. 
The gap between existing attacks and perfect attacks sometimes leads to a notable bias.
However, we note that recent years witnessed many techniques that significantly improved attacking capability ~\citep{geiping2020inverting,jeon2021gradient,zhao2020idlg}, and our work is valuable to bound the risks when the attacks get even stronger over time.
Second, similar to the traditional influence function, I$^2$F can be less accurate and suffers from large variance in extremely non-convex loss functions.
Advanced linearization techniques \citep{bae2022if} can be helpful in improving the accuracy of influence.
Then extending our analysis to bigger foundation models may bring intriguing insights into the scaling law of privacy.

\textbf{Future Directions.}
As the first attempt at influence function in DGL, our method can serve multiple purposes to benefit future research.
For example, our metric can be used to efficiently examine the privacy breach before sending gradients to third parties.
Since I$^2$F provides an efficient evaluation of the MSE, it may be directly optimized in conjunction with the loss of main tasks.
Such joint optimization could bring in the explicit trade-off between utility and privacy in time.
In comparison, traditional arts like differential privacy are complicated by tuning the privacy parameter for the trade-off.
Furthermore, we envision that many techniques can be adopted to further enhance the analysis.
For example, unrolling-based analysis leverages the iterative derivatives in the DGL to uncover the effectiveness of gradient perturbations~\citep{pruthi2020estimating}.

\textbf{Broader Impacts.}
Data privacy has been a long-term challenge in machine learning.
Our work provides a fundamental tool to diagnose privacy breaches in the gradients of deep networks.
Understanding when and how privacy leakage happens can essentially help the development of defenses.
For example, it can be used for designing stronger attacks, which leads to improved defense mechanisms and ultimately benefit the privacy and security of machine learning.

\section{Acknowledgments}
This research was supported by the National Science Foundation (IIS-2212174, IIS-1749940), National Institute of Aging
(IRF1AG072449), and the Office of Naval Research (N00014-
20-1-2382).

\bibliographystyle{apalike}
\IfFileExists{/Users/jyhong/Code/template/latex-utils/zotero_bib.bib}{
	\bibliography{/Users/jyhong/Code/template/latex-utils/zotero_bib}  %
}
{
	\bibliography{auto_gen}  %

\begin{thebibliography}{}

\bibitem[Abadi et~al., 2016]{abadi2016deep}
Abadi, M., Chu, A., Goodfellow, I., McMahan, H.~B., Mironov, I., Talwar, K.,
  and Zhang, L. (2016).
\newblock Deep learning with differential privacy.
\newblock In {\em Proceedings of the 2016 ACM SIGSAC conference on computer and
  communications security}, pages 308--318.

\bibitem[Antunes et~al., 2022]{antunes2022federated}
Antunes, R.~S., Andr{\'e}~da Costa, C., K{\"u}derle, A., Yari, I.~A., and
  Eskofier, B. (2022).
\newblock Federated learning for healthcare: Systematic review and architecture
  proposal.
\newblock {\em ACM Transactions on Intelligent Systems and Technology (TIST)},
  13(4):1--23.

\bibitem[Bae et~al., 2022]{bae2022if}
Bae, J., Ng, N., Lo, A., Ghassemi, M., and Grosse, R.~B. (2022).
\newblock If influence functions are the answer, then what is the question?
\newblock {\em Advances in Neural Information Processing Systems},
  35:17953--17967.

\bibitem[Balunovi{\'c} et~al., 2022]{balunovic2021bayesian}
Balunovi{\'c}, M., Dimitrov, D.~I., Staab, R., and Vechev, M. (2022).
\newblock Bayesian framework for gradient leakage.
\newblock {\em ICLR}.

\bibitem[Chen and Campbell, 2021]{chen2021understanding}
Chen, C. and Campbell, N.~D. (2021).
\newblock Understanding training-data leakage from gradients in neural networks
  for image classification.
\newblock {\em arXiv preprint arXiv:2111.10178}.

\bibitem[Dayan et~al., 2021]{dayan2021federated}
Dayan, I., Roth, H.~R., Zhong, A., Harouni, A., Gentili, A., Abidin, A.~Z.,
  Liu, A., Costa, A.~B., Wood, B.~J., Tsai, C.-S., et~al. (2021).
\newblock Federated learning for predicting clinical outcomes in patients with
  covid-19.
\newblock {\em Nature medicine}, 27(10):1735--1743.

\bibitem[Deng et~al., 2021]{deng2021tag}
Deng, J., Wang, Y., Li, J., Shang, C., Liu, H., Rajasekaran, S., and Ding, C.
  (2021).
\newblock Tag: Gradient attack on transformer-based language models.
\newblock {\em arXiv preprint arXiv:2103.06819}.

\bibitem[Devlin et~al., 2018]{devlin2018bert}
Devlin, J., Chang, M.-W., Lee, K., and Toutanova, K. (2018).
\newblock Bert: Pre-training of deep bidirectional transformers for language
  understanding.
\newblock {\em arXiv preprint arXiv:1810.04805}.

\bibitem[Dwork, 2006]{dwork2006differential}
Dwork, C. (2006).
\newblock Differential privacy.
\newblock In {\em Automata, Languages and Programming: 33rd International
  Colloquium, ICALP 2006, Venice, Italy, July 10-14, 2006, Proceedings, Part II
  33}, pages 1--12. Springer.

\bibitem[Fan et~al., 2020]{fan2020rethinking}
Fan, L., Ng, K.~W., Ju, C., Zhang, T., Liu, C., Chan, C.~S., and Yang, Q.
  (2020).
\newblock Rethinking privacy preserving deep learning: How to evaluate and
  thwart privacy attacks.
\newblock {\em Federated Learning: Privacy and Incentive}, pages 32--50.

\bibitem[Fowl et~al., 2022]{fowl2022robbing}
Fowl, L.~H., Geiping, J., Czaja, W., Goldblum, M., and Goldstein, T. (2022).
\newblock Robbing the fed: Directly obtaining private data in federated
  learning with modified models.
\newblock In {\em International Conference on Learning Representations}.

\bibitem[Gao et~al., 2021]{gao2021privacy}
Gao, W., Guo, S., Zhang, T., Qiu, H., Wen, Y., and Liu, Y. (2021).
\newblock Privacy-preserving collaborative learning with automatic
  transformation search.
\newblock In {\em Proceedings of the IEEE/CVF Conference on Computer Vision and
  Pattern Recognition}, pages 114--123.

\bibitem[Geiping et~al., 2020]{geiping2020inverting}
Geiping, J., Bauermeister, H., Dr{\"o}ge, H., and Moeller, M. (2020).
\newblock Inverting gradients-how easy is it to break privacy in federated
  learning?
\newblock {\em Advances in Neural Information Processing Systems},
  33:16937--16947.

\bibitem[Gemulla et~al., 2011]{gemulla2011large}
Gemulla, R., Nijkamp, E., Haas, P.~J., and Sismanis, Y. (2011).
\newblock Large-scale matrix factorization with distributed stochastic gradient
  descent.
\newblock In {\em Proceedings of the 17th ACM SIGKDD international conference
  on Knowledge discovery and data mining}, pages 69--77.

\bibitem[Glorot and Bengio, 2010]{glorot2010understanding}
Glorot, X. and Bengio, Y. (2010).
\newblock Understanding the difficulty of training deep feedforward neural
  networks.
\newblock In {\em Proceedings of the thirteenth international conference on
  artificial intelligence and statistics}, pages 249--256. JMLR Workshop and
  Conference Proceedings.

\bibitem[Grazzi et~al., 2020]{grazzi2020iteration}
Grazzi, R., Franceschi, L., Pontil, M., and Salzo, S. (2020).
\newblock On the iteration complexity of hypergradient computation.
\newblock In {\em International Conference on Machine Learning}, pages
  3748--3758. PMLR.

\bibitem[Guo et~al., 2022]{guo2022bounding}
Guo, C., Karrer, B., Chaudhuri, K., and van~der Maaten, L. (2022).
\newblock Bounding training data reconstruction in private (deep) learning.
\newblock In {\em International Conference on Machine Learning}, pages
  8056--8071. PMLR.

\bibitem[Haim et~al., 2022]{haim2022reconstructing}
Haim, N., Vardi, G., Yehudai, G., Shamir, O., et~al. (2022).
\newblock Reconstructing training data from trained neural networks.
\newblock In {\em Advances in Neural Information Processing Systems}.

\bibitem[Hampel, 1974]{hampel1974influence}
Hampel, F.~R. (1974).
\newblock The influence curve and its role in robust estimation.
\newblock {\em Journal of the american statistical association},
  69(346):383--393.

\bibitem[Hannun et~al., 2021]{hannun2021measuring}
Hannun, A., Guo, C., and van~der Maaten, L. (2021).
\newblock Measuring data leakage in machine-learning models with fisher
  information.
\newblock In {\em Uncertainty in Artificial Intelligence}, pages 760--770.
  PMLR.

\bibitem[Hatamizadeh et~al., 2021]{hatamizadeh2021towards}
Hatamizadeh, A., Yin, H., Molchanov, P., Myronenko, A., Li, W., Dogra, P.,
  Feng, A., Flores, M., Kautz, J., Xu, D., et~al. (2021).
\newblock Towards understanding the risks of gradient inversion in federated
  learning.

\bibitem[Hatamizadeh et~al., 2022]{hatamizadeh2022gradvit}
Hatamizadeh, A., Yin, H., Roth, H.~R., Li, W., Kautz, J., Xu, D., and
  Molchanov, P. (2022).
\newblock Gradvit: Gradient inversion of vision transformers.
\newblock In {\em Proceedings of the IEEE/CVF Conference on Computer Vision and
  Pattern Recognition}, pages 10021--10030.

\bibitem[Hayes et~al., 2023]{hayes2023bounding}
Hayes, J., Mahloujifar, S., and Balle, B. (2023).
\newblock Bounding training data reconstruction in dp-sgd.
\newblock {\em arXiv preprint arXiv:2302.07225}.

\bibitem[He et~al., 2015a]{he2015deep}
He, K., Zhang, X., Ren, S., and Sun, J. (2015a).
\newblock Deep residual learning for image recognition. arxiv 2015.
\newblock {\em arXiv preprint arXiv:1512.03385}, 14.

\bibitem[He et~al., 2015b]{he2015delving}
He, K., Zhang, X., Ren, S., and Sun, J. (2015b).
\newblock Delving deep into rectifiers: Surpassing human-level performance on
  imagenet classification.
\newblock In {\em Proceedings of the IEEE international conference on computer
  vision}, pages 1026--1034.

\bibitem[Huang et~al., 2021]{huang2021evaluating}
Huang, Y., Gupta, S., Song, Z., Li, K., and Arora, S. (2021).
\newblock Evaluating gradient inversion attacks and defenses in federated
  learning.
\newblock {\em Advances in Neural Information Processing Systems},
  34:7232--7241.

\bibitem[Huang et~al., 2020]{huang2020instahide}
Huang, Y., Song, Z., Li, K., and Arora, S. (2020).
\newblock Instahide: Instance-hiding schemes for private distributed learning.
\newblock In {\em International conference on machine learning}, pages
  4507--4518. PMLR.

\bibitem[Jeon et~al., 2021]{jeon2021gradient}
Jeon, J., Lee, K., Oh, S., Ok, J., et~al. (2021).
\newblock Gradient inversion with generative image prior.
\newblock In {\em Advances in neural information processing systems}.

\bibitem[Jin et~al., 2021]{jin2021cafe}
Jin, X., Chen, P.-Y., Hsu, C.-Y., Yu, C.-M., and Chen, T. (2021).
\newblock Cafe: Catastrophic data leakage in vertical federated learning.
\newblock {\em Advances in Neural Information Processing Systems},
  34:994--1006.

\bibitem[Kaissis et~al., 2021]{kaissis2021end}
Kaissis, G., Ziller, A., Passerat-Palmbach, J., Ryffel, T., Usynin, D., Trask,
  A., Lima~Jr, I., Mancuso, J., Jungmann, F., Steinborn, M.-M., et~al. (2021).
\newblock End-to-end privacy preserving deep learning on multi-institutional
  medical imaging.
\newblock {\em Nature Machine Intelligence}, 3(6):473--484.

\bibitem[Koh and Liang, 2017]{koh2017understanding}
Koh, P.~W. and Liang, P. (2017).
\newblock Understanding black-box predictions via influence functions.
\newblock In {\em International conference on machine learning}, pages
  1885--1894. PMLR.

\bibitem[Krizhevsky et~al., 2009]{krizhevsky2009learning}
Krizhevsky, A., Hinton, G., et~al. (2009).
\newblock Learning multiple layers of features from tiny images.

\bibitem[LeCun, 1998]{lecun1998mnist}
LeCun, Y. (1998).
\newblock The mnist database of handwritten digits.
\newblock {\em http://yann. lecun. com/exdb/mnist/}.

\bibitem[LeCun et~al., 1998]{lecun1998lenet}
LeCun, Y., Bottou, L., Bengio, Y., and Haffner, P. (1998).
\newblock Gradient-based learning applied to document recognition.
\newblock {\em Proceedings of the IEEE}, 86(11):2278--2324.

\bibitem[Li et~al., 2023]{li2023review}
Li, H., Li, C., Wang, J., Yang, A., Ma, Z., Zhang, Z., and Hua, D. (2023).
\newblock Review on security of federated learning and its application in
  healthcare.
\newblock {\em Future Generation Computer Systems}, 144:271--290.

\bibitem[Lin, 2004]{lin2004rouge}
Lin, C.-Y. (2004).
\newblock Rouge: A package for automatic evaluation of summaries.
\newblock In {\em Text summarization branches out}, pages 74--81.

\bibitem[Long et~al., 2020]{long2020federated}
Long, G., Tan, Y., Jiang, J., and Zhang, C. (2020).
\newblock Federated learning for open banking.
\newblock In {\em Federated Learning: Privacy and Incentive}, pages 240--254.
  Springer.

\bibitem[McMahan et~al., 2017]{mcmahan2017communication}
McMahan, B., Moore, E., Ramage, D., Hampson, S., and y~Arcas, B.~A. (2017).
\newblock Communication-efficient learning of deep networks from decentralized
  data.
\newblock In {\em Artificial intelligence and statistics}, pages 1273--1282.
  PMLR.

\bibitem[Merity et~al., 2016]{merity2016pointer}
Merity, S., Xiong, C., Bradbury, J., and Socher, R. (2016).
\newblock Pointer sentinel mixture models.
\newblock {\em arXiv preprint arXiv:1609.07843}.

\bibitem[Pan et~al., 2022]{pan2022exploring}
Pan, X., Zhang, M., Yan, Y., Zhu, J., and Yang, Z. (2022).
\newblock Exploring the security boundary of data reconstruction via neuron
  exclusivity analysis.
\newblock In {\em 31st USENIX Security Symposium (USENIX Security 22)}, pages
  3989--4006.

\bibitem[Pruthi et~al., 2020]{pruthi2020estimating}
Pruthi, G., Liu, F., Kale, S., and Sundararajan, M. (2020).
\newblock Estimating training data influence by tracing gradient descent.
\newblock {\em Advances in Neural Information Processing Systems},
  33:19920--19930.

\bibitem[Radford et~al., 2019]{radford2019language}
Radford, A., Wu, J., Child, R., Luan, D., Amodei, D., and Sutskever, I. (2019).
\newblock Language models are unsupervised multitask learners.

\bibitem[Roth et~al., 2020]{roth2020federated}
Roth, H.~R., Chang, K., Singh, P., Neumark, N., Li, W., Gupta, V., Gupta, S.,
  Qu, L., Ihsani, A., Bizzo, B.~C., et~al. (2020).
\newblock Federated learning for breast density classification: A real-world
  implementation.
\newblock In {\em Domain Adaptation and Representation Transfer, and
  Distributed and Collaborative Learning: Second MICCAI Workshop, DART 2020,
  and First MICCAI Workshop, DCL 2020, Held in Conjunction with MICCAI 2020,
  Lima, Peru, October 4--8, 2020, Proceedings 2}, pages 181--191. Springer.

\bibitem[Scheliga et~al., 2022]{scheliga2022precode}
Scheliga, D., M{\"a}der, P., and Seeland, M. (2022).
\newblock Precode-a generic model extension to prevent deep gradient leakage.
\newblock In {\em Proceedings of the IEEE/CVF Winter Conference on Applications
  of Computer Vision}, pages 1849--1858.

\bibitem[Sun et~al., 2020]{sun2020provable}
Sun, J., Li, A., Wang, B., Yang, H., Li, H., and Chen, Y. (2020).
\newblock Provable defense against privacy leakage in federated learning from
  representation perspective.
\newblock {\em arXiv preprint arXiv:2012.06043}.

\bibitem[Wang et~al., 2023a]{wang2023more}
Wang, F., Hugh, E., and Li, B. (2023a).
\newblock More than enough is too much: Adaptive defenses against gradient
  leakage in production federated learning.
\newblock In {\em INFOCOM}.

\bibitem[Wang et~al., 2023b]{wang2023reconstructing}
Wang, Z., Lee, J., and Lei, Q. (2023b).
\newblock Reconstructing training data from model gradient, provably.
\newblock In {\em International Conference on Artificial Intelligence and
  Statistics}, pages 6595--6612. PMLR.

\bibitem[Wei et~al., 2021]{wei2021gradient}
Wei, W., Liu, L., Wut, Y., Su, G., and Iyengar, A. (2021).
\newblock Gradient-leakage resilient federated learning.
\newblock In {\em 2021 IEEE 41st International Conference on Distributed
  Computing Systems (ICDCS)}, pages 797--807. IEEE.

\bibitem[Wen et~al., 2022]{wen2022fishing}
Wen, Y., Geiping, J., Fowl, L., Goldblum, M., and Goldstein, T. (2022).
\newblock Fishing for user data in large-batch federated learning via gradient
  magnification.
\newblock {\em arXiv preprint arXiv:2202.00580}.

\bibitem[Wu et~al., 2016]{wu2016googles}
Wu, Y., Schuster, M., Chen, Z., Le, Q.~V., Norouzi, M., Macherey, W., Krikun,
  M., Cao, Y., Gao, Q., Macherey, K., Klingner, J., Shah, A., Johnson, M., Liu,
  X., Łukasz Kaiser, Gouws, S., Kato, Y., Kudo, T., Kazawa, H., Stevens, K.,
  Kurian, G., Patil, N., Wang, W., Young, C., Smith, J., Riesa, J., Rudnick,
  A., Vinyals, O., Corrado, G., Hughes, M., and Dean, J. (2016).
\newblock Google's neural machine translation system: Bridging the gap between
  human and machine translation.

\bibitem[Xu et~al., 2021]{xu2021federated}
Xu, J., Glicksberg, B.~S., Su, C., Walker, P., Bian, J., and Wang, F. (2021).
\newblock Federated learning for healthcare informatics.
\newblock {\em Journal of Healthcare Informatics Research}, 5:1--19.

\bibitem[Zhang et~al., 2017]{zhang2017mixup}
Zhang, H., Cisse, M., Dauphin, Y.~N., and Lopez-Paz, D. (2017).
\newblock mixup: Beyond empirical risk minimization.
\newblock {\em arXiv preprint arXiv:1710.09412}.

\bibitem[Zhang et~al., 2023]{zhang2023privacy}
Zhang, H., Hong, J., Dong, F., Drew, S., Xue, L., and Zhou, J. (2023).
\newblock A privacy-preserving hybrid federated learning framework for
  financial crime detection.
\newblock {\em arXiv preprint arXiv:2302.03654}.

\bibitem[Zhang et~al., 2018]{zhang2018unreasonable}
Zhang, R., Isola, P., Efros, A.~A., Shechtman, E., and Wang, O. (2018).
\newblock The unreasonable effectiveness of deep features as a perceptual
  metric.
\newblock In {\em Proceedings of the IEEE conference on computer vision and
  pattern recognition}, pages 586--595.

\bibitem[Zhao et~al., 2020]{zhao2020idlg}
Zhao, B., Mopuri, K.~R., and Bilen, H. (2020).
\newblock idlg: Improved deep leakage from gradients.
\newblock {\em arXiv preprint arXiv:2001.02610}.

\bibitem[Zhu and Blaschko, 2021]{zhu2021rgap}
Zhu, J. and Blaschko, M.~B. (2021).
\newblock R-gap: Recursive gradient attack on privacy.
\newblock In {\em International Conference on Learning Representations}.

\bibitem[Zhu et~al., 2019]{zhu2019deep}
Zhu, L., Liu, Z., and Han, S. (2019).
\newblock Deep leakage from gradients.
\newblock {\em Advances in neural information processing systems}, 32.

\end{thebibliography}
}

\clearpage

\appendix

\section{Method}
\label{sec:app:method}

\subsection{Other Efficient Evaluation Techniques}

The techniques for efficiently evaluating implicit gradients can be referred to~\citep{koh2017understanding,grazzi2020iteration}.
As computing the inverse second-order derivatives is the most computation-intensive operation, we will focus on it.
Here, we briefly summarize two supplementary techniques introduced in \cref{sec:pert_grad}.

\textbf{Conjucate gradient.}
In \cref{sec:pert_grad}, we use the trick of least square to compute the $(JJ^\top)^{-1}J\delta$.
When $JJ^\top \succ 0$, we can solve the least square problem by the conjugate gradient (CG) method, which only needs $\cO(d)$ time to converge.
However, the algorithm will be unstable when the matrix $JJ^\top$ is ill-conditioned.
As observed in \cref{fig:conv5-eigenval-distribution}, the $JJ^\top$ is likely to be ill-conditioned for deep networks.
But here we provide this alternative for linear models such that $J$ can be evaluated faster.

\textbf{Neumann series.}
We can leverage the Neumann series to compute the matrix inverse.
By the Neumann series, we have $(JJ^\top)^{-1} J \delta = \lim_{t\rightarrow \infty}\sum_{i=0}^t (I - JJ^\top)^i J \delta$.
Let $s_t \triangleq \sum_{i=0}^t (I - JJ^\top)^i J \delta$ and $s_0 \triangleq J\delta$.
Then the computation can be done by iteration $s_{t+1} = (I - JJ^\top) s_t + J \delta$ which only includes Jacobian-vector products.

\section{Proofs}

\subsection{Proof of the Approximation by Implicit Gradients}
\label{sec:proof:IG}

Here, we provide the proof for $\frac{\partial G_r(g_0)}{\partial g_0} = (JJ^\top)^{-1} J$.
Recall \cref{eq:inversion_attack} as
\begin{align}
    x^* = G_r(g) = \argmin_x L_I (x; g) \triangleq  \norm{\nabla_\theta L(x, \theta) - g}^2.
\end{align}
The stationary condition of the minimization gives
\begin{align*}
    \frac{\partial L_I(x^*; g)}{\partial x^*} &= 0.
\end{align*}
Given a small perturbation $\Delta_g \rightarrow 0$ on the gradient, we can estimate corresponding perturbation $\Delta_{x^*} \rightarrow 0$ as a function of $\Delta_g$.
Thus, we can approximate $\frac{\partial x^*}{\partial g}$ by $\frac{\partial \Delta_{x^*}}{\partial \Delta_g}$.
Use Taylor expansion to show approximately
\begin{align}
    \frac{\partial L_I(x^*; g)}{\partial x^*} + \frac{\partial^2 L_I(x^*; g)}{\partial g \partial x^*} \Delta_g + \frac{\partial^2 L_I(x^*; g)}{\partial {x^*}^2} \Delta_{x^*} &\approx 0 \notag \\
    \Delta_{x^*} &\approx - \left(\frac{\partial^2 L_I(x^*; g)}{\partial {x^*}^2} \right)^{-1} \frac{\partial L_I(x^*; g)}{\partial x^*} \Delta_{g} \notag \\
    \frac{\partial x^*_g}{\partial g} &\approx - \left(\frac{\partial^2 L_I(x^*; g)}{\partial {x^*}^2} \right)^{-1} \frac{\partial^2 L_I(x^*; g)}{\partial g \partial x^*} \label{eq:implicit_grad_approx}
\end{align}
where we drop higher-order perturbations.
The above derivations can be rigorously proved using the Implicit Function Theorem.
Since $\frac{\partial L_I(x^*; g)}{\partial x^*} = 2 (\nabla_\theta L(x^*,\theta) - g) \nabla_x \nabla_\theta L(x^*,\theta)$, we can derive
\begin{align*}
    \frac{\partial^2 L_I(x^*; g)}{\partial g \partial x^*} &= - 2 \nabla_x \nabla_\theta L(x^*,\theta)
\end{align*}
and
\begin{align*}
    \frac{\partial^2 L_I(x^*; g)}{\partial {x^*}^2} &= 2 (\nabla_\theta L(x^*,\theta) - g) \nabla_x^2 \nabla_\theta L(x^*,\theta) + 2 \nabla_x \nabla_\theta L(x^*,\theta) (\nabla_x \nabla_\theta L(x^*,\theta))^\top.
\end{align*}
As $x_0 = x^* = G_r(g_0)$ and $g_0=\nabla_\theta L(x^*,\theta)$, we can substitute them to obtain
\begin{align}
    \frac{\partial^2 L_I(x_0; g)}{\partial g \partial x_0} &= - 2 \nabla_x \nabla_\theta L(x_0,\theta) := -2 J(x^*(g_0),\theta), \label{eq:dgdx} \\
    \frac{\partial^2 L_I(x_0; g)}{\partial {x_0}^2} &= 2 (g_0 - g) \nabla_x^2 \nabla_\theta L(x_0,\theta) + 2 J J^\top. \label{eq:dxdx}
\end{align}
Let $g=g_0$. Combine \cref{eq:implicit_grad_approx,eq:dgdx,eq:dxdx} to get 
\begin{align*}
    \frac{\partial G_r(g_0)}{\partial g_0} &= (JJ^\top)^{-1} J.
\end{align*}

\subsection{Proof of \cref{thm:cert_defense}}
\label{sec:app:proof:bnd}

Before we prove our main theorem, we prove several essential lemmas as below.

\begin{lemma}\label{ass:grad_smooth}
$\norm{ \nabla_x \norm{\nabla_\theta L(x, \theta)}^2 - \nabla_x \norm{\nabla_\theta L(x', \theta)}^2 } \le (\mu_L \norm{\nabla_x \nabla_\theta L(x, \theta)} + \mu_J \norm{\nabla_\theta L(x', \theta)}) \norm{x - x'}$
\end{lemma}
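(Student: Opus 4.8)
The plan is to reduce the claim to a standard product-Lipschitz estimate built from the chain rule and the two Lipschitz hypotheses. First I would introduce shorthand $g(x) \triangleq \nabla_\theta L(x,\theta) \in \RR^{d_\theta}$ and $J(x) \triangleq \nabla_x \nabla_\theta L(x,\theta) \in \RR^{d_x \times d_\theta}$, and compute the gradient appearing on the left-hand side explicitly. Differentiating $\norm{g(x)}^2 = \sum_j g_j(x)^2$ with respect to $x_i$ gives $2\sum_j g_j(x)\,(\partial g_j/\partial x_i) = 2\,(J(x)g(x))_i$, so that
\begin{align*}
    \nabla_x \norm{\nabla_\theta L(x,\theta)}^2 = 2\,J(x)\,g(x).
\end{align*}
Hence the quantity to bound is, up to the constant factor, $\norm{J(x)g(x) - J(x')g(x')}$.

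Next I would apply the telescoping (add-and-subtract) trick, inserting the cross term $J(x)g(x')$:
\begin{align*}
    J(x)g(x) - J(x')g(x') = J(x)\bigl(g(x)-g(x')\bigr) + \bigl(J(x)-J(x')\bigr)g(x').
\end{align*}
The triangle inequality together with submultiplicativity of the induced $\ell_2$-norm, $\norm{Av}\le \norm{A}\norm{v}$ (which is exactly the matrix-norm convention fixed after \cref{eq:inversion_attack}), then yields
\begin{align*}
    \norm{J(x)g(x)-J(x')g(x')} \le \norm{J(x)}\,\norm{g(x)-g(x')} + \norm{J(x)-J(x')}\,\norm{g(x')}.
\end{align*}

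Finally I would invoke the two hypotheses: \cref{ass:smooth} gives $\norm{g(x)-g(x')} \le \mu_L \norm{x-x'}$, and \cref{ass:lip_jacobian} gives $\norm{J(x)-J(x')} \le \mu_J \norm{x-x'}$. Substituting both and factoring out $\norm{x-x'}$ produces the bound $\bigl(\mu_L\norm{J(x)} + \mu_J\norm{g(x')}\bigr)\norm{x-x'}$. Note that this is precisely why $\norm{J(x)}$ (evaluated at $x$) pairs with $\mu_L$ while $\norm{g(x')}$ (evaluated at $x'$) pairs with $\mu_J$: in each summand of the telescoping split, one factor is held fixed and the other is the Lipschitz-controlled difference.

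There is no deep obstacle here; the result is a routine consequence of the product rule and the two assumptions. The only points that need care are the dimensional bookkeeping — keeping $J(x)$ acting as a $d_x \times d_\theta$ matrix on the $d_\theta$-vector $g$, consistent with the paper's convention $J = \nabla_x\nabla_\theta L$ — and the constant $2$ coming from the squared norm. I would either absorb this factor into the constants or simply note that the displayed inequality holds once it is accounted for, since it plays no role in the subsequent use of the lemma in proving \cref{thm:cert_defense}.
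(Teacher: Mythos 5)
Your proof is correct and takes essentially the same route as the paper's: the same telescoping insertion of the cross term $J(x)g(x')$, the same triangle inequality with the induced-norm bound $\norm{Av}\le\norm{A}\norm{v}$, and the same direct application of \cref{ass:lip_jacobian,ass:smooth}. The one difference is that you explicitly track the factor of $2$ from differentiating the squared norm, which the paper's proof silently drops by writing $\nabla_x \norm{\nabla_\theta L(x,\theta)}^2$ as $\nabla_x\nabla_\theta L(x,\theta)\,\nabla_\theta L(x,\theta)$; your bookkeeping there is the more careful of the two.
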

\begin{proof}
    \begin{align*}
        &\quad \norm{ \nabla_x \norm{\nabla_\theta L(x, \theta)}^2 - \nabla_x \norm{\nabla_\theta L(x', \theta)}^2 } \\
        &= \left\| \nabla_x \nabla_\theta L(x, \theta) \nabla_\theta L(x, \theta) - \nabla_x \nabla_\theta L(x, \theta) \nabla_\theta L(x', \theta) \right. \\
        &\quad \left.+ \nabla_x \nabla_\theta L(x, \theta) \nabla_\theta L(x', \theta) - \nabla_x \nabla_\theta L(x', \theta) \nabla_\theta L(x', \theta) \right\| \\
        &\le \norm{\nabla_x \nabla_\theta L(x, \theta)} \norm{\nabla_\theta L(x, \theta) - \nabla_\theta L(x', \theta)} + \norm{\nabla_x \nabla_\theta L(x', \theta) - \nabla_x \nabla_\theta L(x, \theta)} \norm{\nabla_\theta L(x', \theta)}.
    \end{align*}
    Using \cref{ass:lip_jacobian} and \ref{ass:smooth} directly lead to
    \begin{align*}
        \norm{ \nabla_x \norm{\nabla_\theta L(x, \theta)}^2 - \nabla_x \norm{\nabla_\theta L(x', \theta)}^2 } \le (\mu_L \norm{\nabla_x \nabla_\theta L(x, \theta)} + \mu_J \norm{\nabla_\theta L(x', \theta)}) \norm{x - x'}.
    \end{align*}
\end{proof}
\begin{lemma} \label{ass:proj_grad_smooth}
$\norm{ \nabla_x (\nabla_\theta^\top L(x, \theta)g) - \nabla_x (\nabla_\theta^\top L(x', \theta)  g) } \le \mu_J \norm{g} \norm{x-x'}$
\end{lemma}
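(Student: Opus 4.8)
The plan is to reduce the left-hand side to a single Jacobian-vector product and then invoke the operator-norm bound already granted by \cref{ass:lip_jacobian}. First I would observe that, since $g$ is a constant vector, the quantity $\nabla_\theta^\top L(x, \theta) g$ is a scalar whose gradient with respect to $x$ is exactly the Jacobian-vector product $\nabla_x(\nabla_\theta^\top L(x, \theta) g) = (\nabla_x \nabla_\theta L(x, \theta)) g$. Writing $J(x) \triangleq \nabla_x \nabla_\theta L(x, \theta) \in \mathbb{R}^{d_x \times d_\theta}$ and likewise $J(x')$, the expression inside the norm collapses to $(J(x) - J(x')) g$, so the claim becomes a statement purely about the difference of two Jacobians acting on the same vector $g$.

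Next I would apply the compatibility (submultiplicativity) of the induced $\ell_2$ matrix norm, namely $\norm{(J(x) - J(x')) g} \le \norm{J(x) - J(x')} \, \norm{g}$, which is valid precisely because the matrix norm used throughout the paper is the operator norm $\norm{A} = \sup_{v \neq 0} \norm{Av}/\norm{v}$. Then \cref{ass:lip_jacobian} supplies $\norm{J(x) - J(x')} = \norm{\nabla_x \nabla_\theta L(x, \theta) - \nabla_x \nabla_\theta L(x', \theta)} \le \mu_J \norm{x - x'}$. Chaining these two inequalities gives $\norm{(J(x) - J(x')) g} \le \mu_J \norm{g} \norm{x - x'}$, which is the desired bound.

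There is essentially no hard step: the lemma is a one-line consequence of operator-norm compatibility together with the Lipschitz-Jacobian assumption. The only point that warrants care is the opening rewrite, confirming that differentiating the scalar $\nabla_\theta^\top L(x, \theta) g$ with respect to $x$ genuinely reproduces the very same Jacobian $J$ that appears in \cref{ass:lip_jacobian}, rather than a transposed or reshaped object, so that the assumption applies verbatim. Once the dimensions are matched ($J \in \mathbb{R}^{d_x \times d_\theta}$ acting on $g \in \mathbb{R}^{d_\theta}$), the remainder is immediate. I expect this lemma to feed into the proof of \cref{thm:cert_defense} as the device controlling how the inversion-loss gradient shifts under a change in $x$, complementing \cref{ass:grad_smooth}.
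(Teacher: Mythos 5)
Your proof is correct and follows essentially the same route as the paper's: rewrite the difference as $(\nabla_x \nabla_\theta L(x,\theta) - \nabla_x \nabla_\theta L(x',\theta))\,g$, apply the induced-norm bound to pull out $\norm{g}$, and invoke \cref{ass:lip_jacobian}. Your extra care in verifying that differentiating the scalar $\nabla_\theta^\top L(x,\theta) g$ yields exactly the Jacobian $J$ of \cref{ass:lip_jacobian} is a sound check that the paper leaves implicit.
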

\begin{proof}
    By \cref{ass:lip_jacobian}, we have
    \begin{align*}
        &\quad \norm{ \nabla_x (\nabla_\theta^\top L(x, \theta)g) - \nabla_x (\nabla_\theta^\top L(x', \theta)  g) } \\ %
        &\le \norm{ \nabla_x \nabla_\theta L(x, \theta) - \nabla_x \nabla_\theta L(x', \theta)  } \norm{g} \\
        &\le \mu_J \norm{g} \norm{x-x'}.
    \end{align*}
\end{proof}

\begin{lemma} %
\label{lm:smooth_inv}
The inversion loss $L_I(x;g)$ defined satisfies $\norm{\nabla_x L_I(x; g) - \nabla_x L_I(x'; g)} \le \mu \norm{x - x'}$ where  
\begin{align}
    \mu(x, x',\theta, g) = \mu_L \norm{\nabla_x \nabla_\theta L(x, \theta)} + \mu_J \norm{\nabla_\theta L(x', \theta)} + \mu_J \norm{g}.
\end{align}
\end{lemma}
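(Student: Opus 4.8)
The plan is to reduce the claim to the two helper results already in hand, \cref{ass:grad_smooth} and \cref{ass:proj_grad_smooth}, by separating the $x$-gradient of the inversion loss into exactly the two pieces those lemmas bound. First I would expand $L_I$ from its definition in \cref{eq:inversion_attack},
\[
L_I(x;g) = \norm{\nabla_\theta L(x,\theta)}^2 - 2\,\nabla_\theta^\top L(x,\theta)\,g + \norm{g}^2,
\]
and differentiate in $x$. The constant term $\norm{g}^2$ drops out, leaving
\[
\nabla_x L_I(x;g) = \nabla_x\norm{\nabla_\theta L(x,\theta)}^2 - 2\,\nabla_x\!\left(\nabla_\theta^\top L(x,\theta)\,g\right),
\]
so the gradient is a sum of a squared-norm part and a part that is linear in $g$ — precisely the two objects controlled by \cref{ass:grad_smooth} and \cref{ass:proj_grad_smooth}.

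Next I would form the difference $\nabla_x L_I(x;g) - \nabla_x L_I(x';g)$, group it termwise, and apply the triangle inequality so that each group is bounded by its corresponding lemma. \cref{ass:grad_smooth} bounds the squared-norm difference by $(\mu_L\norm{\nabla_x\nabla_\theta L(x,\theta)} + \mu_J\norm{\nabla_\theta L(x',\theta)})\norm{x-x'}$, and \cref{ass:proj_grad_smooth} bounds the linear-in-$g$ difference by $\mu_J\norm{g}\norm{x-x'}$. Summing these and reading off the coefficient of $\norm{x-x'}$ yields the claimed Lipschitz constant $\mu = \mu_L\norm{\nabla_x\nabla_\theta L(x,\theta)} + \mu_J\norm{\nabla_\theta L(x',\theta)} + \mu_J\norm{g}$.

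Since the two constituent lemmas already encapsulate all the smoothness content of \cref{ass:lip_jacobian} and \cref{ass:smooth}, there is no substantive obstacle; the proof is essentially the observation that gradient smoothness is additive across the decomposition of $L_I$. The one place demanding care is the bookkeeping of scalar prefactors: differentiating $\norm{\nabla_\theta L}^2$ introduces the product $\nabla_x\nabla_\theta L\cdot\nabla_\theta L$ through the chain rule, and the inner-product term enters with coefficient $-2$, so I would track these constants so that the final coefficients collapse to $\mu$ exactly rather than picking up a spurious factor of two on the $\norm{g}$ term.
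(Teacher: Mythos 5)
Your proposal takes essentially the same route as the paper's own proof: there, too, $\nabla_x L_I$ is split into the squared-norm-gradient piece and the linear-in-$g$ piece, the two differences are bounded by \cref{ass:grad_smooth} and \cref{ass:proj_grad_smooth} respectively, and the triangle inequality sums the constants. The only caveat is the factor of two you flag at the end: it does not actually collapse, since with the helper lemmas applied at face value your decomposition yields the constant $\mu_L\norm{\nabla_x\nabla_\theta L(x,\theta)}+\mu_J\norm{\nabla_\theta L(x',\theta)}+2\mu_J\norm{g}$, and indeed the paper's own proof concludes by setting $\mu=2\mu_1+2\mu_2$, twice the constant stated in the lemma. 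This mismatch is a looseness in the paper's bookkeeping (the chain-rule factor of $2$ is silently dropped when writing $\nabla_x\norm{\nabla_\theta L(x,\theta)}^2$ in \cref{ass:grad_smooth}) rather than a flaw in your argument, and it is immaterial downstream because the constants in \cref{thm:cert_defense} are computed from the doubled $\mu$.
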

\begin{proof}
    Since 
    \begin{align*}
        \nabla_x L_I(x; g) &= 2\nabla_x \nabla_\theta L(x, \theta)  (\nabla_\theta L(x, \theta) - g) \\
        &= 2\nabla_x \norm{\nabla_\theta L(x, \theta)}^2  - 2\nabla_x \nabla_\theta L(x, \theta)  g,
    \end{align*}
    we can derive
    \begin{align*}
        &\quad \norm{\nabla_x L_I(x; g) - \nabla_x L_I(x'; g)} \\
        &= 2 \norm{ \nabla_x \norm{\nabla_\theta L(x, \theta)}^2 - \nabla_x \norm{\nabla_\theta L(x', \theta)}^2 - \left[\nabla_x \nabla_\theta^\top L(x, \theta) - \nabla_x \nabla_\theta^\top L(x', \theta)\right]  g } \\
        &\le 2\norm{ \nabla_x \norm{\nabla_\theta L(x, \theta)}^2 - \nabla_x \norm{\nabla_\theta L(x', \theta)}^2 } + 2\norm{\left[ \nabla_x \nabla_\theta^\top L(x, \theta) - \nabla_x \nabla_\theta^\top L(x', \theta)\right]  g }.
    \end{align*}
    By \cref{ass:grad_smooth} and \cref{ass:proj_grad_smooth}, we have
    \begin{gather*}
        \norm{ \nabla_x \norm{\nabla_\theta L(x, \theta)}^2 - \nabla_x \norm{\nabla_\theta L(x', \theta)}^2 } \le \mu_1 \norm{x - x'}, \\
        \norm{ \nabla_x (\nabla_\theta^\top L(x, \theta)g) - \nabla_x (\nabla_\theta^\top L(x', \theta)  g) } \le \mu_2 \norm{x - x'}.
    \end{gather*}
    where $\mu_1 = \mu_L \norm{\nabla_x \nabla_\theta L(x, \theta)} + \mu_J \norm{\nabla_\theta L(x', \theta)}$ and $\mu_2 = \mu_J \norm{g}$.
    Let $\mu=2 \mu_1+2 \mu_2$. Then we can get
    \begin{align*}
        &\norm{\nabla_x L_I(x; g) - \nabla_x L_I(x'; g)} \le \mu \norm{x - x'}.
    \end{align*}
\end{proof}

\begin{theorem}[Restated from \cref{thm:cert_defense}]
Let $x_0$ be the private data and $g_0\triangleq \nabla_\theta L(x_0, \theta)$ be its corresponding gradient which is treated as a constant.
If \cref{ass:lip_jacobian} and \ref{ass:smooth} hold, then the square root of the recovery RMSE satisfies:
\begin{align}
    \norm{x_0 - G_r(g_0 + \delta)} \ge \frac{\norm{J \delta}}{\mu_L \norm{J} + 2 \mu_J \norm{ g_0+\delta}},
\end{align}
where $J=\nabla_x \nabla_\theta L(x_0, \theta)$.
\end{theorem}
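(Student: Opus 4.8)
The plan is to exploit the first-order optimality condition at the recovered point together with the smoothness of the inversion loss established in \cref{lm:smooth_inv}. Write $x^* \triangleq G_r(g_0+\delta)$ for the unique minimizer guaranteed by \cref{ass:reverse}, so that the stationarity condition reads $\nabla_x L_I(x^*; g_0+\delta) = 0$. On the other hand, the gradient of the inversion loss at the true private point $x_0$ is explicit: since $\nabla_\theta L(x_0,\theta) = g_0$, differentiating $L_I(x;g)=\norm{\nabla_\theta L(x,\theta)-g}^2$ gives $\nabla_x L_I(x_0; g_0+\delta) = 2J\left(\nabla_\theta L(x_0,\theta) - (g_0+\delta)\right) = -2J\delta$, hence $\norm{\nabla_x L_I(x_0; g_0+\delta)} = 2\norm{J\delta}$. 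This quantity will feed the numerator of the bound.

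Next I would invoke \cref{lm:smooth_inv}, which states that $L_I(\cdot\,; g)$ is smooth in $x$ with constant $\mu = \mu_L\norm{\nabla_x\nabla_\theta L(x,\theta)} + \mu_J\norm{\nabla_\theta L(x',\theta)} + \mu_J\norm{g}$ (tracking the factor that appears in its proof). Applying it with $x = x_0$ and $x' = x^*$, and using $\nabla_x L_I(x^*;g_0+\delta)=0$, yields
\[
2\norm{J\delta} = \norm{\nabla_x L_I(x_0;g_0+\delta) - \nabla_x L_I(x^*;g_0+\delta)} \le \mu\,\norm{x_0 - x^*}.
\]
The ordering $x=x_0$ is essential: it makes the first term of $\mu$ equal to $\mu_L\norm{J}$, the Jacobian evaluated at $x_0$, as required by the statement, while the last term is already $\mu_J\norm{g_0+\delta}$. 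The only remaining piece is the middle term $\mu_J\norm{\nabla_\theta L(x^*,\theta)}$, which involves the implicitly defined minimizer $x^*$.

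The crux is therefore to control $\norm{\nabla_\theta L(x^*,\theta)}$ by $\norm{g_0+\delta}$. I would argue from optimality of $x^*$: since $x_0$ is feasible, $L_I(x^*;g_0+\delta) \le L_I(x_0;g_0+\delta) = \norm{\delta}^2$, so $\nabla_\theta L(x^*,\theta)$ lies within distance $\norm{\delta}$ of $g_0+\delta$; in the perfect-attacker (realizable) regime of \cref{ass:reverse} the inversion loss at $x^*$ vanishes and $\nabla_\theta L(x^*,\theta) = g_0+\delta$ exactly, giving $\norm{\nabla_\theta L(x^*,\theta)} = \norm{g_0+\delta}$. Substituting this collapses the middle and last terms of $\mu$ into $2\mu_J\norm{g_0+\delta}$, and dividing $2\norm{J\delta}$ by the resulting constant produces the claimed bound $\norm{x_0 - G_r(g_0+\delta)} \ge \norm{J\delta}/(\mu_L\norm{J} + 2\mu_J\norm{g_0+\delta})$. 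I expect this final step to be the main obstacle: bounding the gradient norm at the unknown minimizer $x^*$ rigorously, rather than assuming exact realizability, requires care, and it is precisely here that the constant factors coming out of \cref{lm:smooth_inv} must be tracked so that the numerator $2\norm{J\delta}$ and the denominator cancel down to match the stated bound.
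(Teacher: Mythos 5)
Your proposal is correct and follows essentially the same route as the paper's own proof: stationarity $\nabla_x L_I(x^*;g_0+\delta)=0$ at the minimizer, the explicit computation $\nabla_x L_I(x_0;g_0+\delta)=-2J\delta$, the smoothness constant from \cref{lm:smooth_inv}, and the substitution $\nabla_\theta L(x^*,\theta)=g_0+\delta$ to evaluate that constant. You even handle the one delicate point more carefully than the paper does, since the paper simply asserts realizability of $g_0+\delta$ at the minimizer, whereas you note the fallback bound $\norm{\nabla_\theta L(x^*,\theta)-(g_0+\delta)}\le\norm{\delta}$ from optimality.
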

\begin{proof}
    Utilize the stationary condition $\nabla_x L_I(x^*_g; g) = 0$ and \cref{lm:smooth_inv} to obtain
    \begin{align*}
        \norm{\nabla_x L_I(x; g)} \le \mu(x, x^*_g, \theta, g) \norm{x - x^*_g}, ~\forall x.
    \end{align*}
    As $x_0$ is the private sample whose gradient is $g_0 \triangleq \nabla_\theta L(x_0, \theta)$, then we have
    \begin{align*}
        \norm{x_0 - G_r(g_0 + \delta)} \ge \frac{1}{\mu(x_0, x^*_{g_0+\delta}, \theta, g_0+\delta)} \norm{\nabla_x L_I(x_0; g_0+\delta)}
    \end{align*}
    Because
    \begin{align*}
        \nabla_x L_I(x_0; g_0+\delta) &= 2\nabla_x \nabla_\theta L(x_0, \theta)  (\nabla_\theta L(x_0, \theta) - g_0 - \delta) \\
        &= 2\nabla_x \nabla_\theta L(x_0, \theta)  \delta,
    \end{align*}
    we can attain
    \begin{align*}
        \norm{x_0 - G_r(g_0 + \delta)} \ge \frac{2}{\mu} \norm{\nabla_x \nabla_\theta L(x_0, \theta)  \delta}.
    \end{align*}
    With $\nabla_\theta L(x^*_{g_0+\delta}, \theta)=g_0+\delta$, we can obtain
    \begin{align*}
        \mu(x_0, x^*_{g_0+\delta}, \theta, g_0+\delta) &= 2\mu_L \norm{\nabla_x \nabla_\theta L(x_0, \theta)} +2 \mu_J \norm{\nabla_\theta L(x^*_{g_0+\delta}, \theta)} + 2\mu_J \norm{g_0+\delta} \\
        &= 2\mu_L \norm{J} + 4 \mu_J \norm{ g_0+\delta}.
    \end{align*}
\end{proof}

\section{Experiments}
\label{sec:app:exp}

\subsection{Experimental Details}
\textbf{Model architectures.} The linear model we use is a matrix that maps the input data into a vector. 
The LeNet model is a convolutional neural network with 4 convolutional layers and 1 fully connected layer.
We use the modified version following previous privacy papers~\cite{sun2020provable}, whose detailed structure is in \cref{tab:lenet}.
ResNet18 is a popular deep convolutional network in computer vision with batch-normalization and residual layers~\citep{he2015deep}.
Cross entropy loss is used as the loss function in all the experiments.

\begin{table}[b]
    \centering
    \caption{A modified version of LeNet. Conv represents a convolutional layer. FC means a fully-connected layer.}
    \label{tab:lenet}
    \begin{tabular}{c}
        \toprule
        Layers  \\ \midrule
        Conv(in\_channels=3, out\_channels=12, kernel\_size=5) \\
        Conv(in\_channels=12, out\_channels=12, kernel\_size=5) \\
        Conv(in\_channels=12, out\_channels=12, kernel\_size=5) \\
        Conv(in\_channels=12, out\_channels=12, kernel\_size=5) \\
        Flattern \\
        FC(out\_features=10) \\
        \bottomrule
    \end{tabular}
\end{table}

\textbf{Experimental settings.} We conduct two kinds of attacks in our paper: DGL and GS attacks.
The learning rate of the two attacks is 0.1 and we use Adam as the optimizer.
To consider a more powerful attack, only a single image is reconstructed in each inversion.
When inverting LeNet, we uniformly initialize the model parameters in the range of $[-0.5, 0.5]$ as \citep{sun2020provable} to get a stronger attack.
When inverting ResNet18, we use the default initialization method in PyTorch and follow \cite{huang2021evaluating} to use BN statistics as an additional regularization term to conduct a stronger attack.

\subsection{Empirical Validation on Language Data}
\label{app:validation-language}
\def\LLMFigWidth{.23}

We evaluate the proposed I$^2$F metric on BERT~\citep{devlin2018bert} and GPT-2~\citep{radford2019language}, which are popular language models in natural language processing. 
We use TAG~\citep{deng2021tag}, which is an attack on Transformer-based language models based on the $L_1$ and $L_2$ distance between the original gradient and the dummy gradient, and follow the code in \url{https://github.com/JonasGeiping/breaching}.
We use the default setting in the code and iteratively update the input embedding 12,000 times.
We randomly sample $70$ sentences from WikiText-103~\citep{merity2016pointer} as the private text.
We use \emph{ROUGE-1}, \emph{ROUGE-L}~\citep{lin2004rouge}, \emph{Google BLEU}~\citep{wu2016googles} and \emph{feature MSE} to measure the semantic similarity between the original text and the recovered text.
ROUGE-1 measures the overlap of $1$-grams in the original and recovered text, while ROUGE-L measures the length of the longest common subsequence between two sentences.
While ROUGE metrics calculate the reconstruction recall, the Google BLEU score uses as the output the smaller value of the precision and recall of the original and recovered text and has a broader range of the overlap of $n$-grams, where $n=1,2,3,4$.
Since the above three metrics are discrete and not consistent with our assumptions, we include a continuous metric, the feature MSE, which measures the distance between the final layer's feature of the original and recovered text.

The results are presented in \cref{fig:LLM-all}.
A darker color indicates a larger noise variance.
For each noise variance, we randomly sample the perturbation from a zero-mean Gaussian distribution 5 times and repeat this for 3 different random seeds.
It shows that I$^2$F is correlated to these four metrics, which means I$^2$F can be used to estimate the privacy risk of text datasets with large language models.
The correlation of BERT and GPT-2 between the four metrics has a similar mode.
Though ROUGE-1, ROUGE-L and Google BLEU measure the structural similarity of sentences, which consists of discrete tokens, I$^2$F presents a clear correlation.
For the feature MSE, although I$^2$F has a less distinguished correlation with feature MSE, it can still be utilized to estimate the privacy risk.

\begin{figure*}
    \centering 
    \subfigure[ROUGE-1]{
    \includegraphics[width=\LLMFigWidth\textwidth]{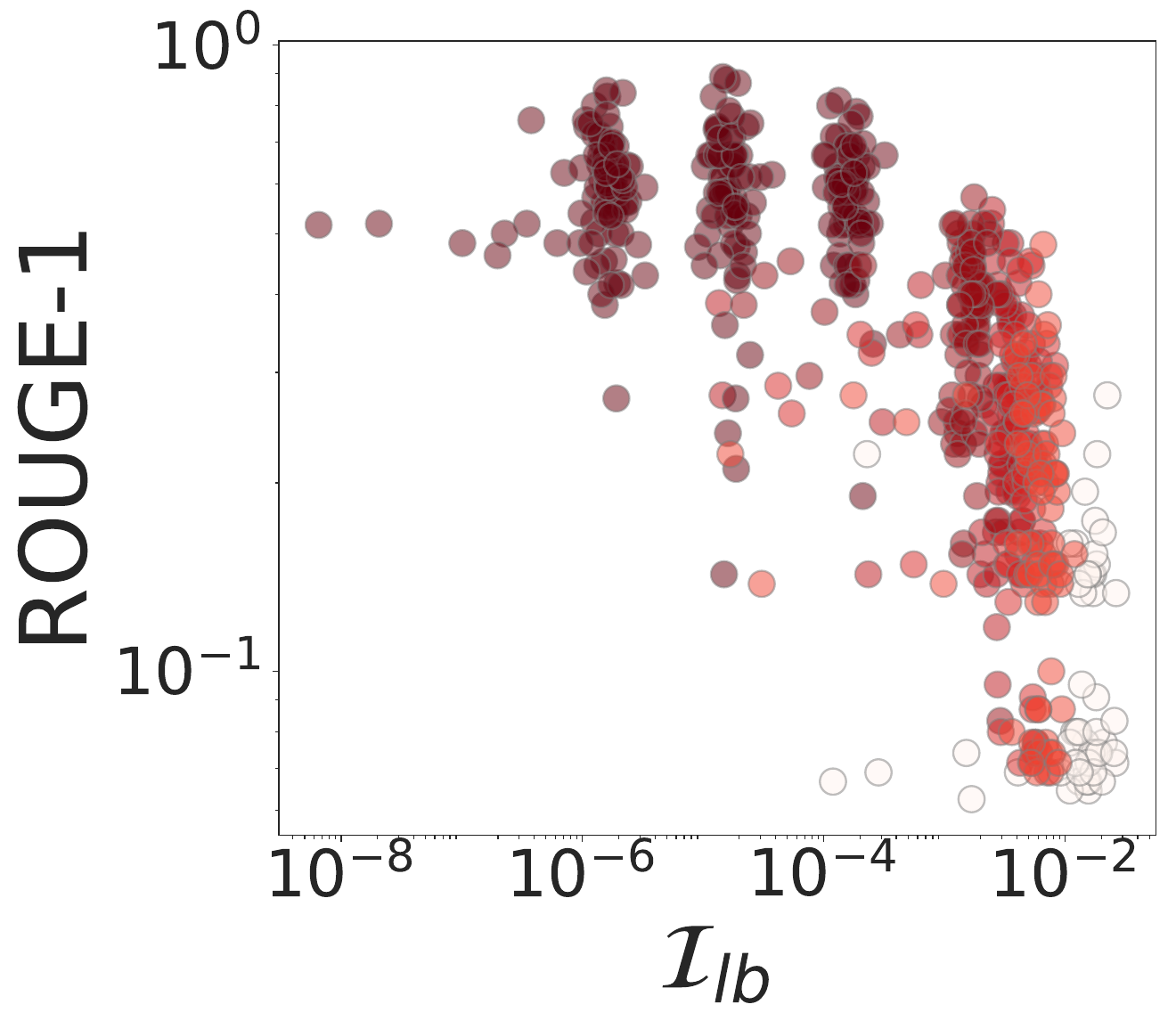}
    }
    \subfigure[ROUGE-L]{
    \includegraphics[width=\LLMFigWidth\textwidth]{fig/app/LLM/bert-base/rougeL.pdf}
    }
    \subfigure[Google BLEU]{
    \includegraphics[width=\LLMFigWidth\textwidth]{fig/app/LLM/bert-base/google_bleu.pdf}
    }
    \subfigure[Feature MSE]{
    \includegraphics[width=\LLMFigWidth\textwidth]{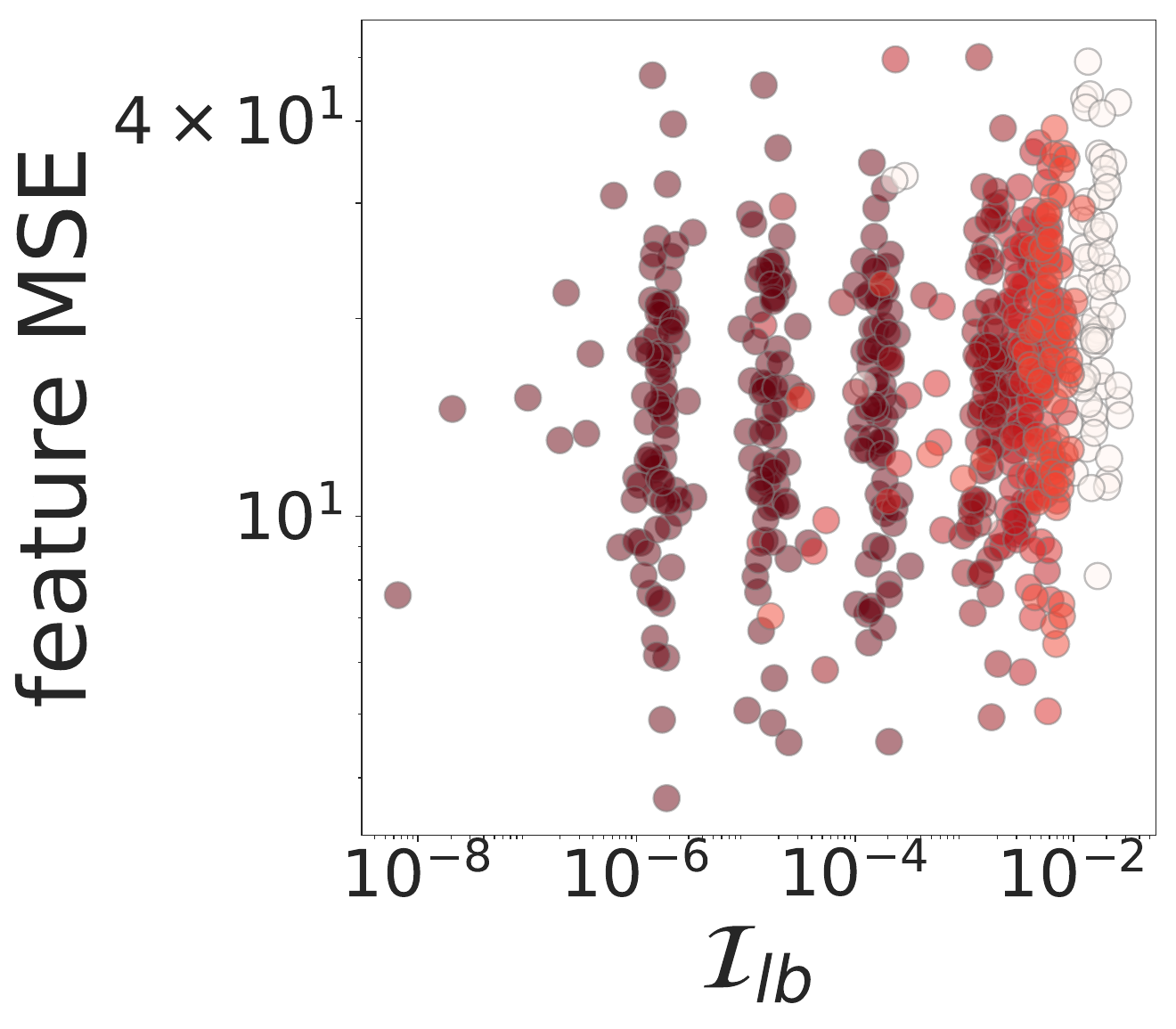}
    }

    \subfigure[ROUGE-1]{
    \includegraphics[width=\LLMFigWidth\textwidth]{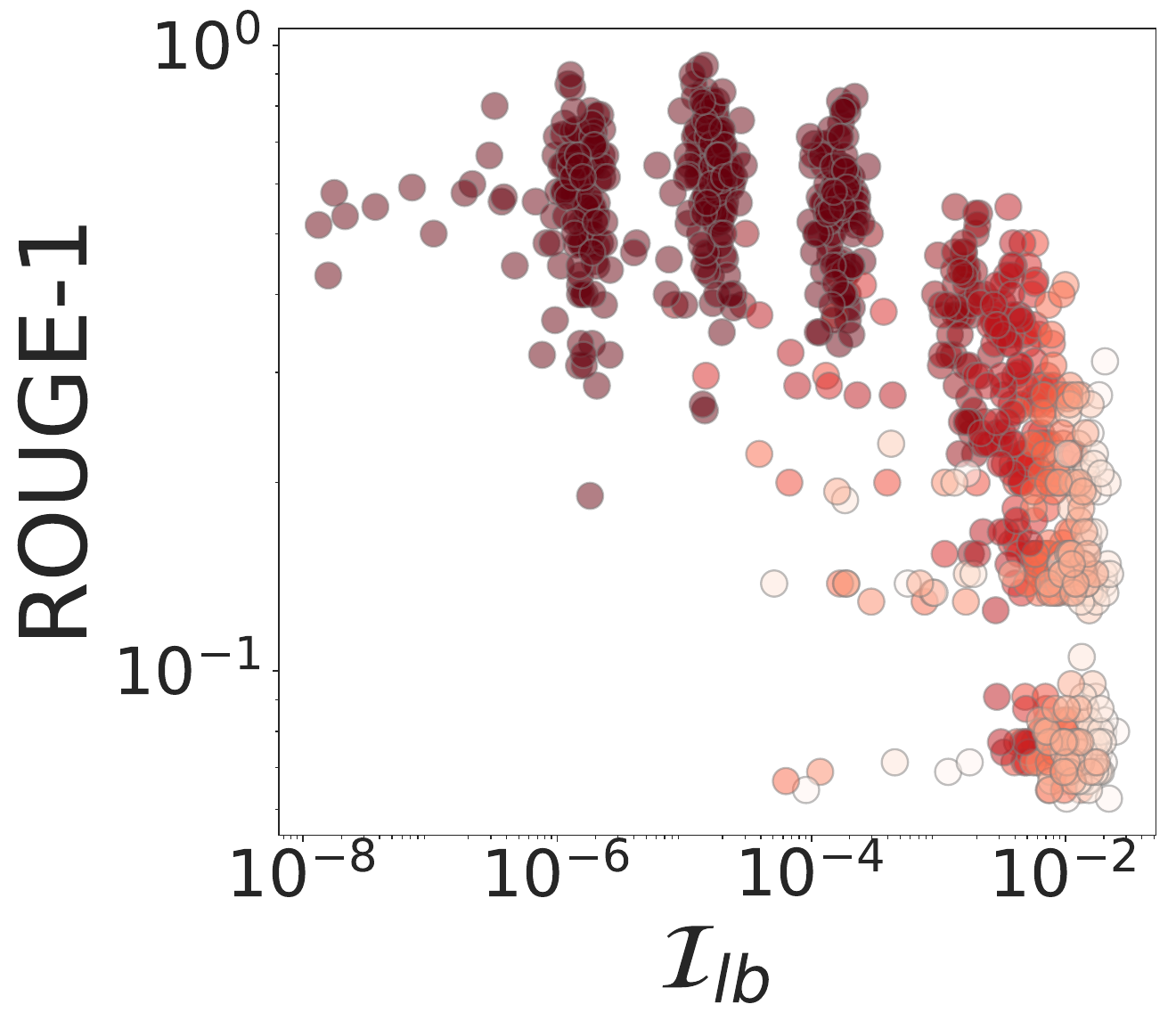}
    }
    \subfigure[ROUGE-L]{
    \includegraphics[width=\LLMFigWidth\textwidth]{fig/app/LLM/gpt/rougeL.pdf}
    }
    \subfigure[Google BLEU]{
    \includegraphics[width=\LLMFigWidth\textwidth]{fig/app/LLM/gpt/google_bleu.pdf}
    }
    \subfigure[Feature MSE]{
    \includegraphics[width=\LLMFigWidth\textwidth]{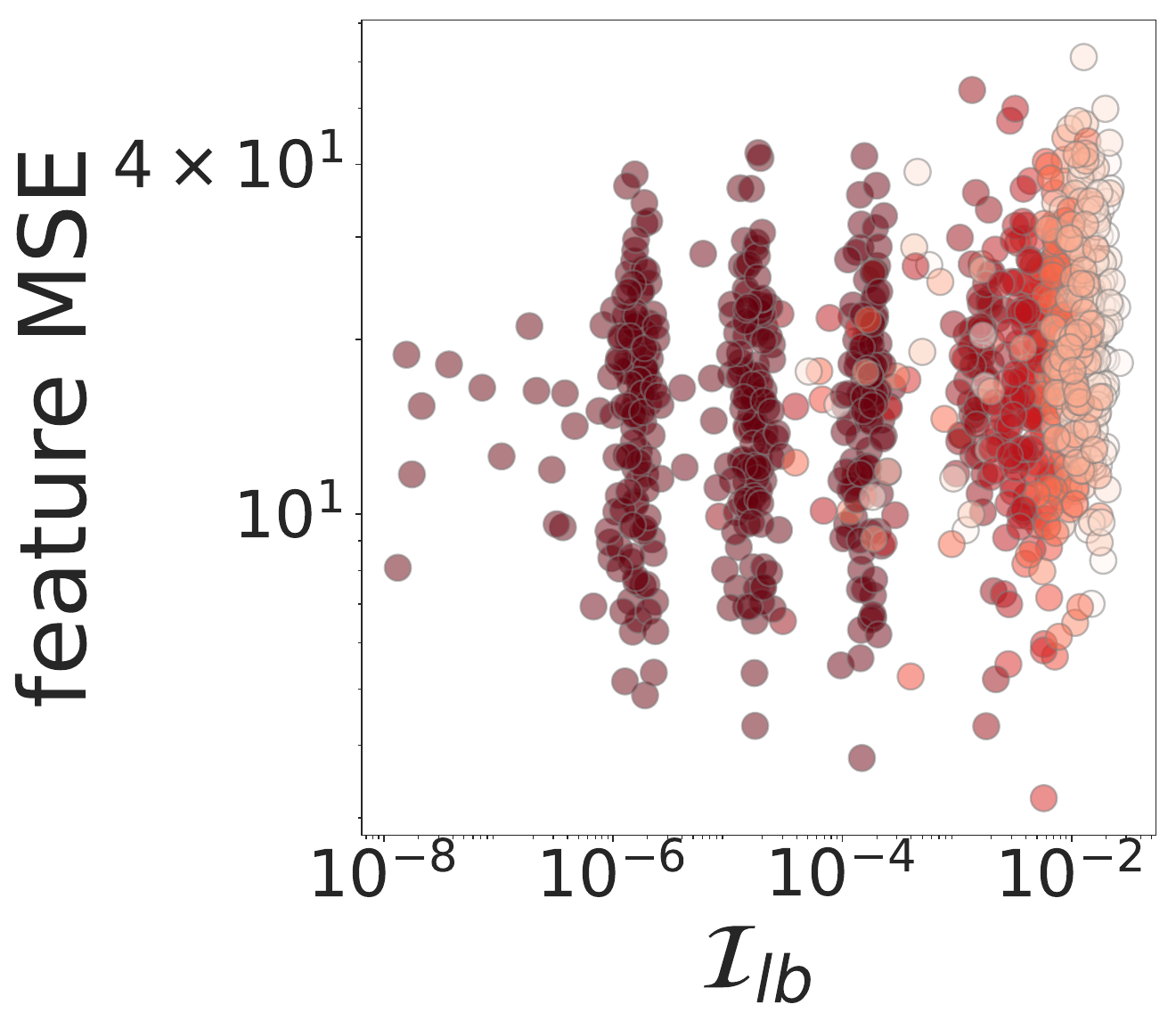}
    }
    \caption{Evaluation of $\mathcal{I}_{lb}$ on BERT (top) and GPT-2 (bottom). A darker color means a larger noise variance. Four metrics are used to evaluate the semantic similarity between the original text and the recovered text. $\mathcal{I}_{lb}$ is linearly correlated to the four semantic metrics, which means $\mathcal{I}_{lb}$ can be used to estimate the privacy risk of the private text.}
    \label{fig:LLM-all}
\end{figure*}

\subsection{Efficiency}

To show the efficiency of computing the I$^2$F values, we compare it with the GS attack on randomly-picked CIFAR10 images with ResNet18.
As the major complexity comes from evaluating the maximal eigenvalue via the power iteration method, we compare the time required for the power iteration against the inversion loss of GS to converge.
We notice that the convergence depends on the initialization of the power iteration and the learning rate for DGL.
Thus, we repeat power iteration with 5 different seeds. %
For the inversion attack, we evaluate multiple learning rates ($1, 0.5, 0.1, 0.05, 0.01$) to show the fastest convergence.
Each experiment is repeated 5 times with different random seeds.
As shown in \cref{fig:efficiency}, the power iteration method can converge to the maximal eigenvalue within 50 iterations (5 seconds at most).
In comparison, the inversion loss demands 3000 more iterations in 150 seconds to fully converge, which is 20 times larger than the power iteration method.
Thus, our method can give an accurate and fast approximation of the recovery MSE without the exhaustive whole inversion process.

\begin{figure*}[t]
    \centering
    \subfigure[Power iteration]{
        \includegraphics[width=0.3\textwidth]{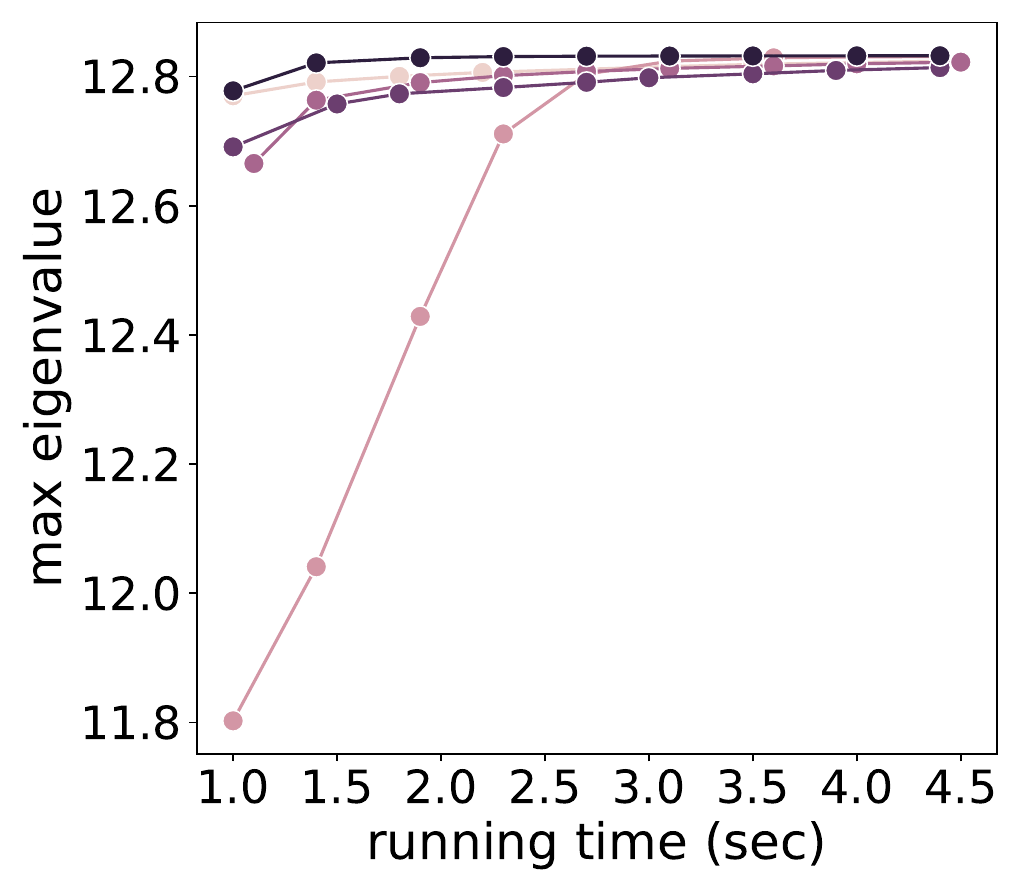}
        \label{fig:efficiency-poweriter}
    }
    \subfigure[Inversion attack]{
        \includegraphics[width=0.3\textwidth]{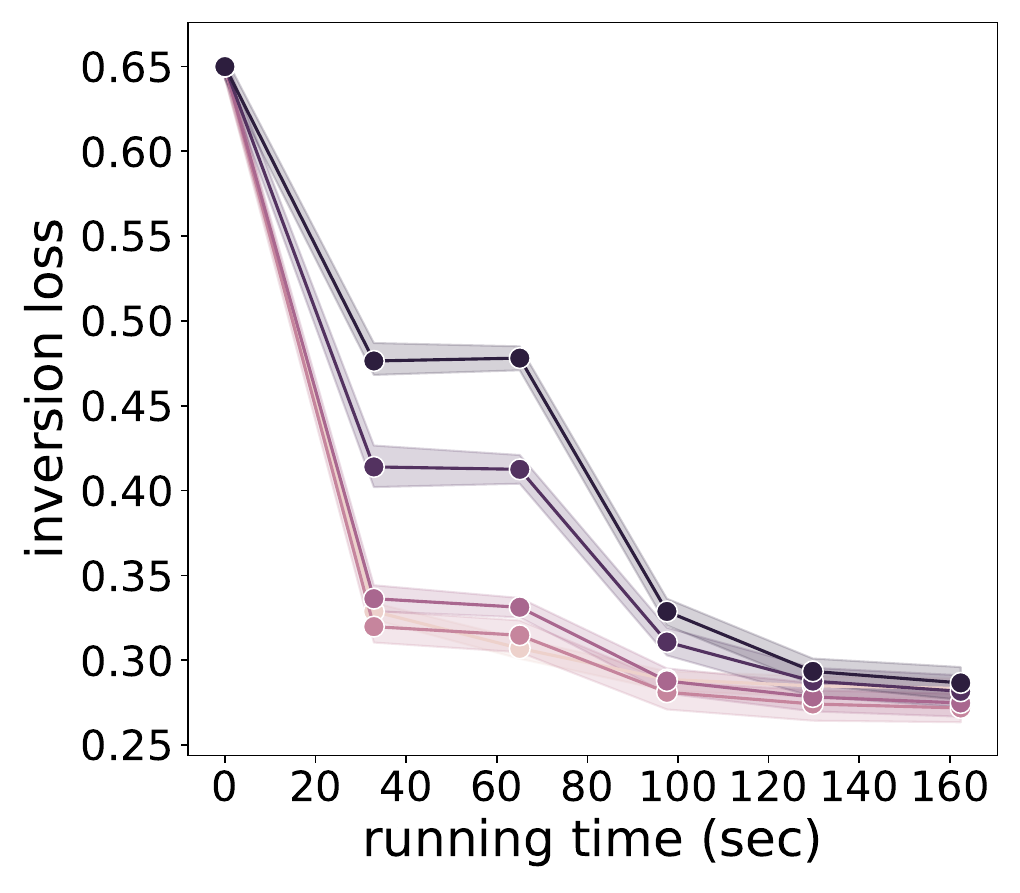}
        \label{fig:efficiency-invert}
    }
    \caption{Evaluation of the efficiency of computing $\lambda_{\max}({JJ^\top})$ (our method) by power iteration and inversion attack by minimizing inversion loss ($L_I$). 
    Colors in (a) indicate different seeds.
    Darker colors in (b) indicate larger learning rates.
    Our method using power iteration can converge faster than direct inversion attacks.
    }
    \label{fig:efficiency}
\end{figure*}

In \cref{fig:efficiency-comparison}, we compare the computation cost of computing $\mathcal{I}_{lb}$ with minimizing inversion loss.
We show that for almost all the models and datasets we evaluate, the time ratio is larger than $1$, which means it is more efficient to compute $\mathcal{I}_{lb}$ than minimize the inversion loss.
It indicates that our method is a more efficient way to estimate the privacy risk for most models and datasets, than the empirical inversion attack.
Another key point is that, for large models and datasets, such as models larger than RN18 with CIFAR10 or ImageNet, the time ratio is even much larger than $500$.
When the time consumption of inversion attacks on these models and datasets is huge (about $16$ minutes or even longer), our method significantly reduces the computation overheads for estimating the privacy risks.

\begin{figure*}[t]
    \centering
    \subfigure[DGL attack]{
        \includegraphics[width=0.3\textwidth]{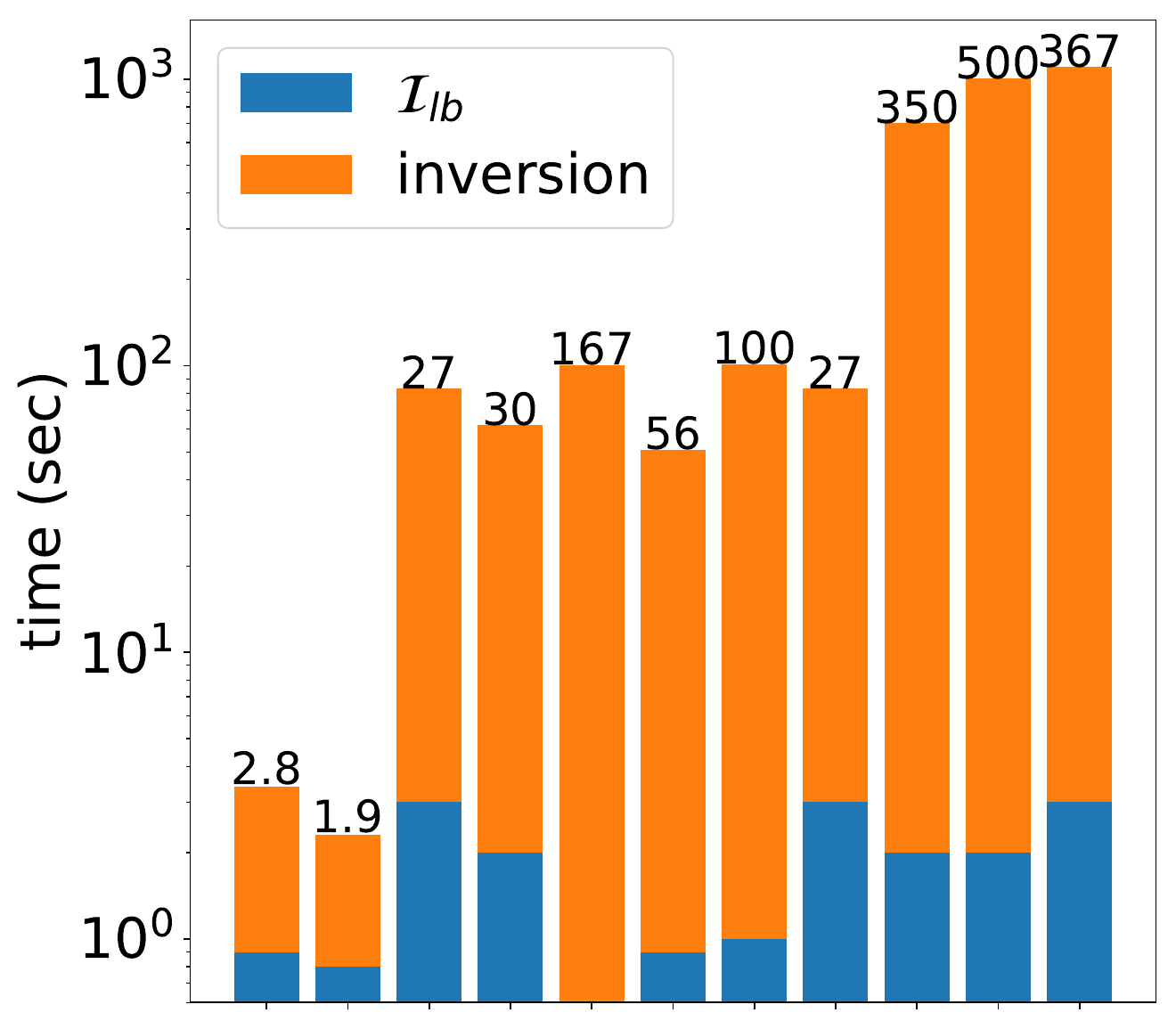}
        \label{fig:efficiency-dgl}
    }
    \subfigure[GS attack]{
        \includegraphics[width=0.3\textwidth]{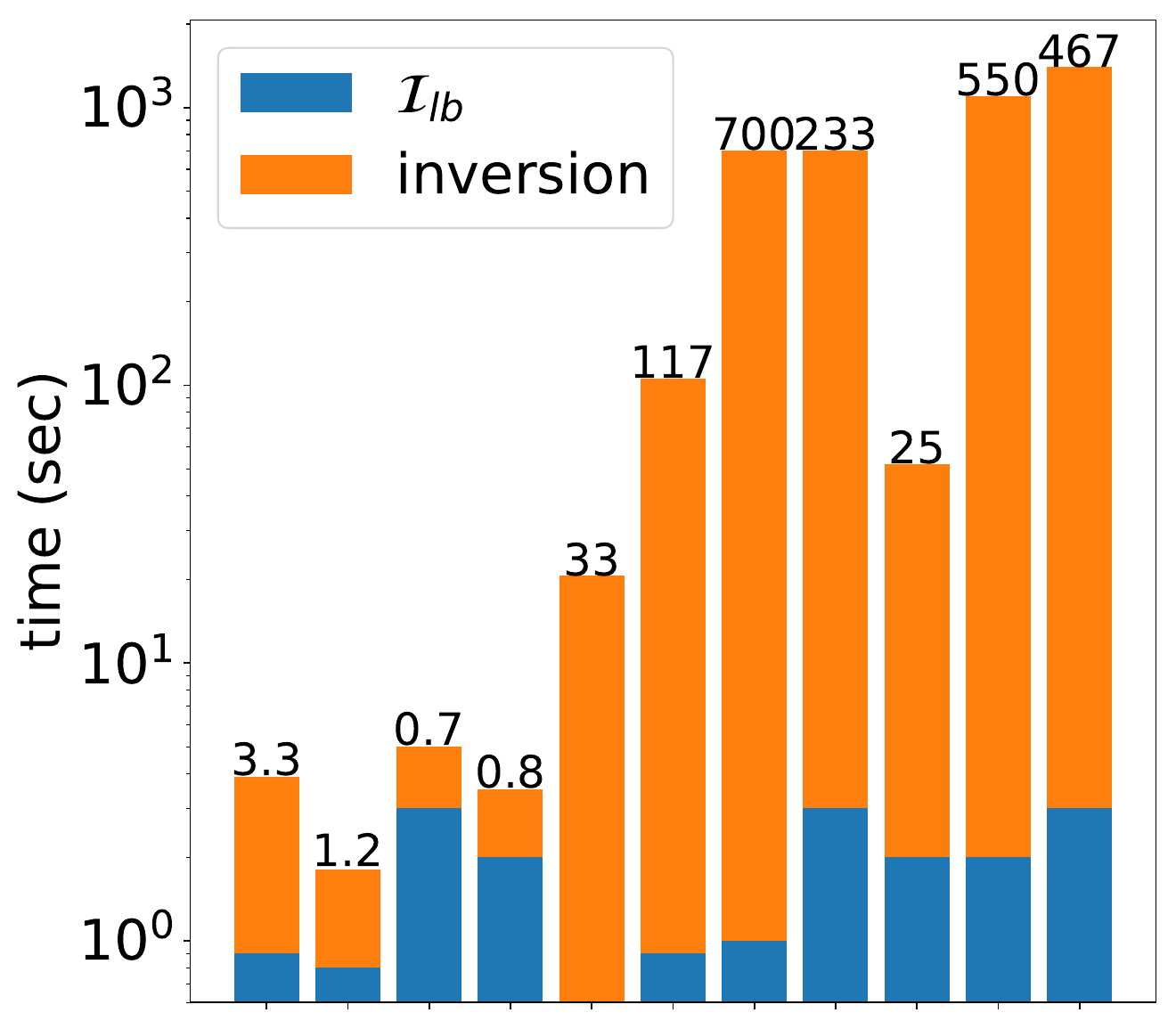}
        \label{fig:efficiency-gs}
    }
    \caption{Comparison of the efficiency of computing $\mathcal{I}_{lb}$ (our method) by power iteration and \textbf{inversion} attack by minimizing inversion loss ($L_I$). 
    Blue bars indicate the time of computing $\mathcal{I}_{lb}$ while orange bars indicate the time of minimizing inversion loss by DGL and GS.
    The time ratio of computing $\mathcal{I}_{lb}$ versus minimizing inversion loss is present above the orange bars.
    The x-axis are model-dataset pairs sorted by the model scales: MLP-MNIST, MLP-CIFAR10, LeNet-MNIST, LeNet-CIFAR10, RN18-MNIST, RN18-CIFAR10, RN34-CIFAR10, RN50-CIFAR10, RN101-CIFAR10, RN152-CIFAR10, RN152-ImageNet.
    For large models and datasets, where minimizing inversion loss needs a huge computation overhead, $\mathcal{I}_{lb}$ can provide an efficient estimation of the privacy risk.
    }
    \label{fig:efficiency-comparison}
\end{figure*}

\begin{figure*}
    \centering 
    \subfigure[$\lambda=3.90$]{
    \includegraphics[width=.1\textwidth]{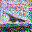}
    }
    \subfigure[$\lambda=0.12$]{
    \includegraphics[width=.1\textwidth]{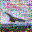}
    }
    \subfigure[$\lambda=0.03$]{
    \includegraphics[width=.1\textwidth]{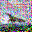}
    }
    \subfigure[$\lambda=0.0006$]{
    \includegraphics[width=.1\textwidth]{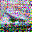}
    }
    \caption{Same perturbation sizes but different protection effects by different eigenvectors of LeNet. Recovered CIFAR10 images associated with different eigenvectors are present. When perturbing with eigenvectors with smaller eigenvalues, the recovered images are more noisy.}
    \label{fig:invert-eigenvec-cifar10}
\end{figure*}

\begin{figure*}
    \centering 
    \subfigure[$2.60$]{
    \includegraphics[width=.1\textwidth]{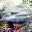}
    }
    \subfigure[$2.71$]{
    \includegraphics[width=.1\textwidth]{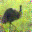}
    }
    \subfigure[$2.39$]{
    \includegraphics[width=.1\textwidth]{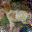}
    }
    \subfigure[$3.97$]{
    \includegraphics[width=.1\textwidth]{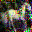}
    }

    \subfigure[$0.15$]{
    \includegraphics[width=.1\textwidth]{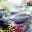}
    }
    \subfigure[$0.28$]{
    \includegraphics[width=.1\textwidth]{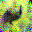}
    }
    \subfigure[$0.32$]{
    \includegraphics[width=.1\textwidth]{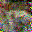}
    }
    \subfigure[$0.33$]{
    \includegraphics[width=.1\textwidth]{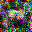}
    }
    \caption{Same perturbation sizes but different protection effects by different eigenvectors of ResNet18. Recovered CIFAR10 images associated with different eigenvalues are present. When perturbing with eigenvectors with smaller eigenvalues, the recovered images are more noisy and lack some semantic information.}
    \label{fig:invert-eigenvec-cifar10-resnet18}
\end{figure*}

\subsection{More Visual Results}

\textbf{More images of unequal perturbations.} We present in \cref{fig:invert-eigenvec-cifar10} the recovered CIFAR10 images when perturbing the gradient with eigenvectors with different eigenvalues.
When perturbing with eigenvectors with smaller eigenvalues, the recovered images are more noisy, which is consistent with our former observation.

We also present the unequal perturbations with different eigenvectors of ResNet18 on the CIFAR10 dataset in \cref{fig:invert-eigenvec-cifar10-resnet18}.
Even with the same perturbation scale, the eigenvectors with larger eigenvalues provide stronger protection, where the corresponding recovered images are more noisy and lose some semantic information.

\textbf{More images of unfair privacy protection.} 
We show more results of unfair privacy protection in \cref{fig:demo-unfair-protection}.
The images of digits 5 and 8 can still be recognized by their outlines, while images of digits 7 and 9 are unrecognizable noise.

\begin{figure*}
    \centering
    \subfigure[]{
        \includegraphics[width=.1\textwidth]{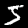}
    }
    \subfigure[]{
        \includegraphics[width=.1\textwidth]{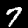}
    }
    \subfigure[]{
        \includegraphics[width=.1\textwidth]{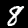}
    }
    \subfigure[]{
        \includegraphics[width=.1\textwidth]{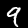}
    }

    \subfigure[]{
        \includegraphics[width=.1\textwidth]{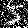}
    }
    \subfigure[]{
        \includegraphics[width=.1\textwidth]{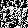}
    }
    \subfigure[]{
        \includegraphics[width=.1\textwidth]{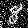}
    }
    \subfigure[]{
        \includegraphics[width=.1\textwidth]{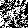}
    }
    \caption{Original (top) and corresponding recovered (bottom) images of LeNet on the MNIST dataset. The gradients are perturbed with Gaussian noise of variance $10^{-3}$. 
    The defense is unfair as images of digit 5 and digit 8 can be recognized by the outline.
    }
    \label{fig:demo-unfair-protection}
\end{figure*}

We also show the unfair privacy protection of ResNet18 on the CIFAR10 dataset in \cref{fig:unfair-resnet18}.
In this experiment, we also observe a large variance of recovery MSE among samples, indicating sample-wise unfairness.
At the class level, we still can find gradients of a few classes to be easily inverted.
For example, class 8 has most MSEs lower than the average value.

\begin{figure*}
\centering
    \includegraphics[width=.24\textwidth]{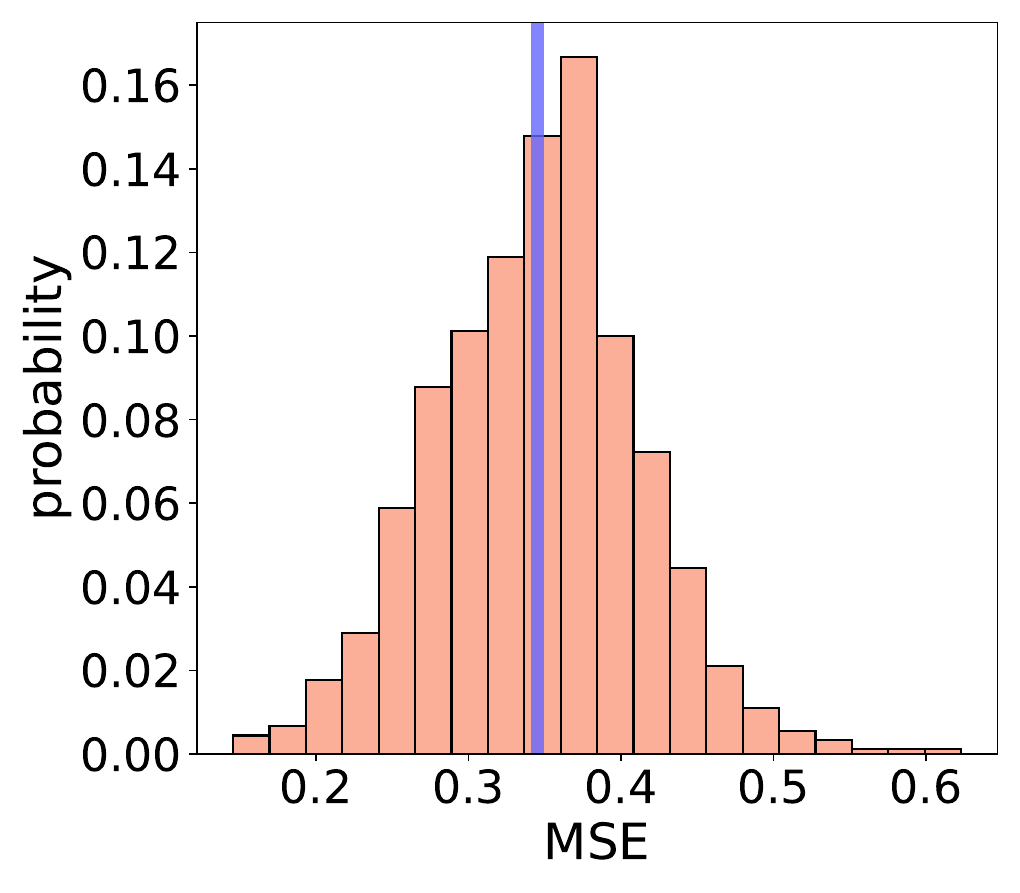}
    \includegraphics[width=.24\textwidth]{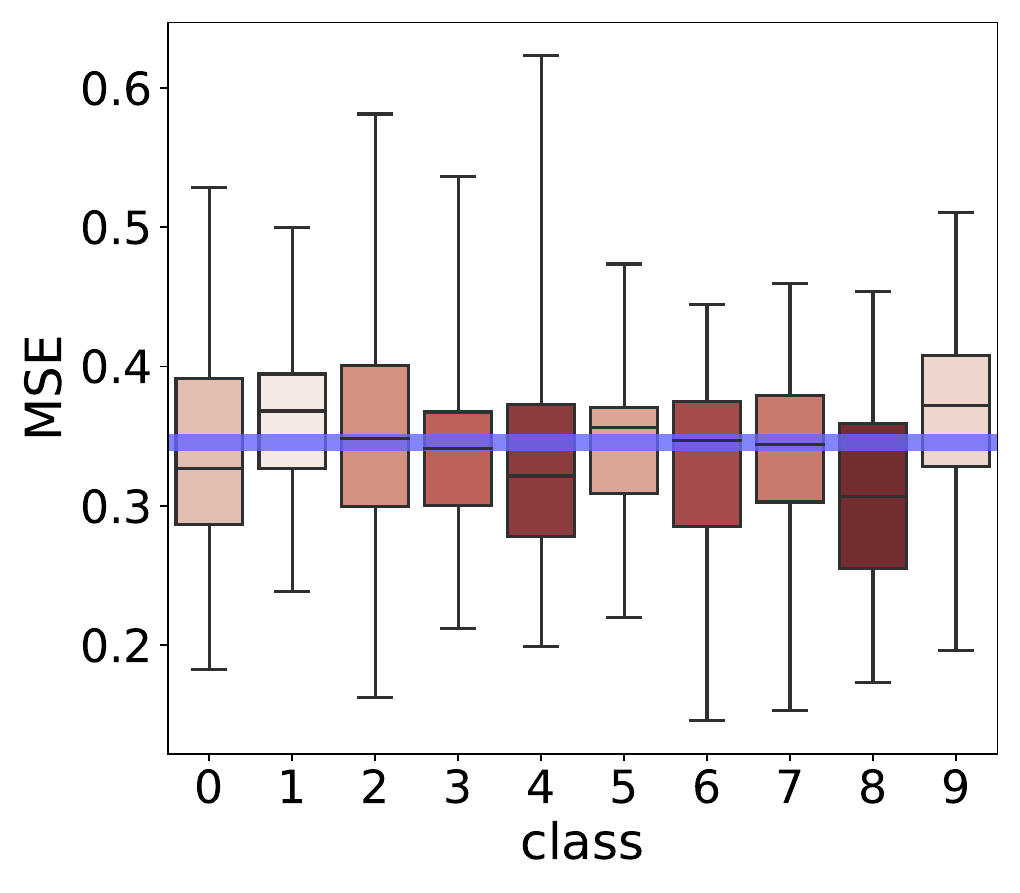}
    
    \includegraphics[width=.1\textwidth]{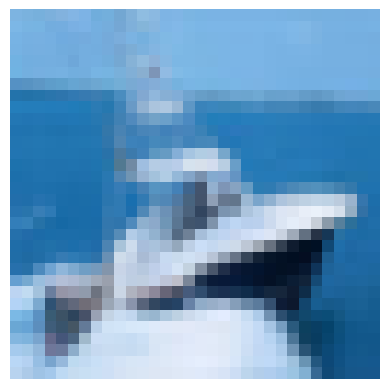}
    \includegraphics[width=.1\textwidth]{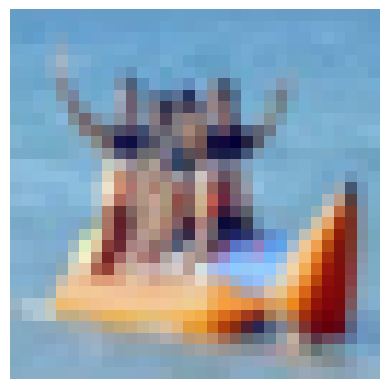}
    \includegraphics[width=.1\textwidth]{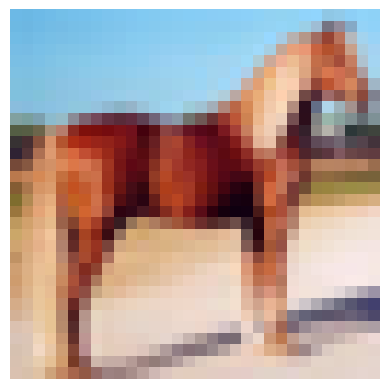}
    \includegraphics[width=.1\textwidth]{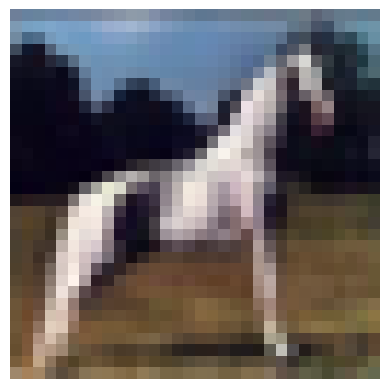}
    \includegraphics[width=.1\textwidth]{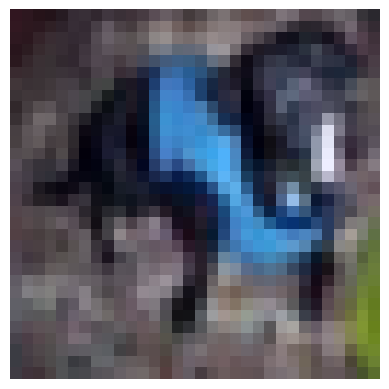}
    \includegraphics[width=.1\textwidth]{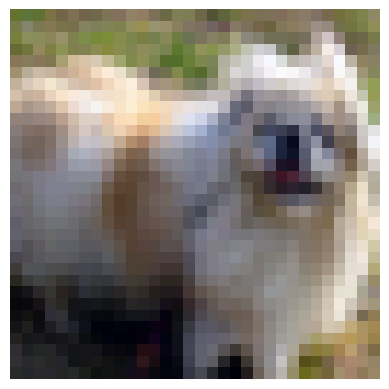}
    \includegraphics[width=.1\textwidth]{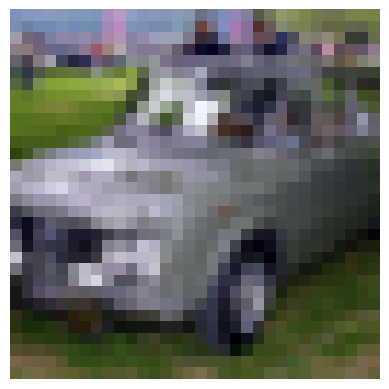}
    \includegraphics[width=.1\textwidth]{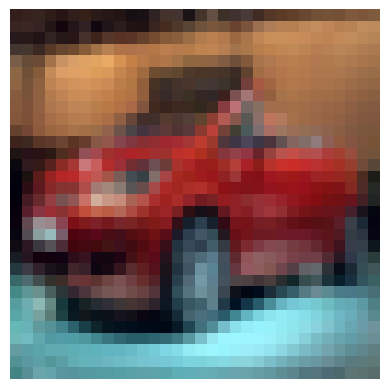}
    
    \includegraphics[width=.1\textwidth]{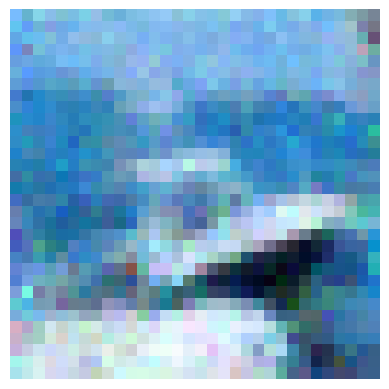}
    \includegraphics[width=.1\textwidth]{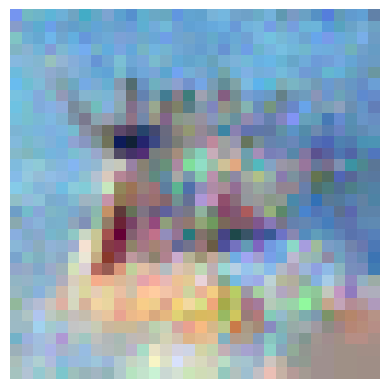}
    \includegraphics[width=.1\textwidth]{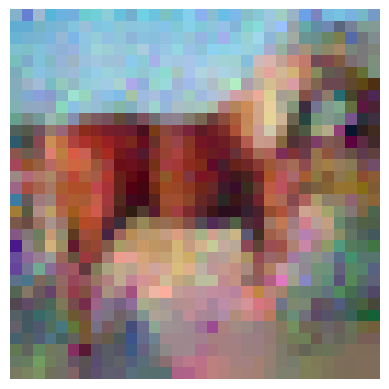}
    \includegraphics[width=.1\textwidth]{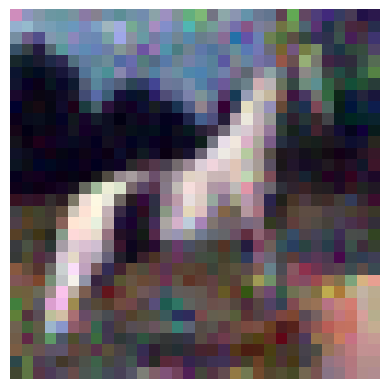}
    \includegraphics[width=.1\textwidth]{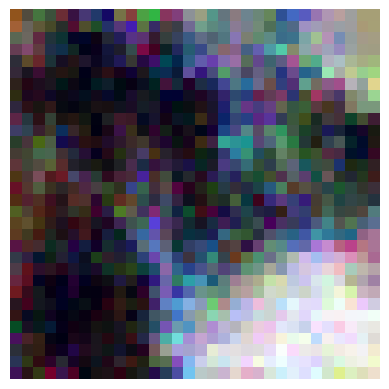}
    \includegraphics[width=.1\textwidth]{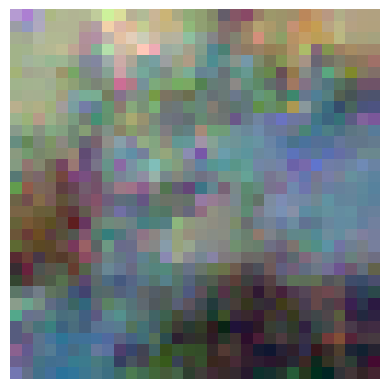}
    \includegraphics[width=.1\textwidth]{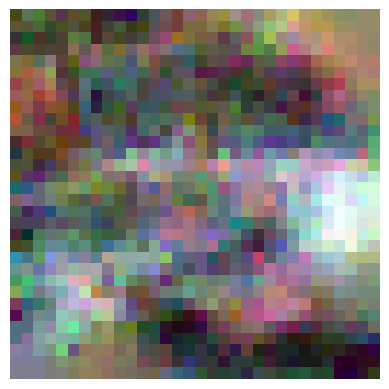}
    \includegraphics[width=.1\textwidth]{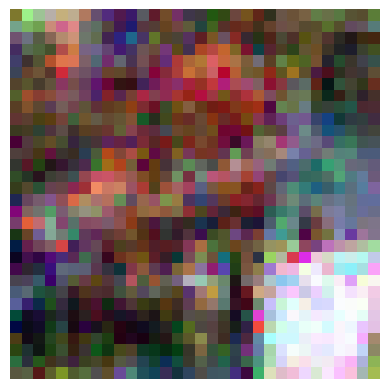}
    \caption{The sample-wise and class-wise statistics of the GS MSE on the CIFAR10 dataset of ResNet18. The purple lines indicate the average values. Large variances are observed among samples. 
    The original (first row) and recovered (second row) images for the well- and poorly-protected classes are depicted at the bottom.
    The defense is unfair as some classes, e.g., class 7 (horse) and class 8 (ship), are more vulnerable to inversion attacks.
    }
    \label{fig:unfair-resnet18}
\end{figure*}

\section{I$^2$F with Gradient Pruning Defense}
We present the relationship between the RMSE and $\mathcal{I}_{lb}$ in \cref{fig:bound-pruning}. The y-axis is RMSE and the x-axis is $\mathcal{I}_{lb}$.
It shows $\mathcal{I}_{lb}$ can be used to estimate the worst-case privacy risk with gradient pruning defense.

\def\pruningsz{0.22}
\begin{figure}[htpb]
\centering
  \subfigure[RN18-CIFAR10-GS]{
    \includegraphics[width=\pruningsz\textwidth]{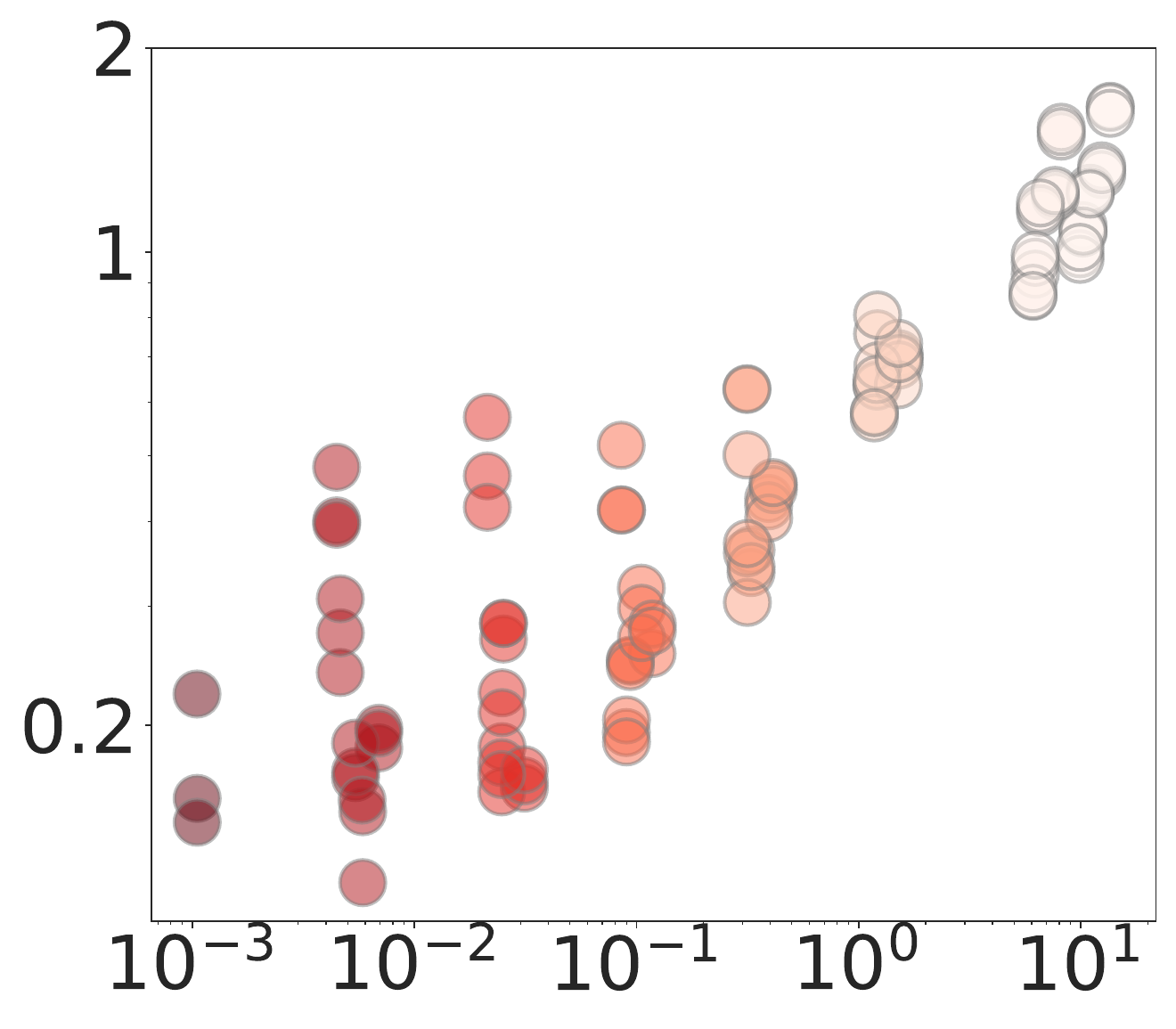}
    \label{fig:bound-pruning-resnet18-cifar10-sim}
    }
  \subfigure[RN18-CIFAR10-DGL]{
    \includegraphics[width=\pruningsz\textwidth]{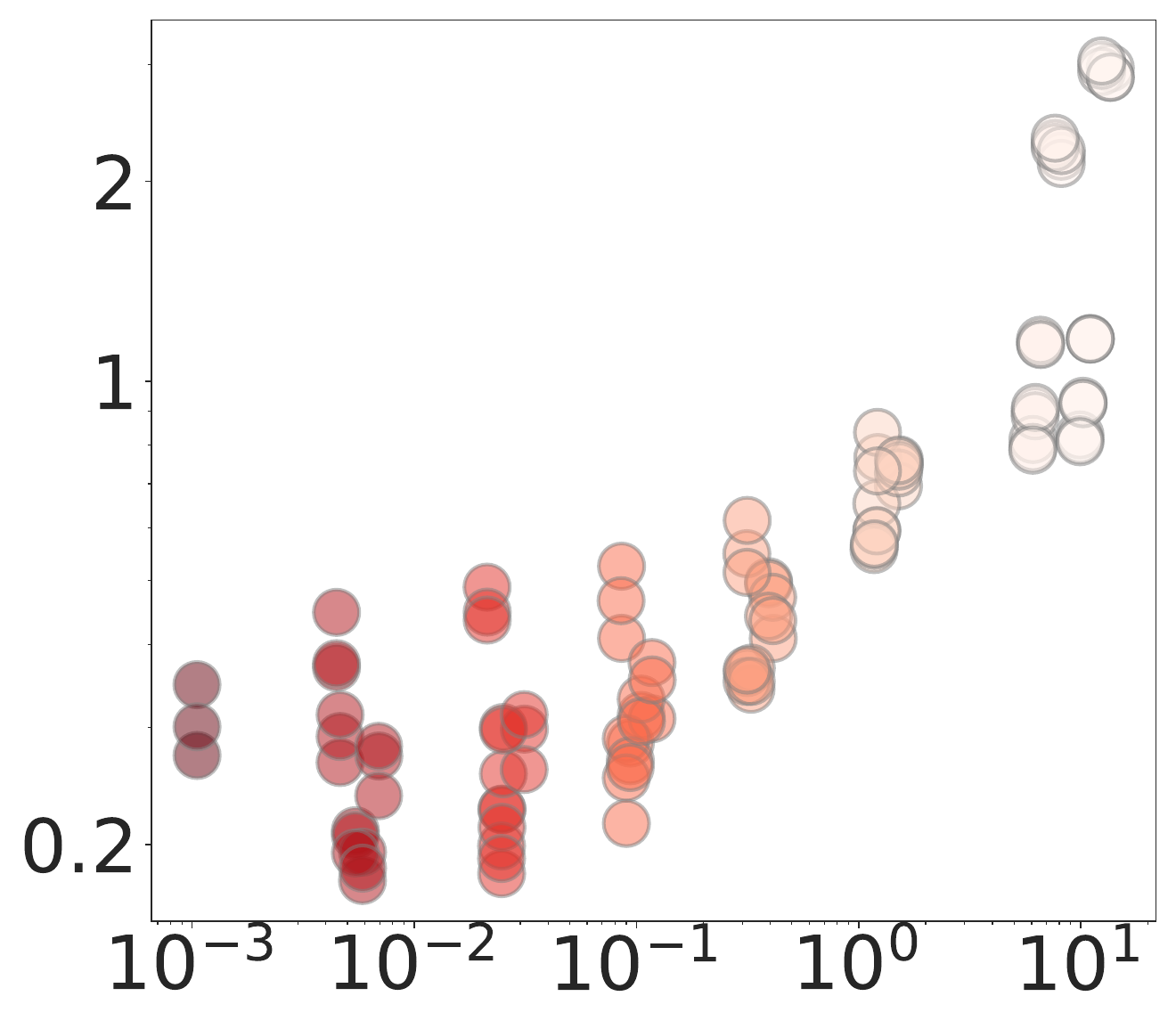}
    \label{fig:bound-pruning-resnet18-cifar10-l2}
    }
  \subfigure[RN18-MNIST-GS]{
    \includegraphics[width=\pruningsz\textwidth]{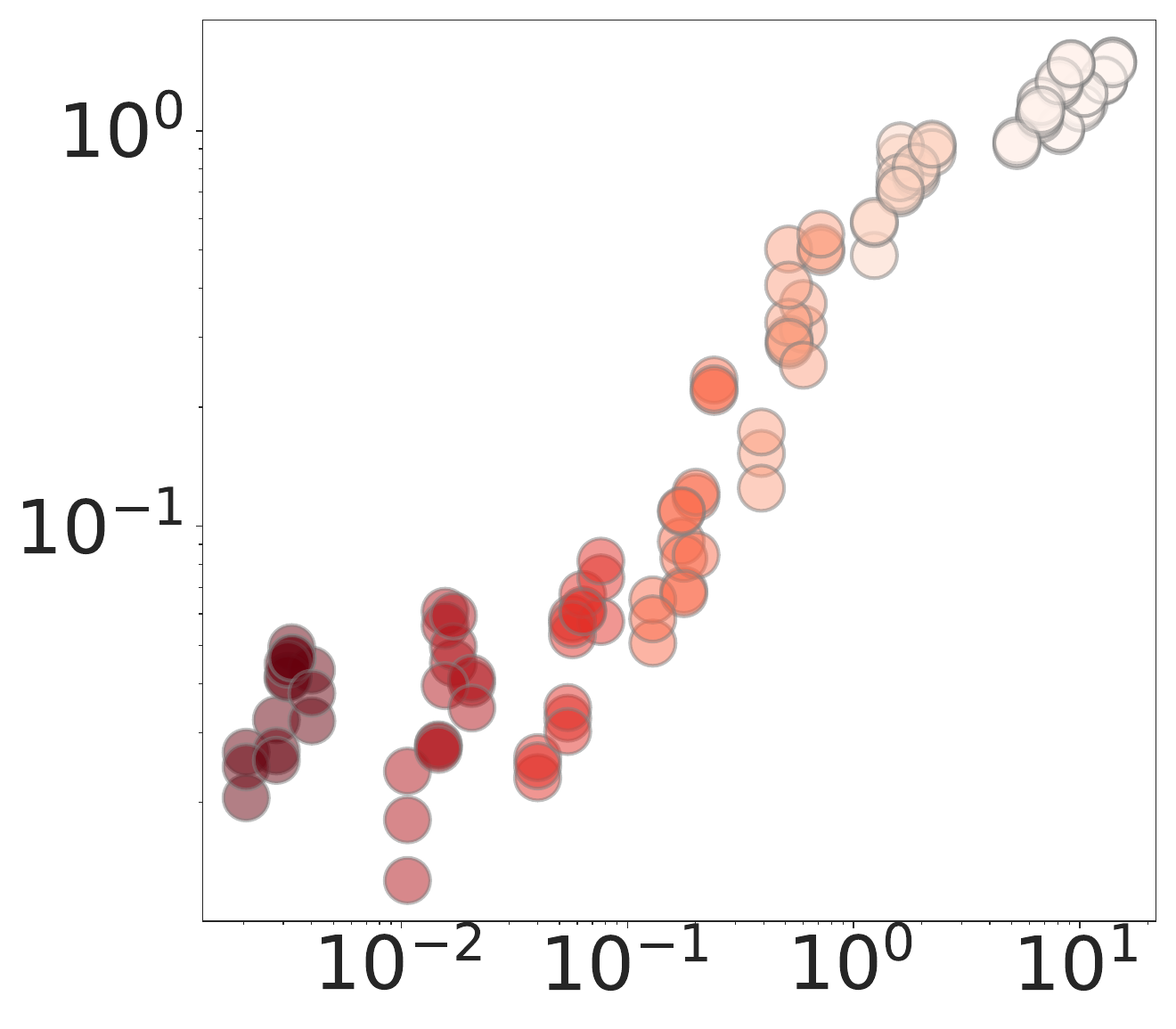}
    \label{fig:bound-pruning-resnet18-mnist-sim}
    }
  \subfigure[RN18-MNIST-DGL]{
    \includegraphics[width=\pruningsz\textwidth]{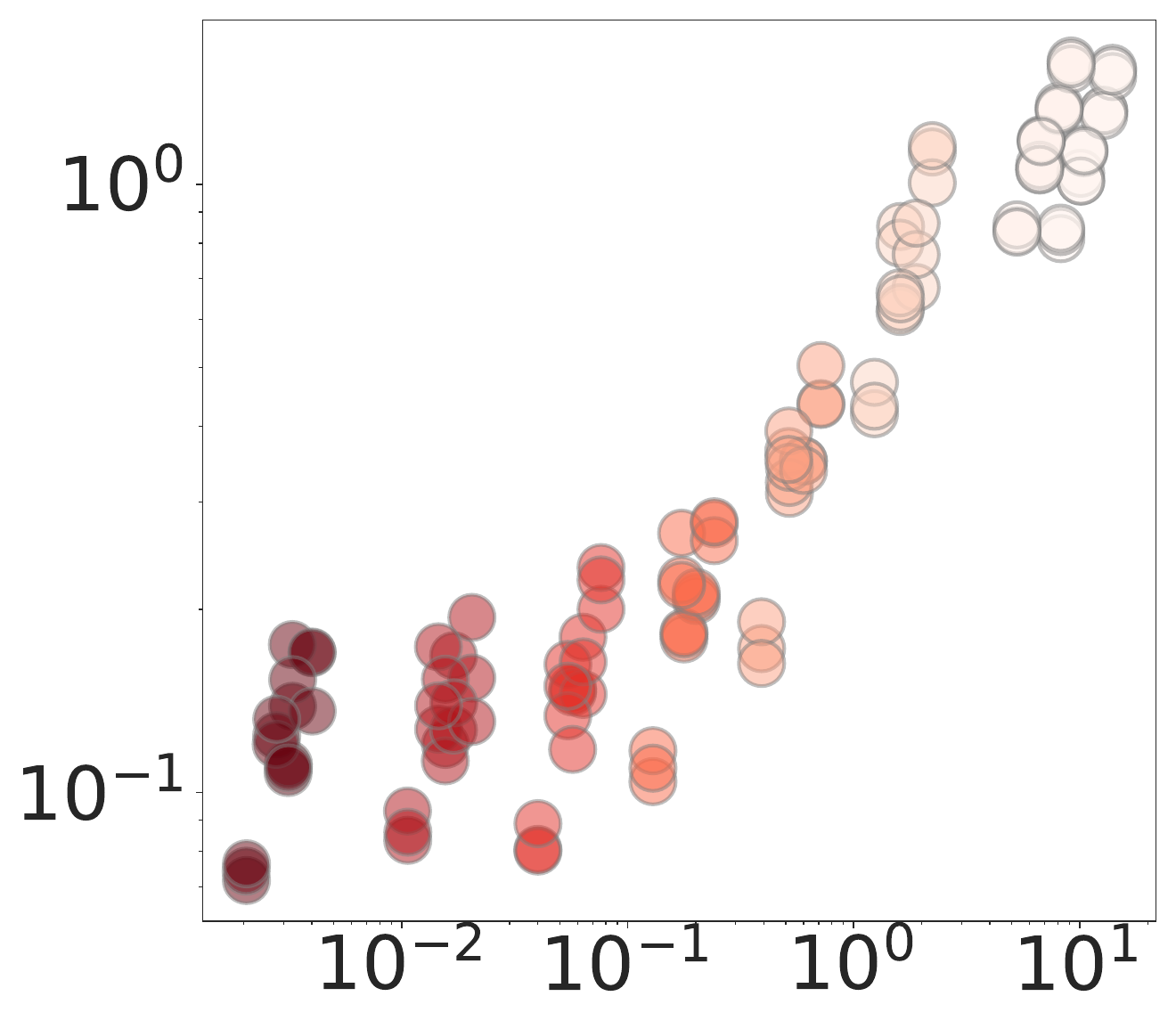}
    \label{fig:bound-pruning-resnet18-mnist-l2}
    }
  \subfigure[LeNet-MNIST-GS]{
    \includegraphics[width=\pruningsz\textwidth]{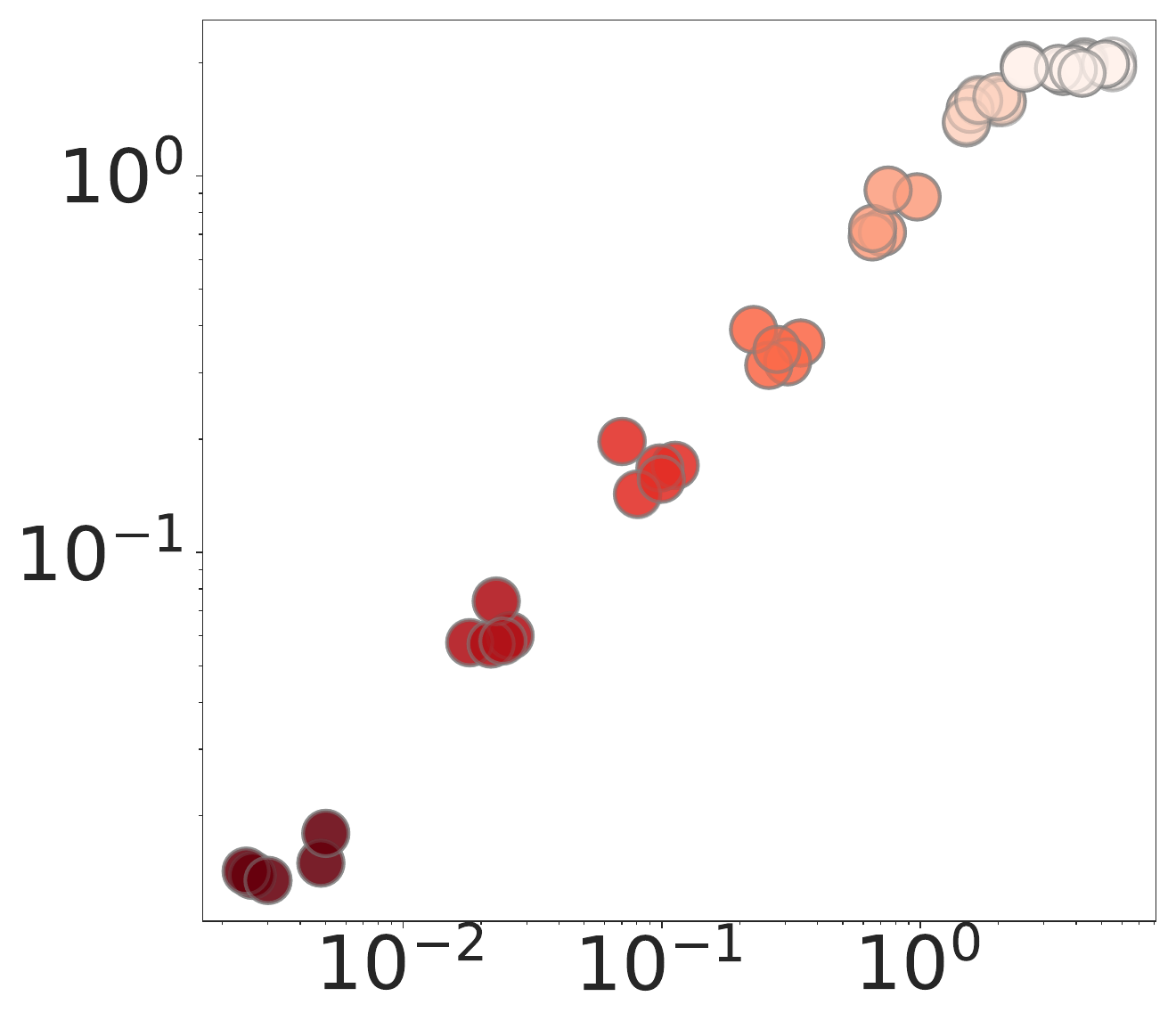}
    \label{fig:bound-pruning-lenet-mnist-sim}
    }
  \subfigure[LeNet-MNIST-DGL]{
    \includegraphics[width=\pruningsz\textwidth]{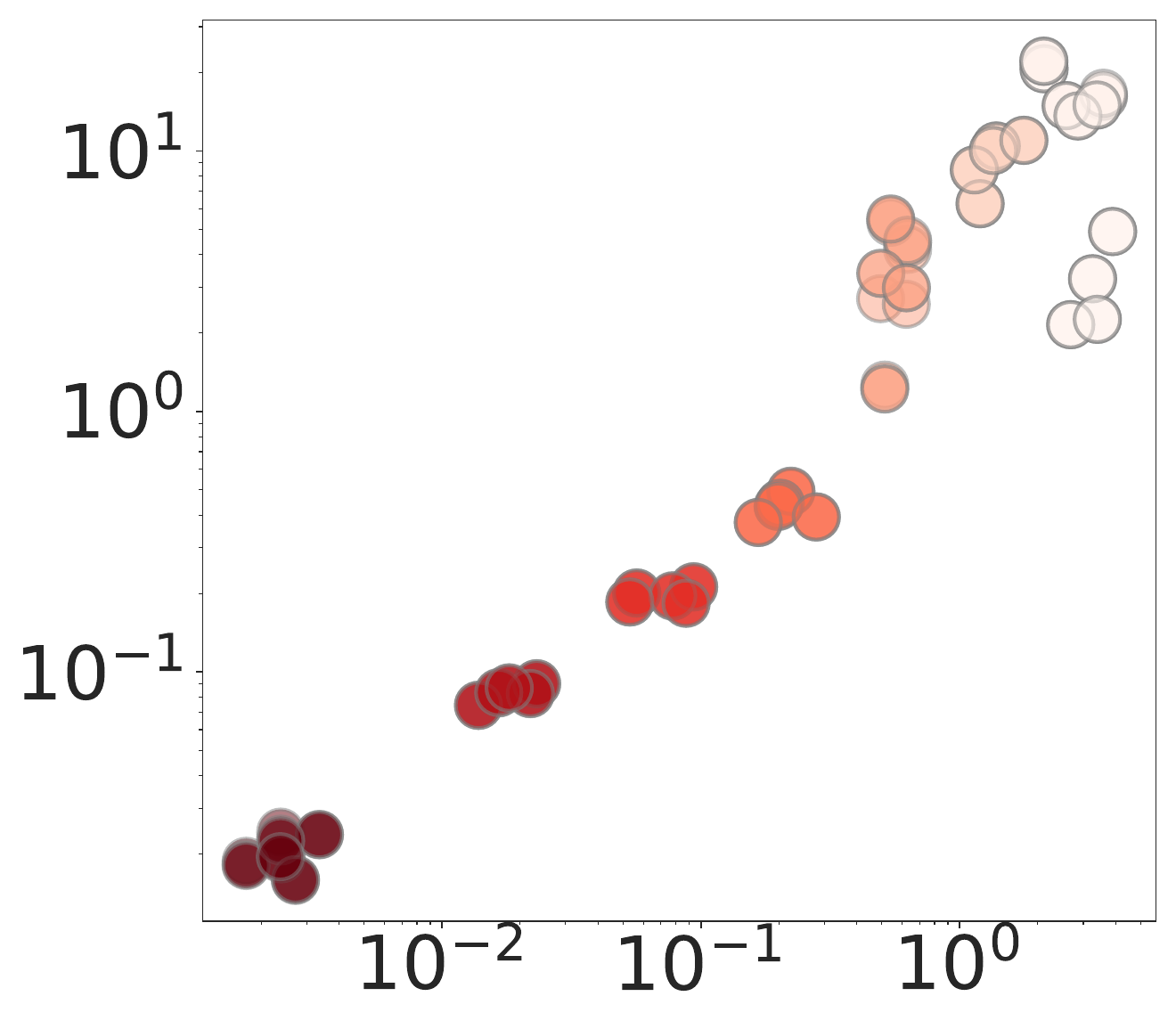}
    \label{fig:bound-pruning-lenet-mnist-l2}
    }
  \subfigure[LeNet-CIFAR10-GS]{
    \includegraphics[width=\pruningsz\textwidth]{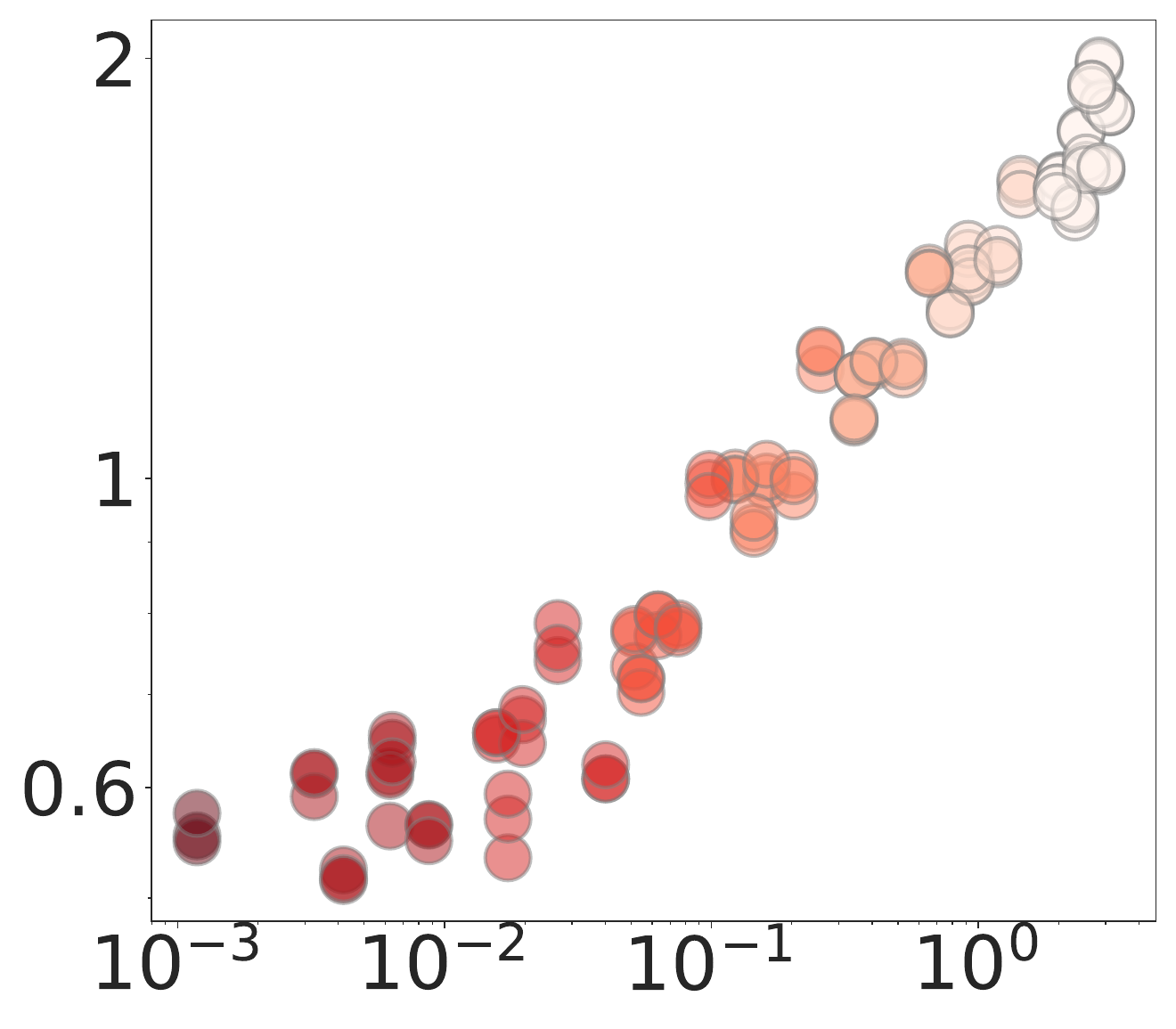}
    \label{fig:bound-pruning-lenet-cifar10-sim}
    }
  \subfigure[LeNet-CIFAR10-DGL]{
    \includegraphics[width=\pruningsz\textwidth]{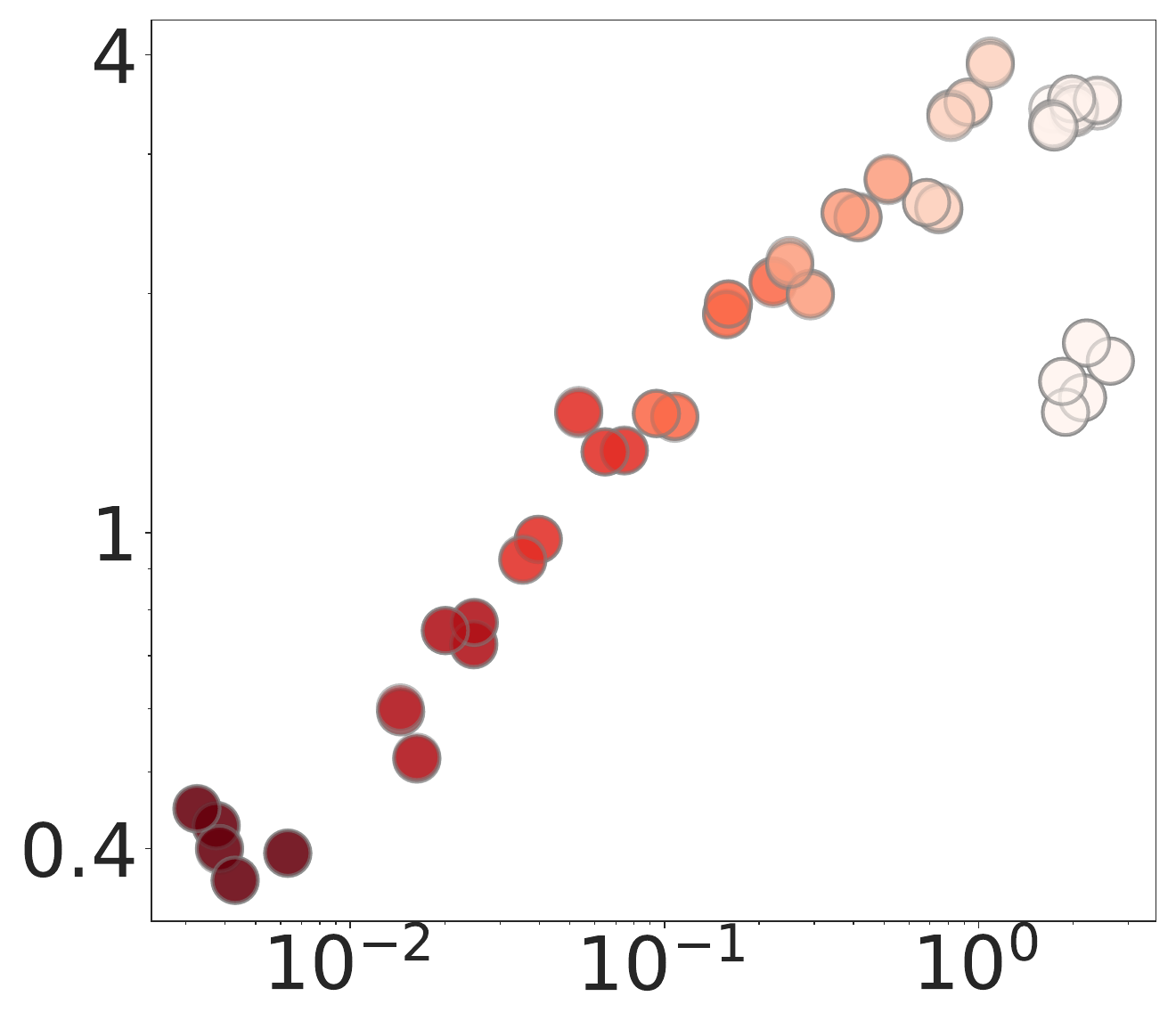}
    \label{fig:bound-pruning-lenet-cifar10-l2}
    }
    \caption{
    RMSE (y-axis) vs. $\mathcal{I}_{lb}$ (x-axis) with gradient pruning. A darker color indicates a smaller pruning ratio. It shows that $\mathcal{I}_{lb}$ is a good estimator of RMSE.
    }
    \label{fig:bound-pruning}
\end{figure}

\section{Comparison of I$^2$F with More Metrics}
MSE is a pixel-wise distance that lacks semantic and structural information.
To evaluate the effectiveness of I$^2$F on more metrics, we consider SSIM and LPIPS~\citep{zhang2018unreasonable} to measure the structural similarity and semantic distance between the original and recovered images, respectively.
The relationship between SSIM and LPIPS is shown in \cref{fig:bound-semantic}.
Since $\mathcal{I}_{lb}$ aims to lower bound the privacy risk in terms of RMSE, $\mathcal{I}_{lb}$ does not have the same scale as SSIM and LPIPS. 
Nevertheless, $\mathcal{I}_{lb}$ also has a positive correlation between SSIM and LPIPS, which implies that it is a good estimator for the structural similarity and semantic distance between the original and recovered images.

\def\semanticsz{0.2}
\begin{figure}[htpb]
  \subfigure[RN18-CIFAR10-GS]{
    \includegraphics[width=\semanticsz\textwidth]{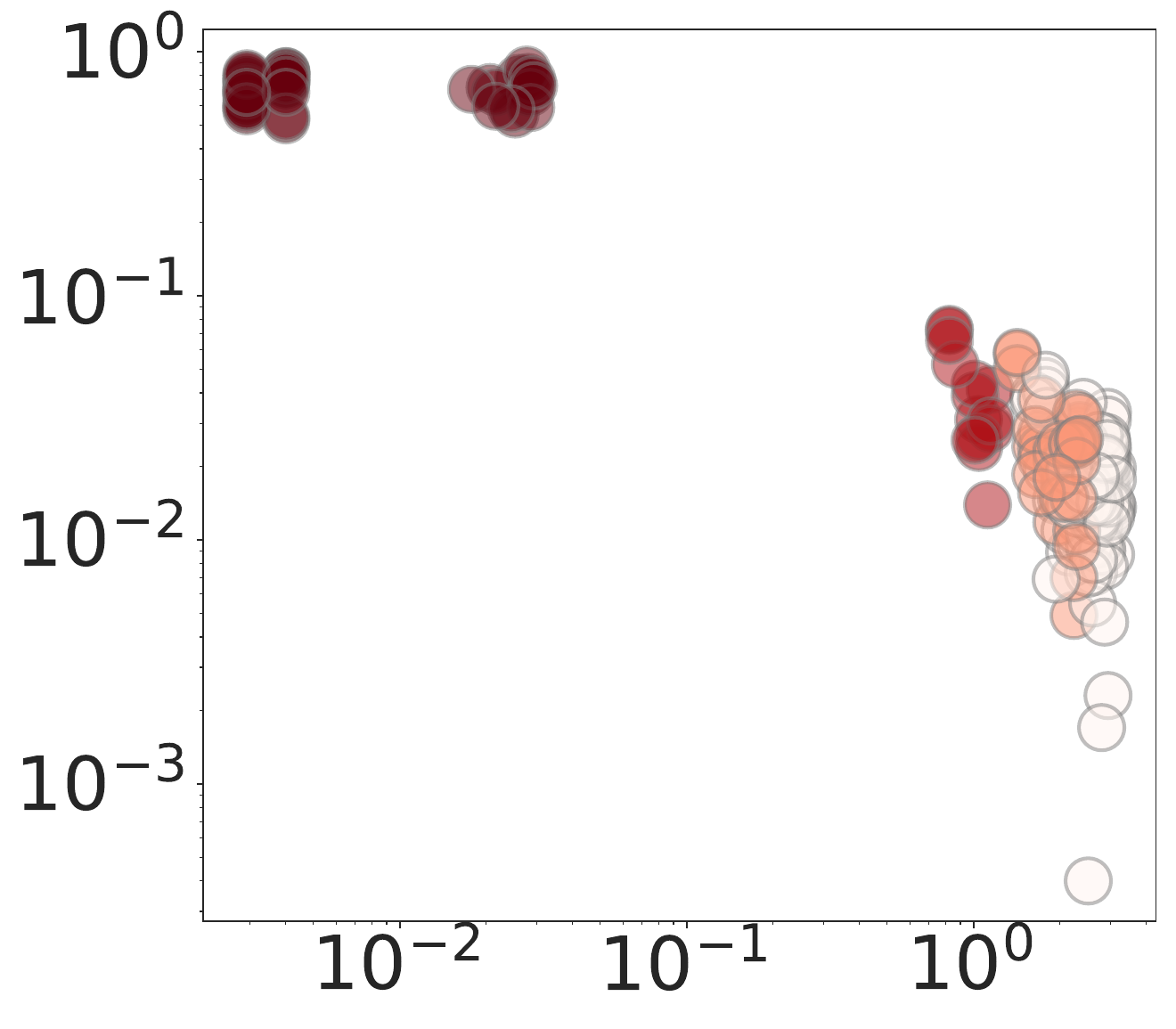}
    \label{fig:bound-ssim-resnet18-cifar10-sim}
    }
  \subfigure[RN18-CIFAR10-DGL]{
    \includegraphics[width=\semanticsz\textwidth]{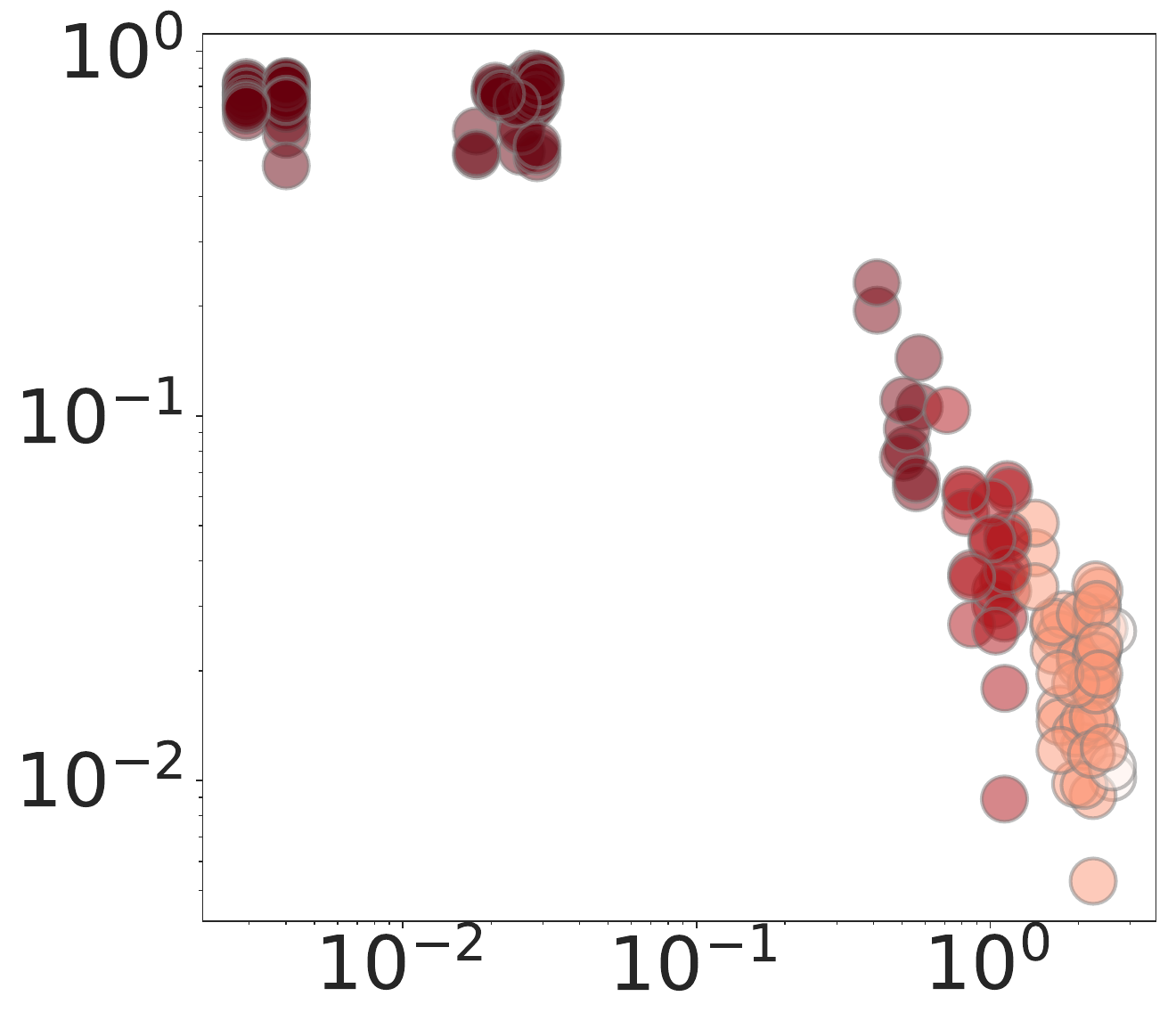}
    \label{fig:bound-ssim-resnet18-cifar10-l2}
    }
  \subfigure[LeNet-MNIST-GS]{
    \includegraphics[width=\semanticsz\textwidth]{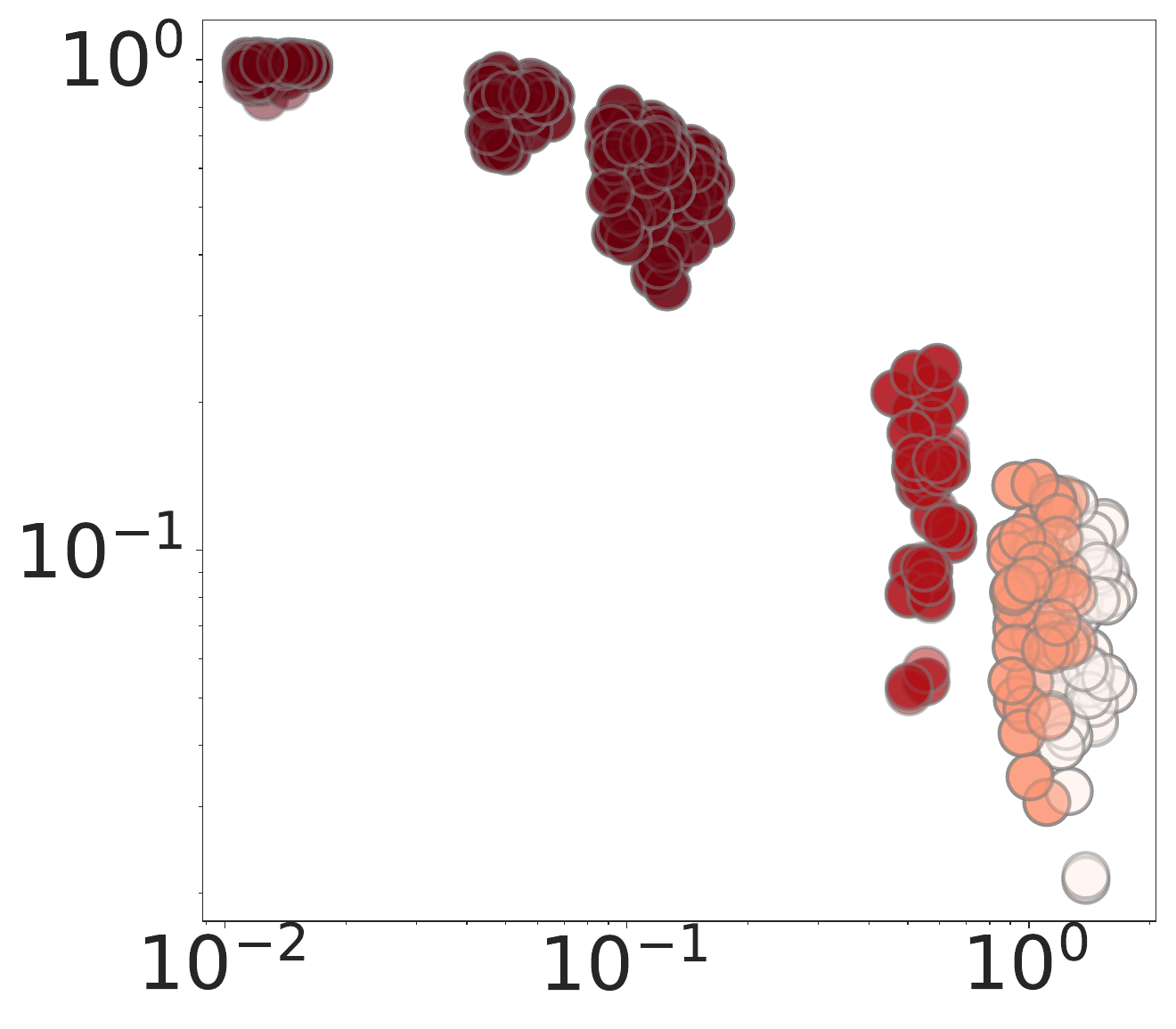}
    \label{fig:bound-ssim-conv5-mnist-sim}
    }
  \subfigure[LeNet-MNIST-DGL]{
    \includegraphics[width=\semanticsz\textwidth]{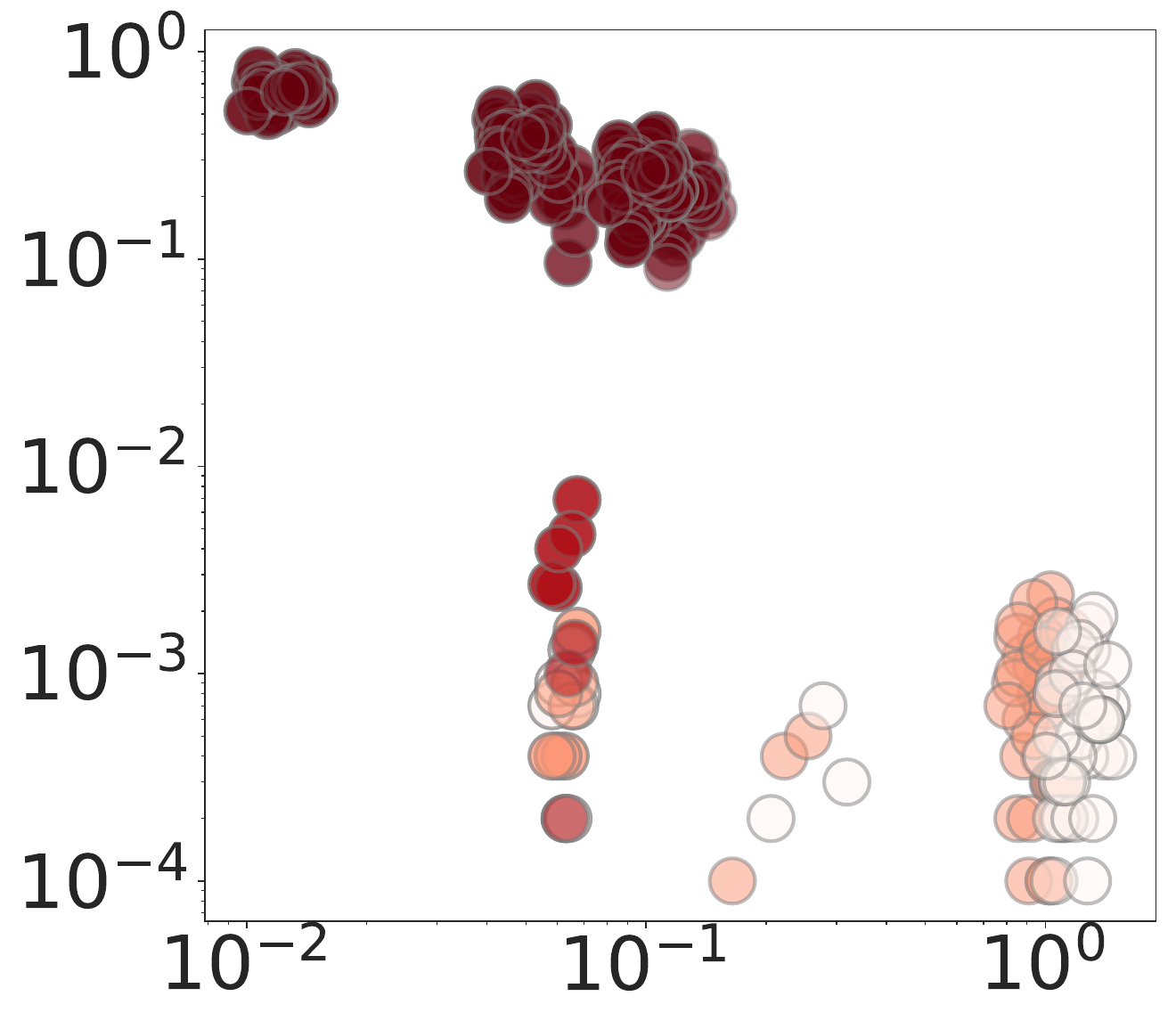}
    \label{fig:bound-ssim-conv5-mnist-l2}
    }
  \subfigure[RN18-CIFAR10-GS]{
    \includegraphics[width=\semanticsz\textwidth]{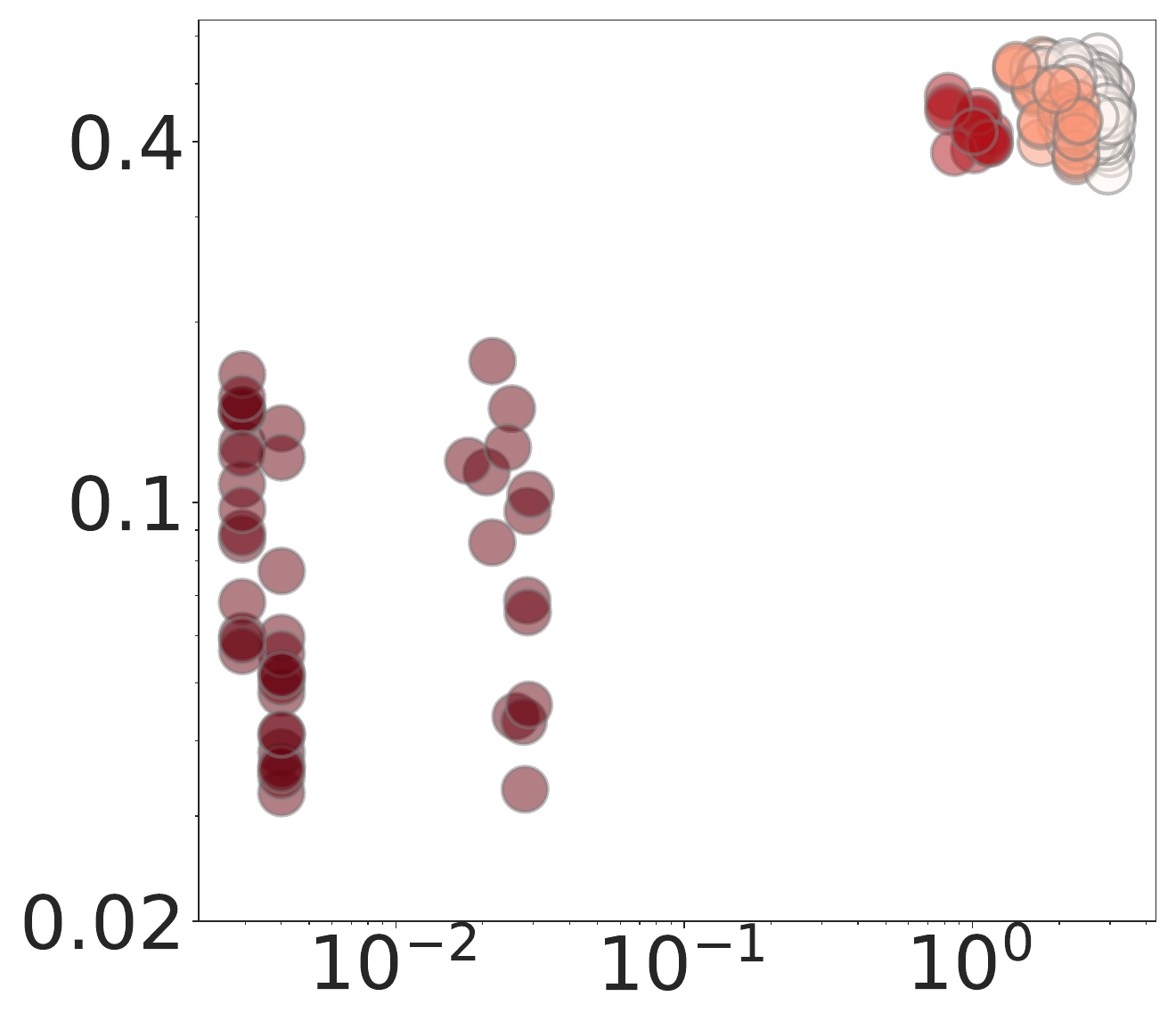}
    \label{fig:bound-vgg-resnet18-cifar10-sim}
    }
    \hfill
  \subfigure[RN18-CIFAR10-DGL]{
    \includegraphics[width=\semanticsz\textwidth]{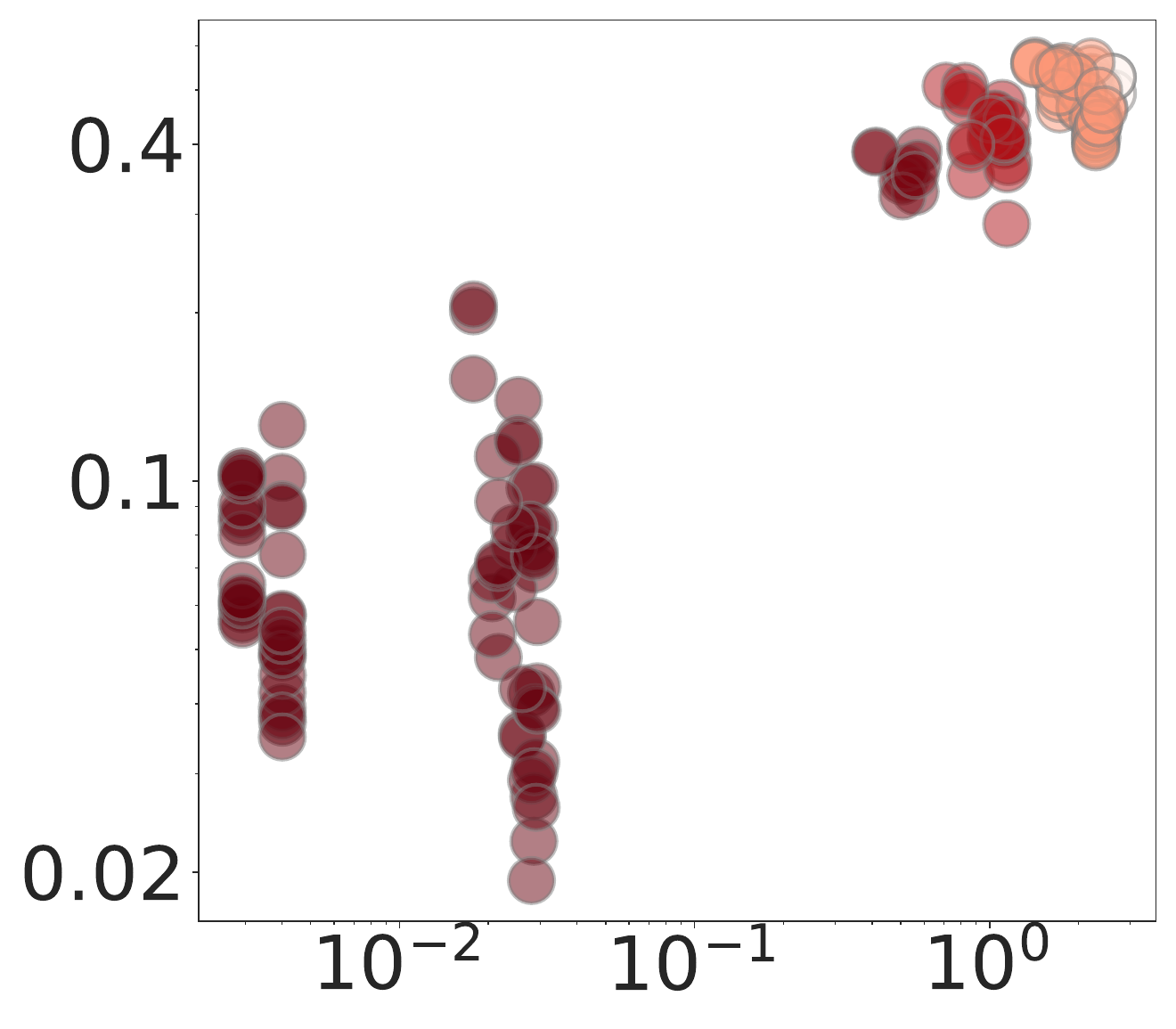}
    \label{fig:bound-vgg-resnet18-cifar10-l2}
    }
    \hfill
  \subfigure[LeNet-MNIST-GS]{
    \includegraphics[width=\semanticsz\textwidth]{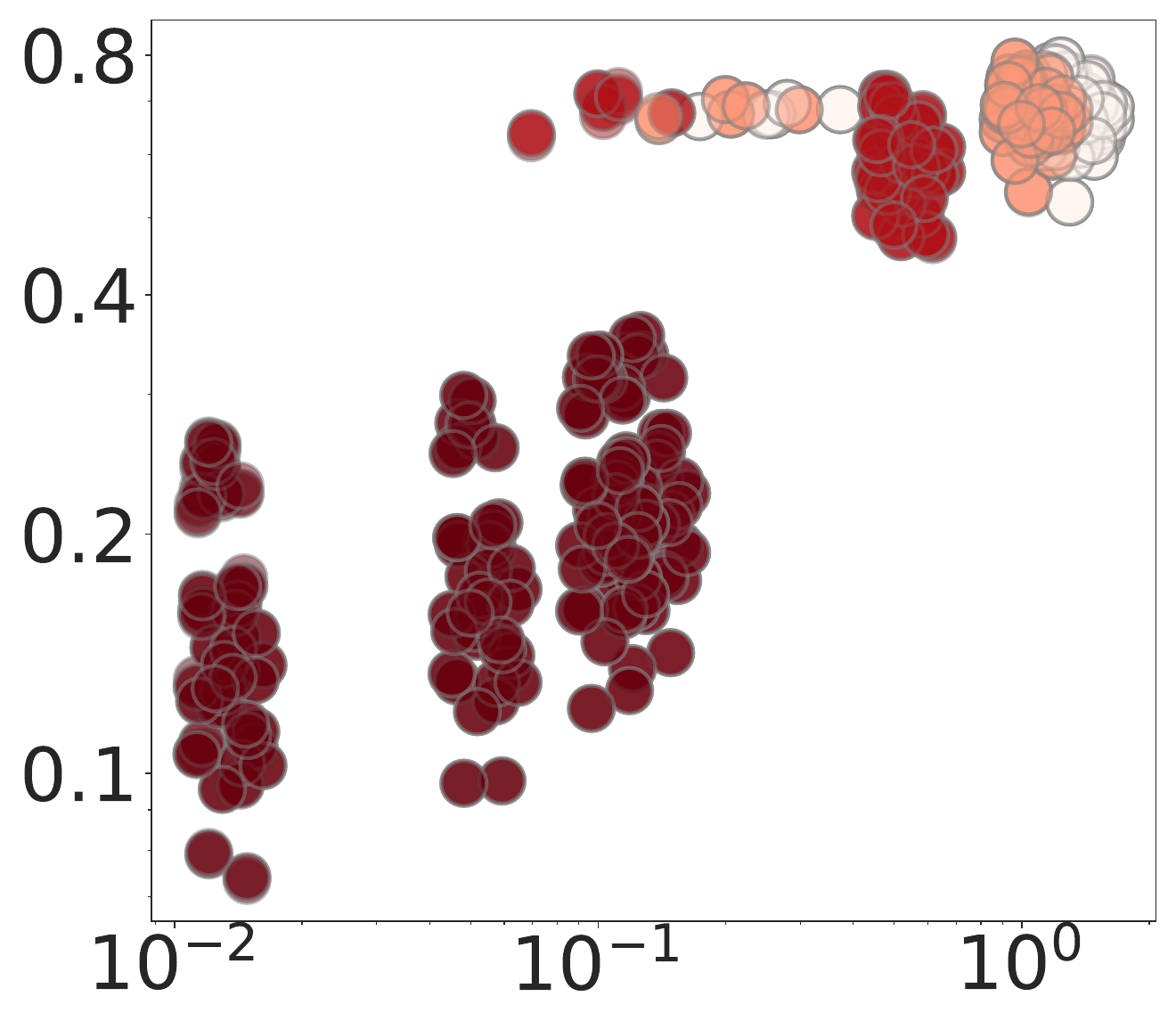}
    \label{fig:bound-vgg-conv5-mnist-sim}
    }
    \hfill
  \subfigure[LeNet-MNIST-DGL]{
    \includegraphics[width=\semanticsz\textwidth]{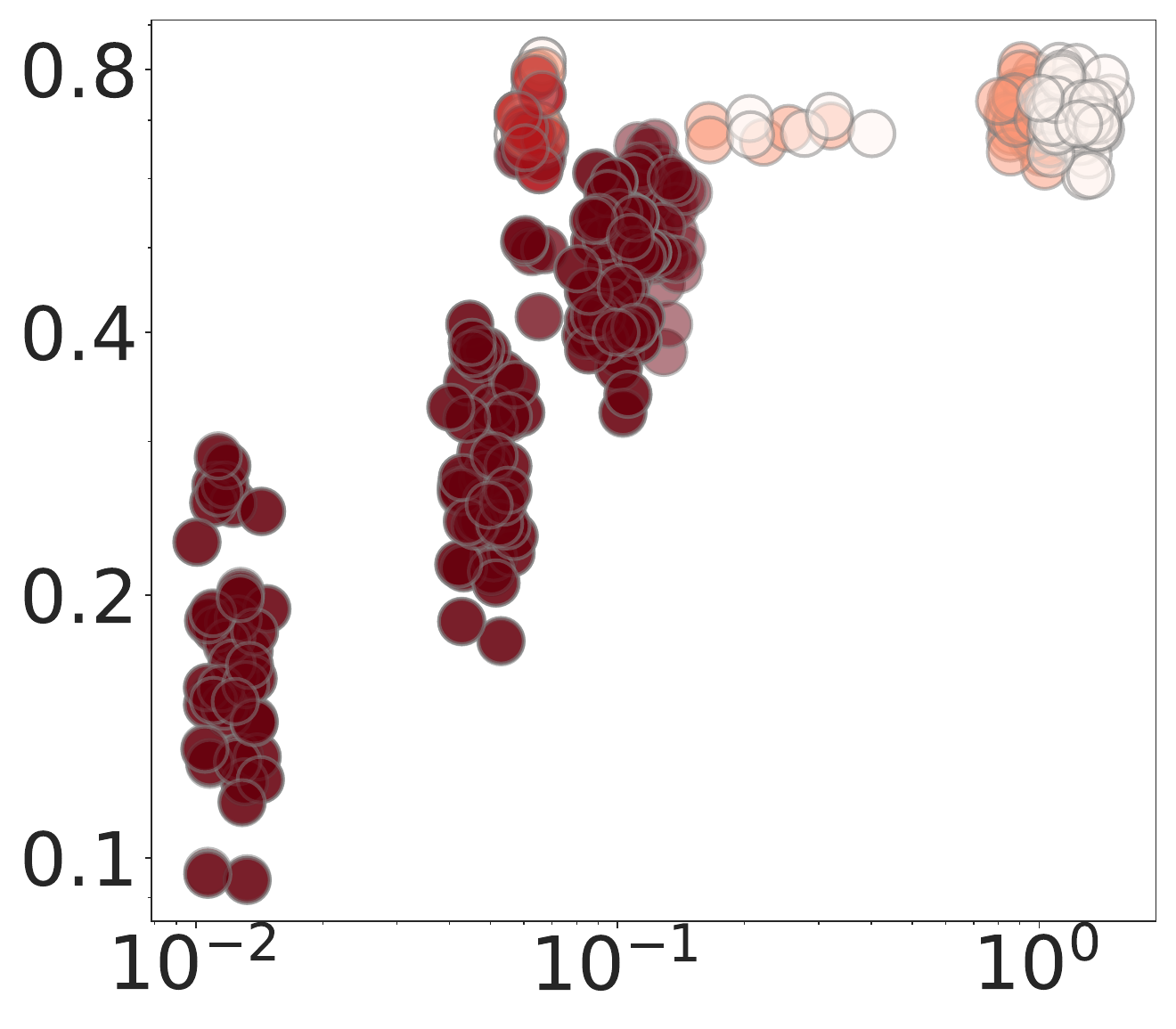}
    \label{fig:bound-vgg-conv5-mnist-l2}
    }
    \caption{
    $\mathcal{I}_{lb}$ is positively correlated with these two metrics and is a good estimator for the structural similarity and semantic distance between the original and recovered images.
    Darker color indicates higher variance. \textbf{Top ((a)-(d)):} SSIM (y-axis) vs. $\mathcal{I}_{lb}$ (x-axis). \textbf{Bottom ((e)-(h)):} LPIPS (y-axis) vs. $\mathcal{I}_{lb}$ (x-axis). 
    A higher SSIM and a lower LPIPS indicate a higher privacy risk. 
    }
    \label{fig:bound-semantic}
\end{figure}

\section{Dynamics of $\mathcal{I}_{lb}$ During Training}
Previous existing empirical results show that privacy risk decreases by training epochs~\citep{balunovic2021bayesian, geiping2020inverting}.
We evaluate the dynamics of $\mathcal{I}_{lb}$, RMSE with DGL attack and RMSE during training in \cref{fig:bound-training}.
It shows that as the training epoch increases, the $\mathcal{I}_{lb}$ also has an increasing trend.
While the RMSE of RN18 on the CIFAR10 dataset has a similar trend as $\mathcal{I}_{lb}$, that of LeNet on the MNIST dataset has a significant rise at the epoch 60, which is due to the slower learning speed of LeNet than RN18.
Moreover, almost for all the epochs, there is a sample with low $\mathcal{I}_{lb}$, which again emphasizes the unfairness in privacy protection.

\def\trainingsz{0.33}
\begin{figure}[t]
  \subfigure[RN18 on CIFAR10]{
    \includegraphics[width=\trainingsz\textwidth]{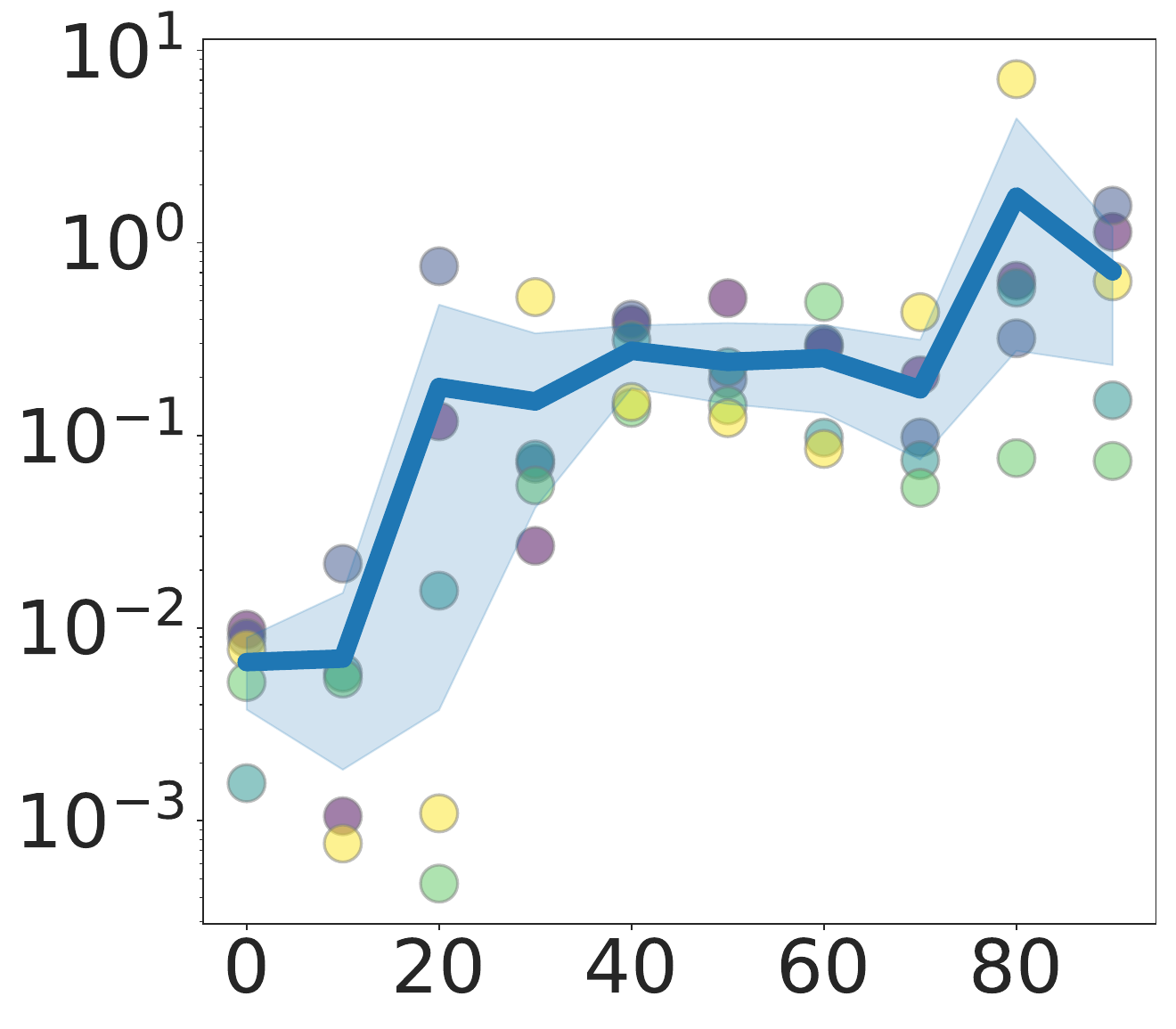}
    \label{fig:bound-training-resnet18-cifar10-comb}
    \hfill
    \includegraphics[width=\trainingsz\textwidth]{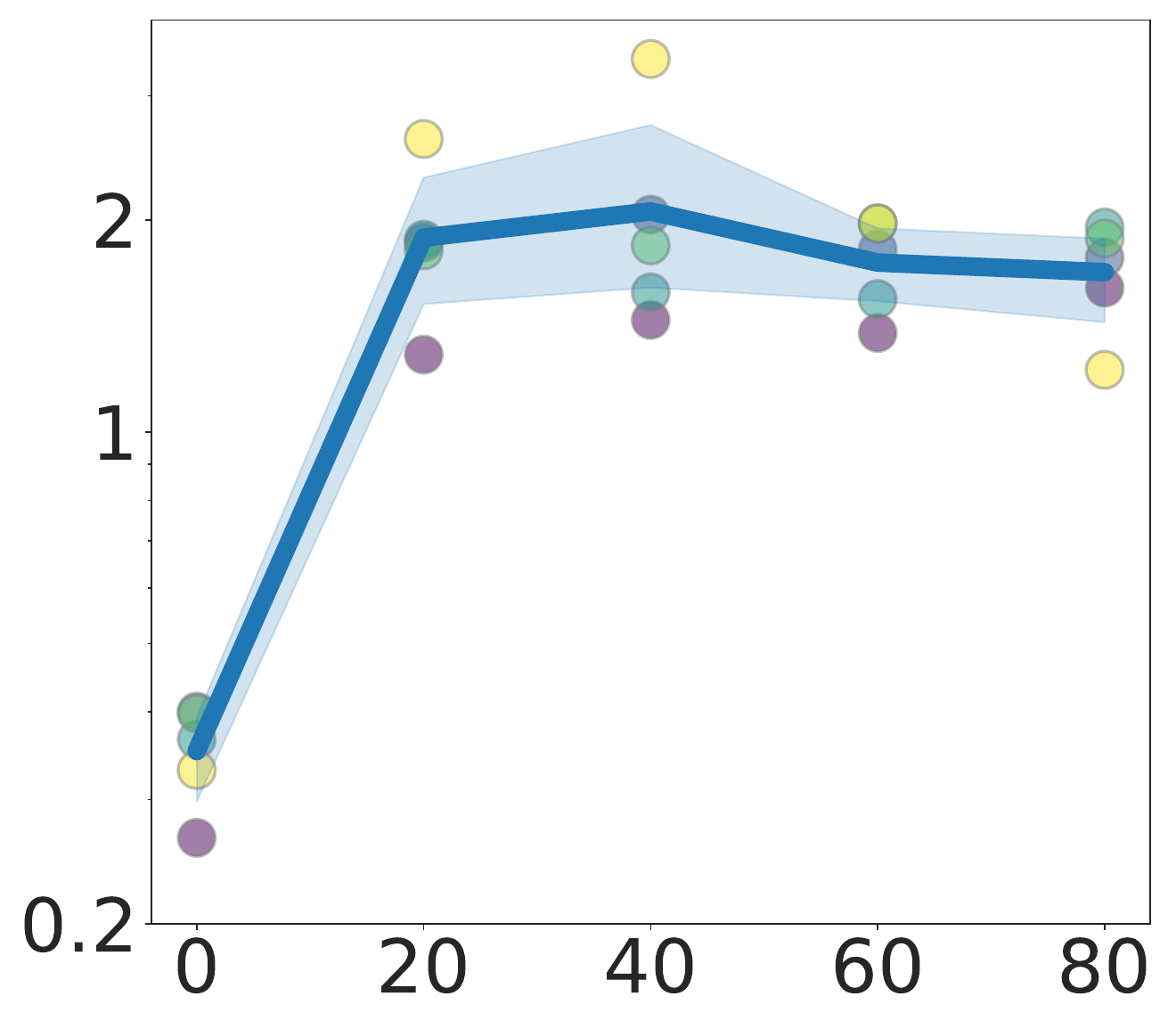}
    \label{fig:bound-training-resnet18-cifar10-l2-mse}
    \hfill
    \includegraphics[width=\trainingsz\textwidth]{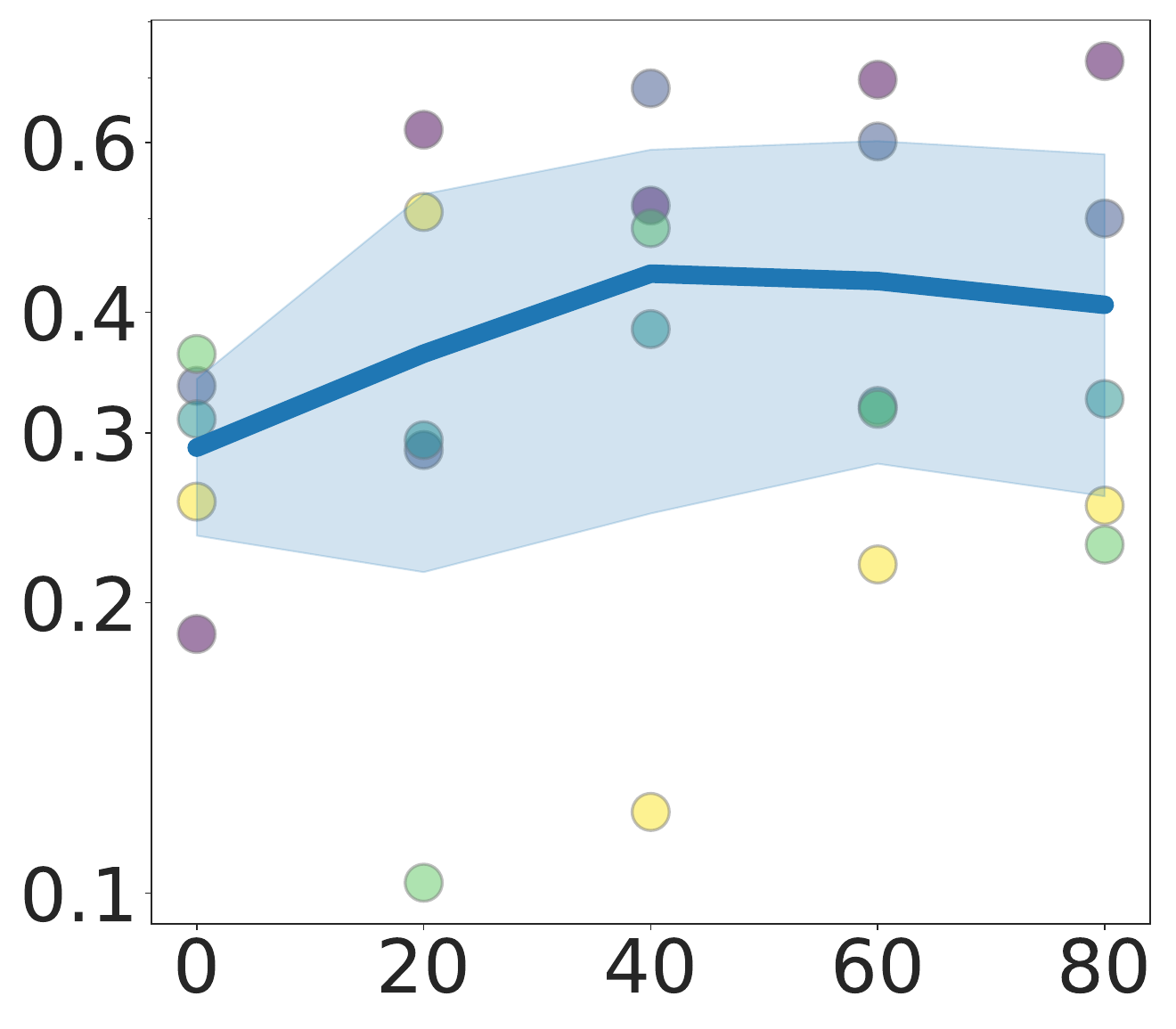}
    \label{fig:bound-training-resnet18-cifar10-sim-mse}
    \hfill
    }
    \hfill
  \subfigure[LeNet on MNIST]{
    \includegraphics[width=\trainingsz\textwidth]{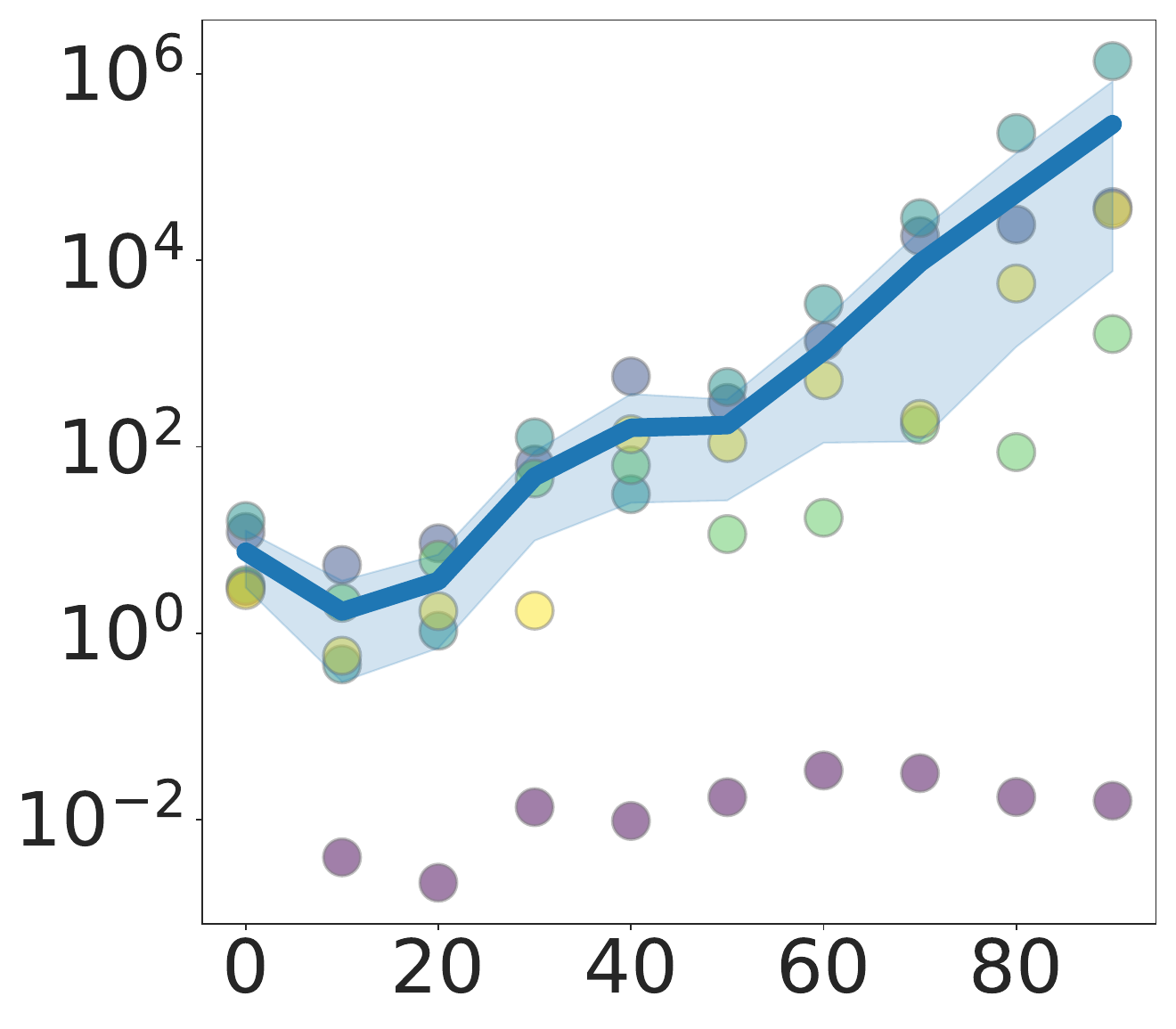}
    \label{fig:bound-training-conv5-mnist-comb}
    \hfill
    \includegraphics[width=\trainingsz\textwidth]{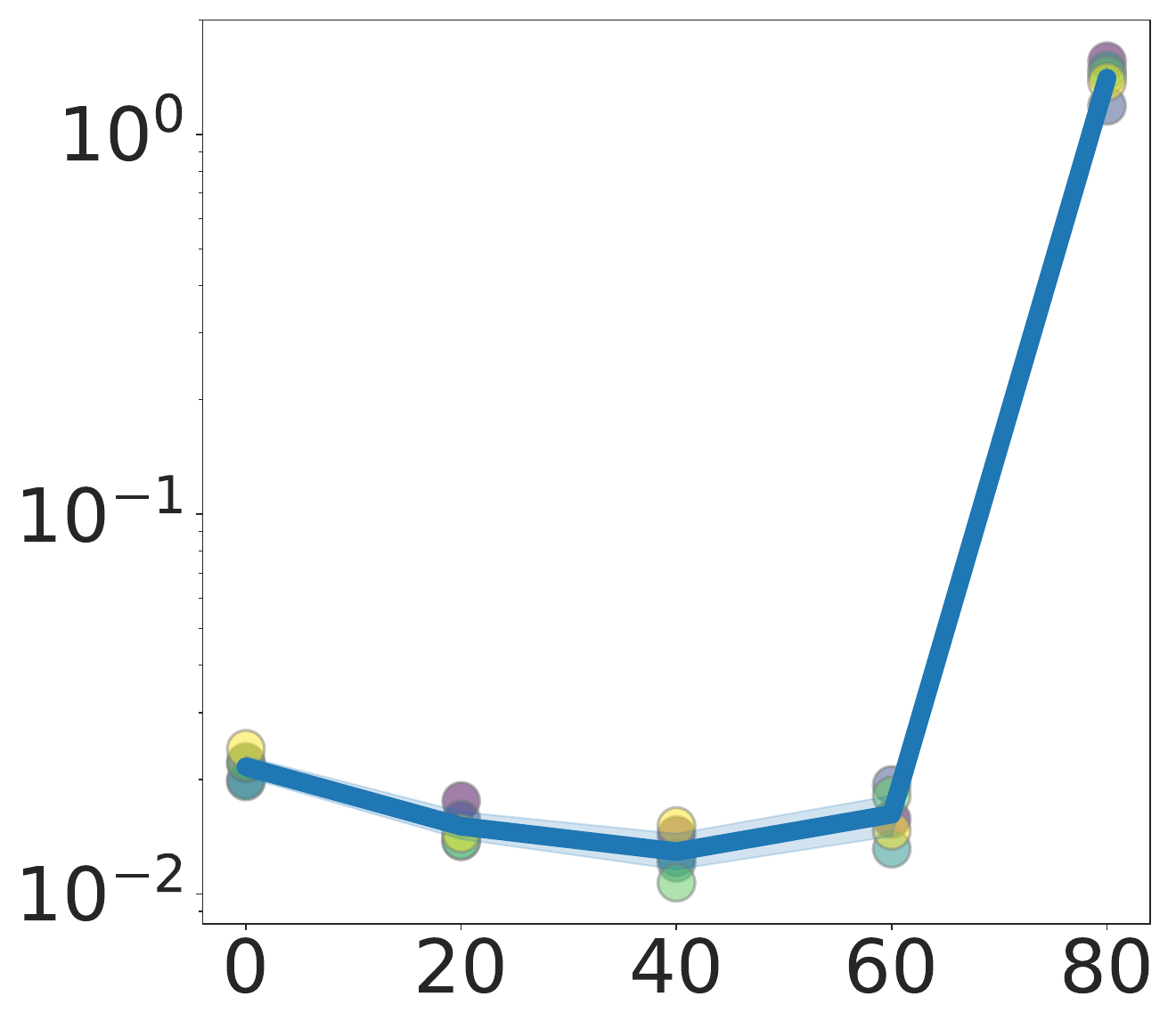}
    \label{fig:bound-training-conv5-mnist-l2-mse}
    \includegraphics[width=\trainingsz\textwidth]{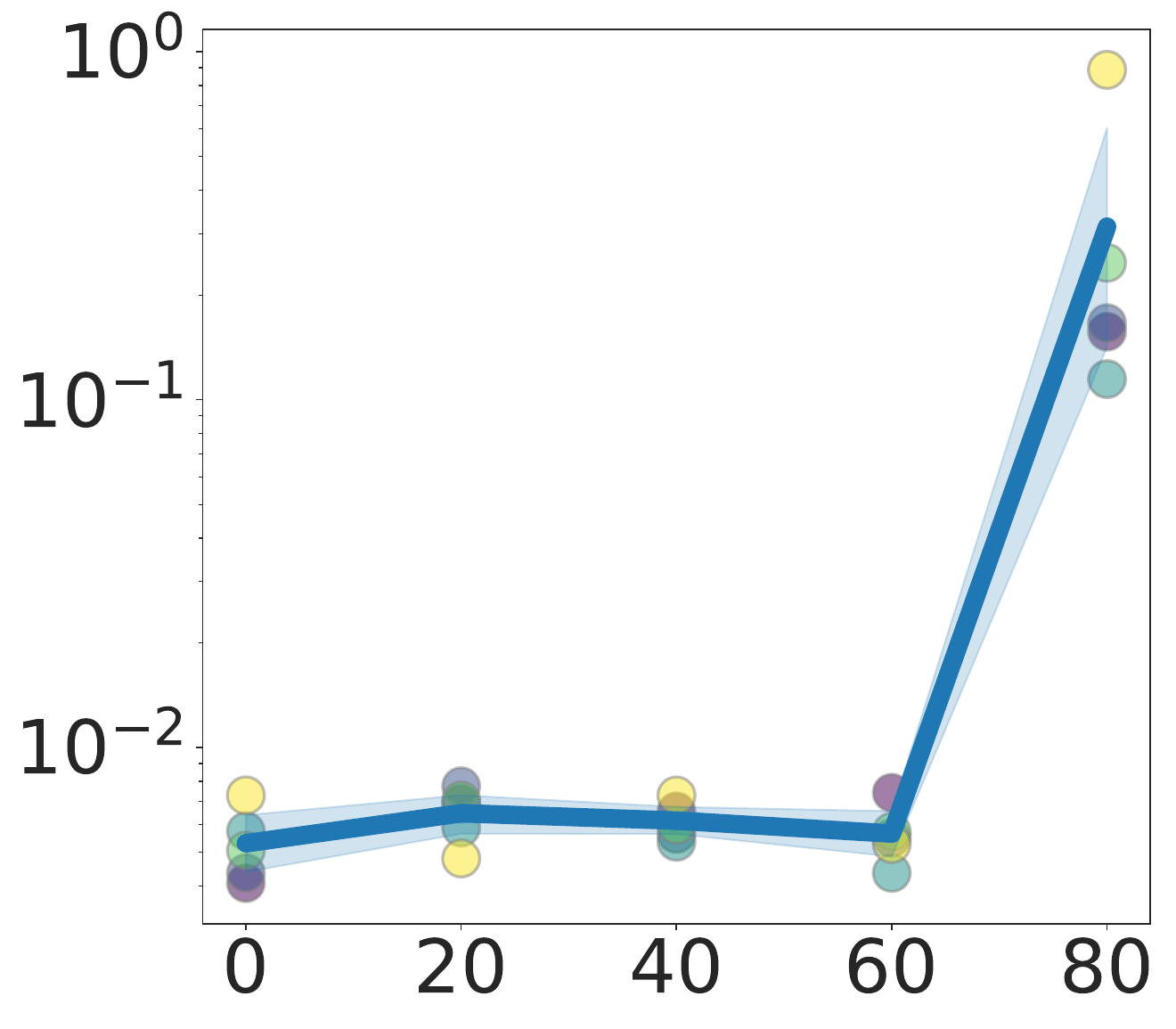}
    \label{fig:bound-training-conv5-mnist-sim-mse}
    \hfill
    }
    \hfill
    \caption{
    Privacy risks decrease by training epochs ($x$-axis). Different colors indicate different samples. The y-axis from left to right: $\mathcal{I}_{lb}$, RMSE w/ DGL attack, RMSE w/ GS attack whose smaller values indicate lower risks.
    The blue line indicates the mean value and the shadow is the variance (some outliers are dropped). The noise is sampled from a Gaussian distribution with a mean zero and variance $10^{-3}$. 
    }
    \label{fig:bound-training}
\end{figure}

\section{The Impact of $\epsilon$ on Efficient Matrix Inversion}

In \cref{fig:finetune-eps}, we study the impact of $\epsilon$ on efficient matrix inversion proposed in \cref{sec:validation}.
We evaluate the impact on the LeNet with the MNIST dataset.
The y-axis is the RMSE. $\mathcal{I}$ (matrix inversion) is calculated as defined in \cref{eq:I2F}. $\mathcal{I}_{lb}$ (matrix norm) is calculated as defined in \cref{eq:up_lw_bounds}.
It shows with $\epsilon \in [1, 10]$, $\mathcal{I}_{lb}$ is a lower bound of the RMSE. 
It also shows that $\mathcal{I}_{lb}$ is an accurate estimator of I$^2$F.
Thus, we can directly use $\mathcal{I}_{lb}$ in practice to lower bound the privacy risk to avoid fine-tuning the $\epsilon$.

\def\finetunesz{0.22}
\begin{figure}[htpb]
  \subfigure[DGL ($\epsilon=0.1$)]{ \includegraphics[width=\finetunesz\textwidth]{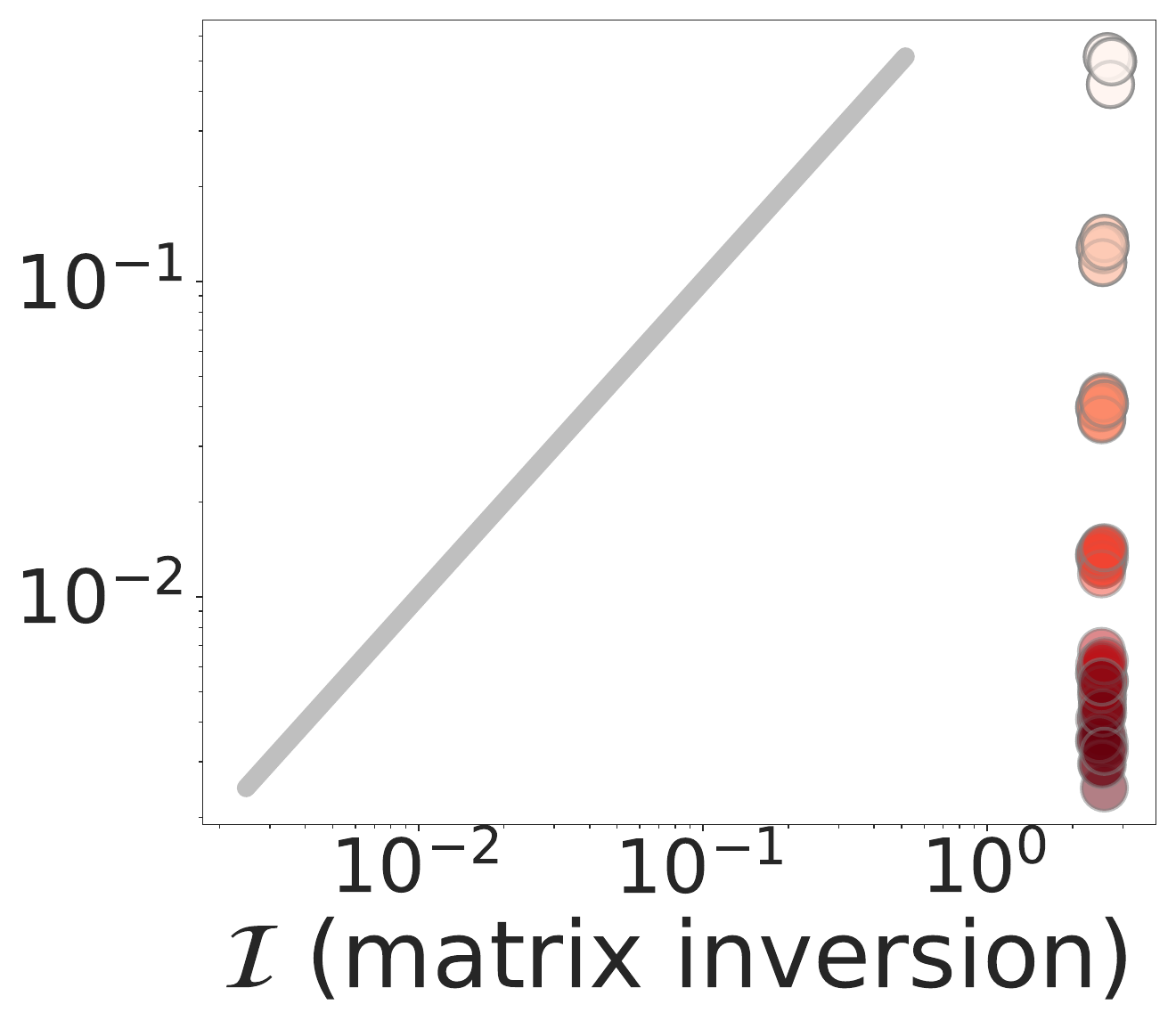}
    \label{fig:finetune-eps-conv5-mnist-l2-regu0.1}
    }
  \subfigure[DGL ($\epsilon=1$)]{ \includegraphics[width=\finetunesz\textwidth]{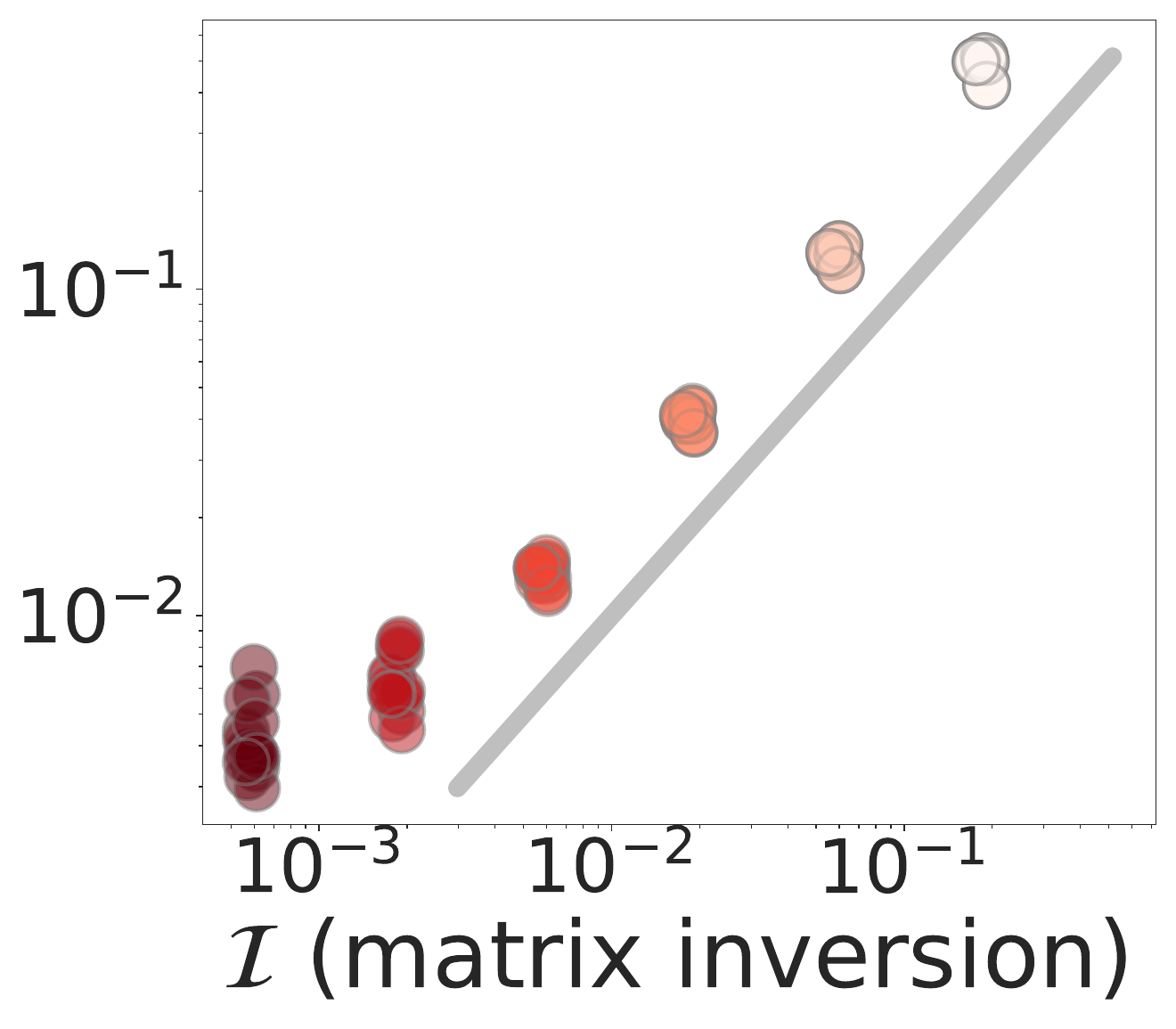}
    \label{fig:finetune-eps-conv5-mnist-l2-regu1}
    }
  \subfigure[DGL ($\epsilon=10$)]{
  \includegraphics[width=\finetunesz\textwidth]{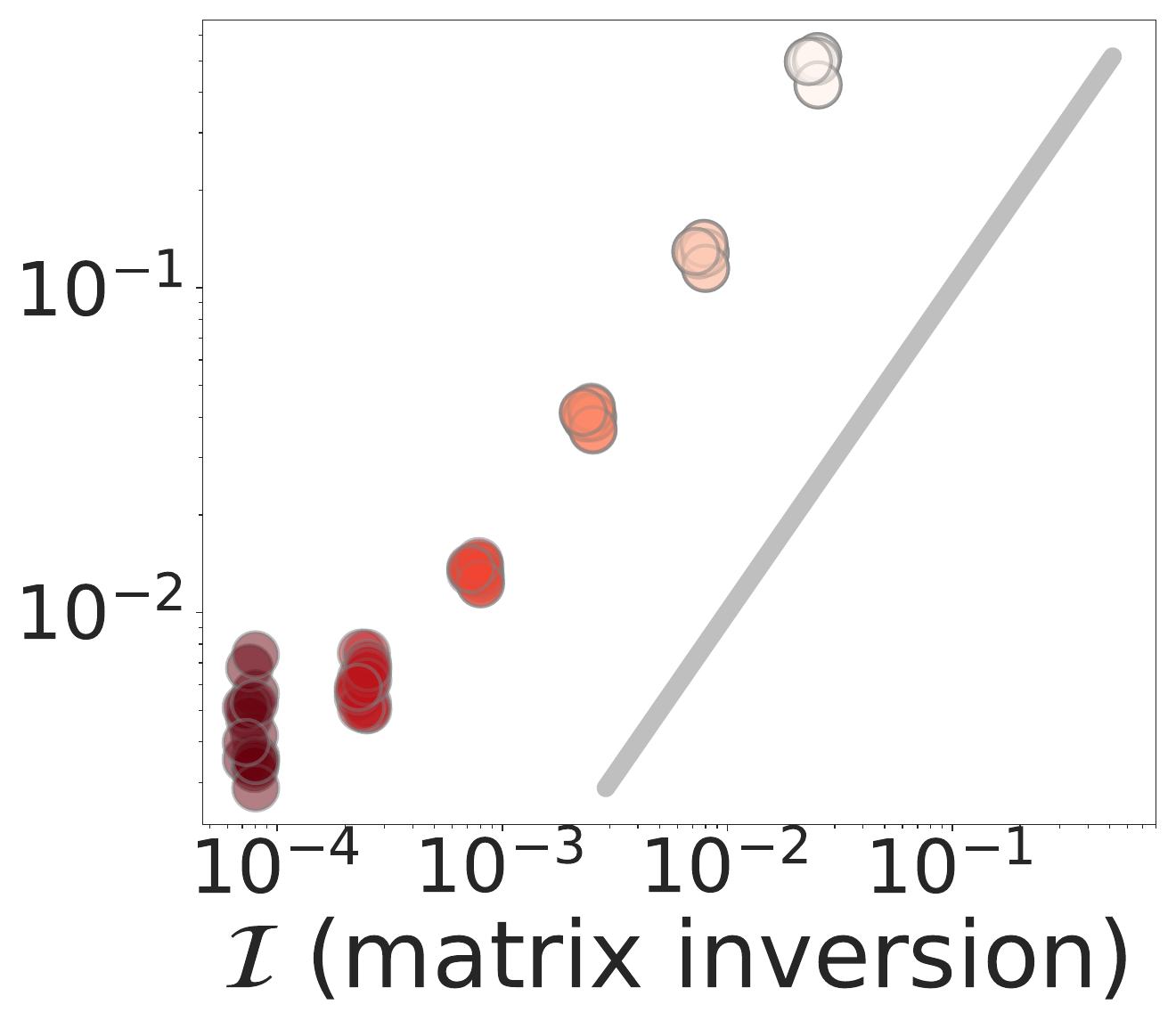}
    \label{fig:finetune-eps-conv5-mnist-l2-regu10}
    }
  \subfigure[DGL ($\mathcal{I}_{lb}$)]{
    \includegraphics[width=\finetunesz\textwidth]{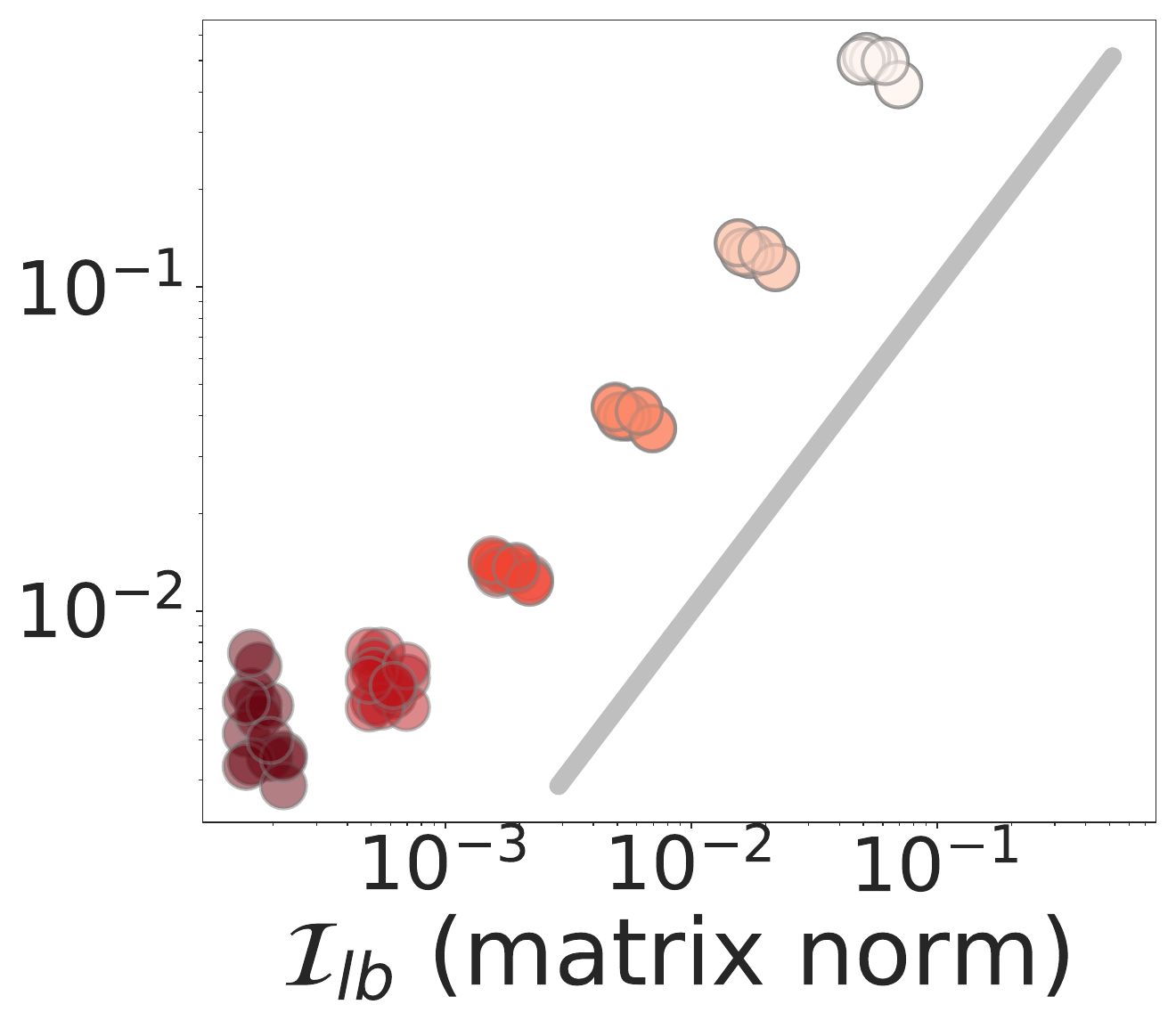}
    \label{fig:finetune-eps-conv5-mnist-l2}
    }
  \subfigure[GS ($\epsilon=0.1$)]{ \includegraphics[width=\finetunesz\textwidth]{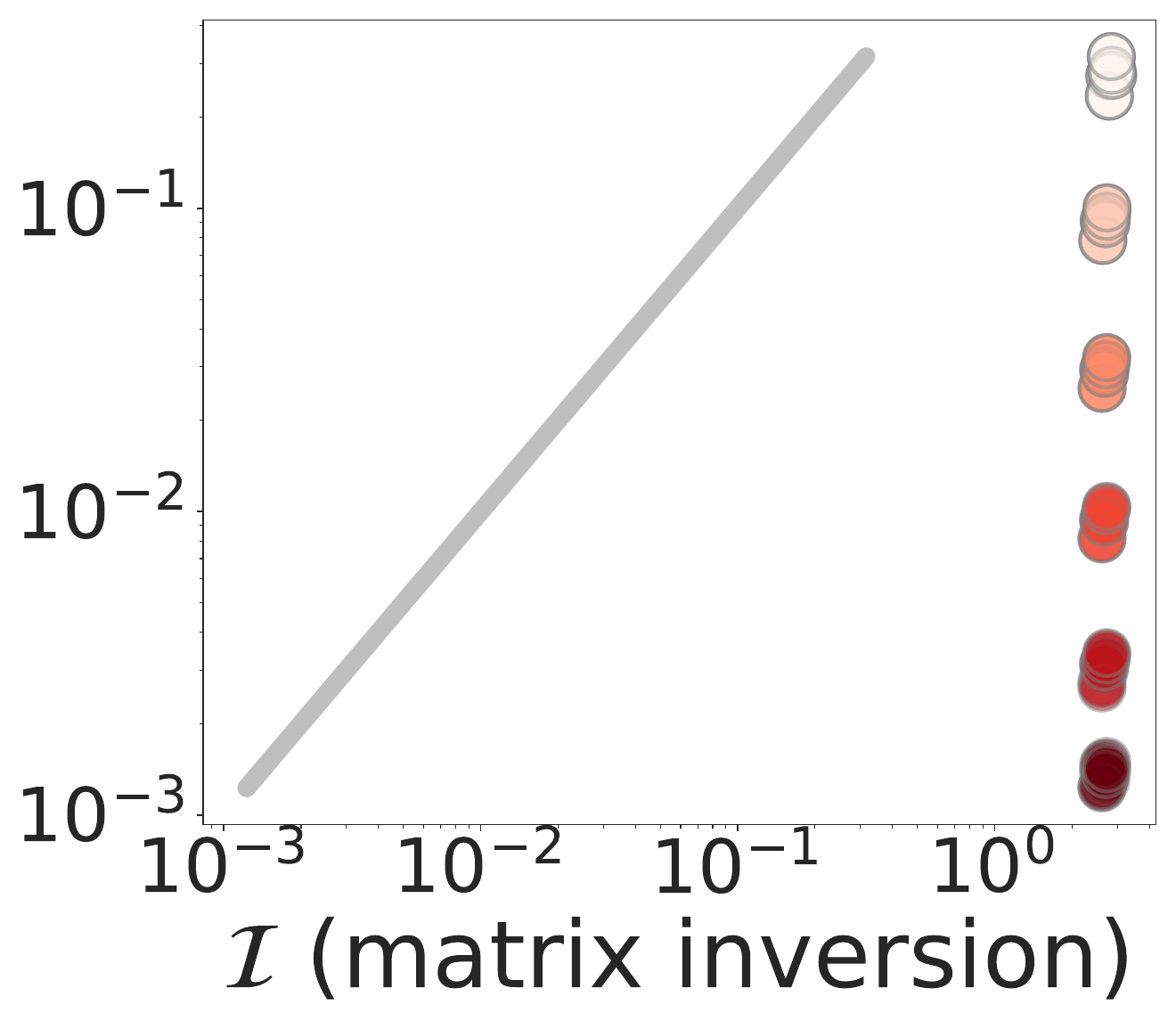}
    \label{fig:finetune-eps-conv5-mnist-sim-regu0.1}
    }
  \subfigure[GS ($\epsilon=1$)]{ \includegraphics[width=\finetunesz\textwidth]{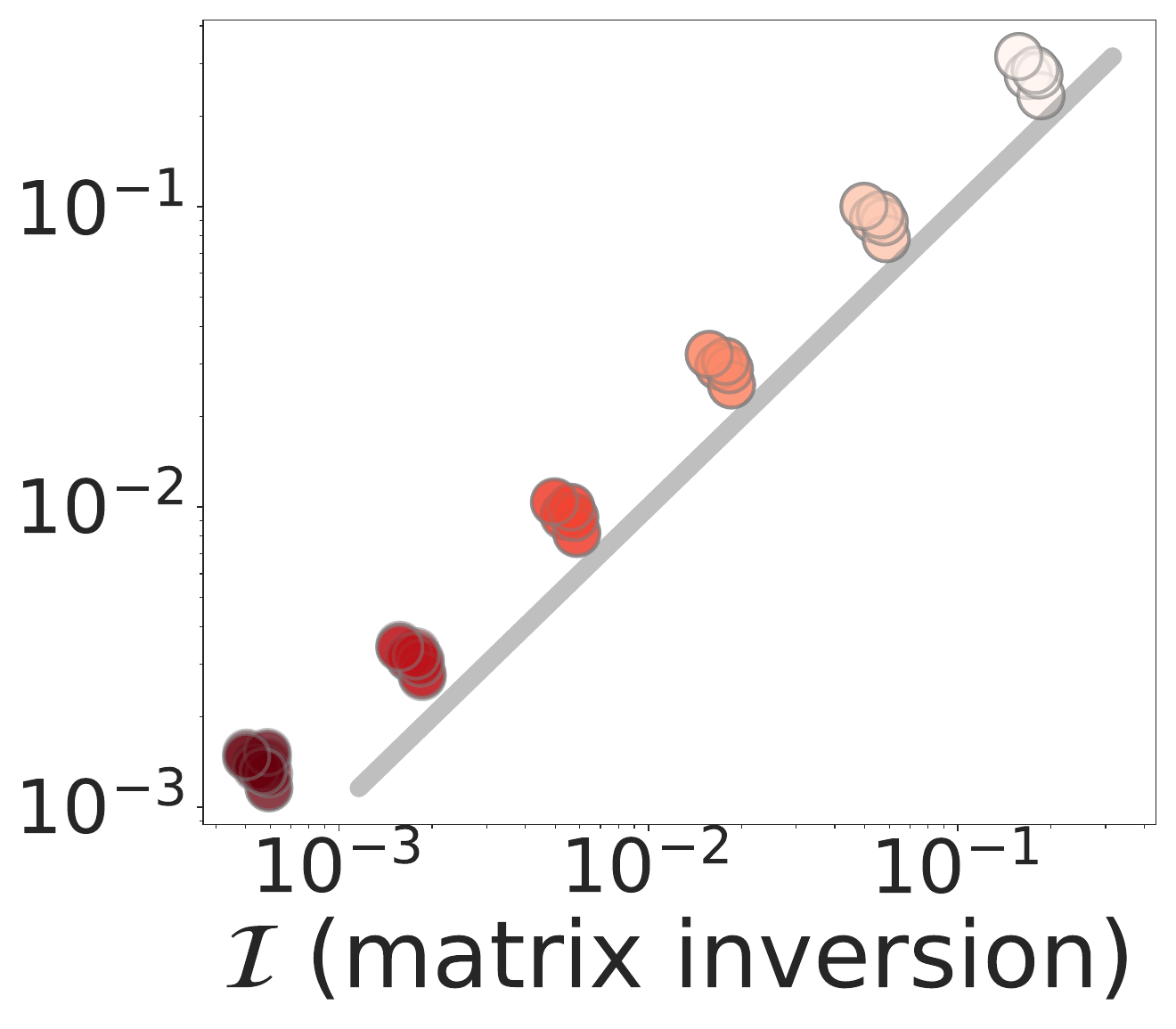}
    \label{fig:finetune-eps-conv5-mnist-sim-regu1}
    }
  \subfigure[GS ($\epsilon=10$)]{
  \includegraphics[width=\finetunesz\textwidth]{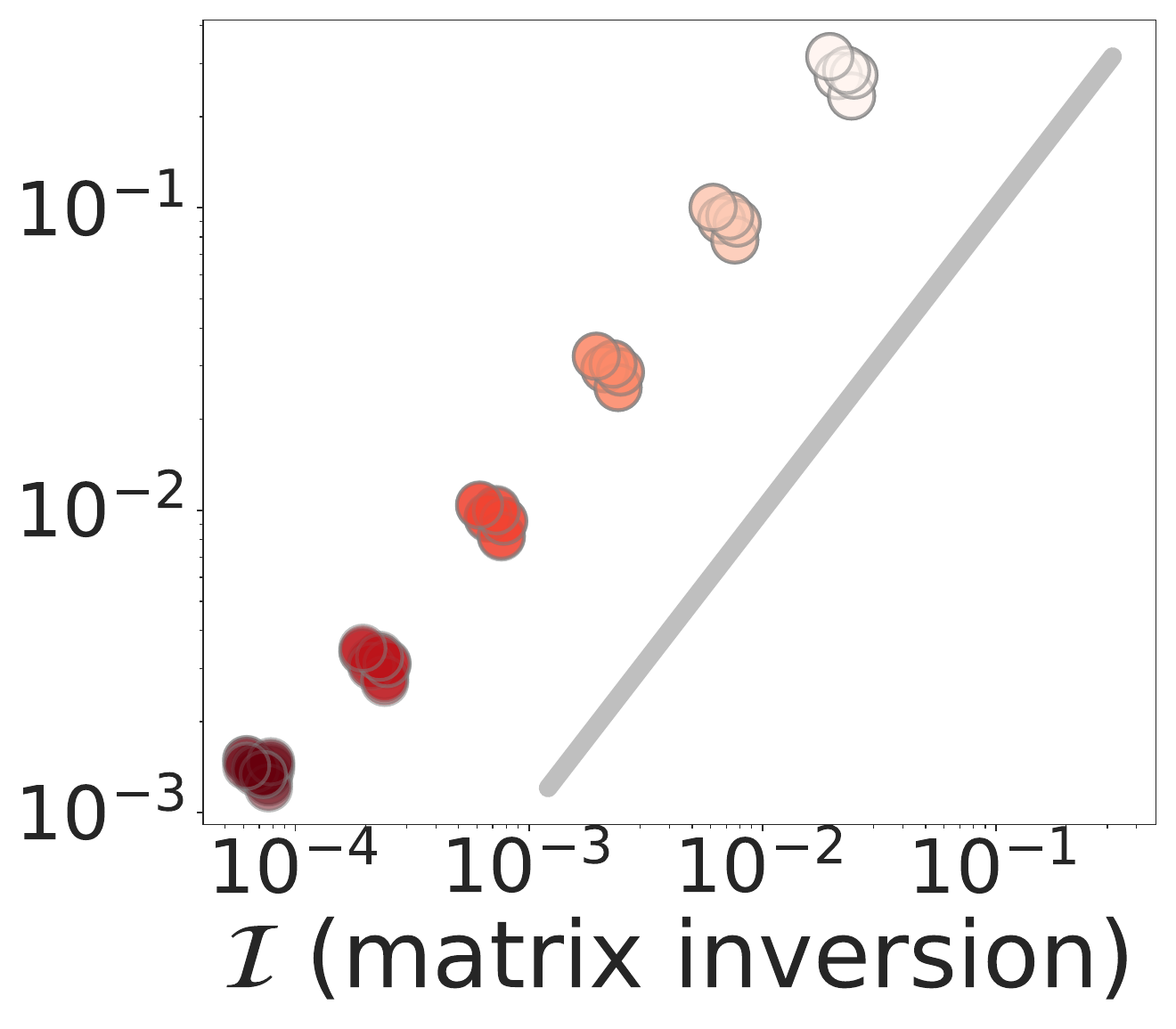}
    \label{fig:finetune-eps-conv5-mnist-sim-regu10}
    }
    \hfill
  \subfigure[GS ($\mathcal{I}_{lb}$)]{
    \includegraphics[width=\finetunesz\textwidth]{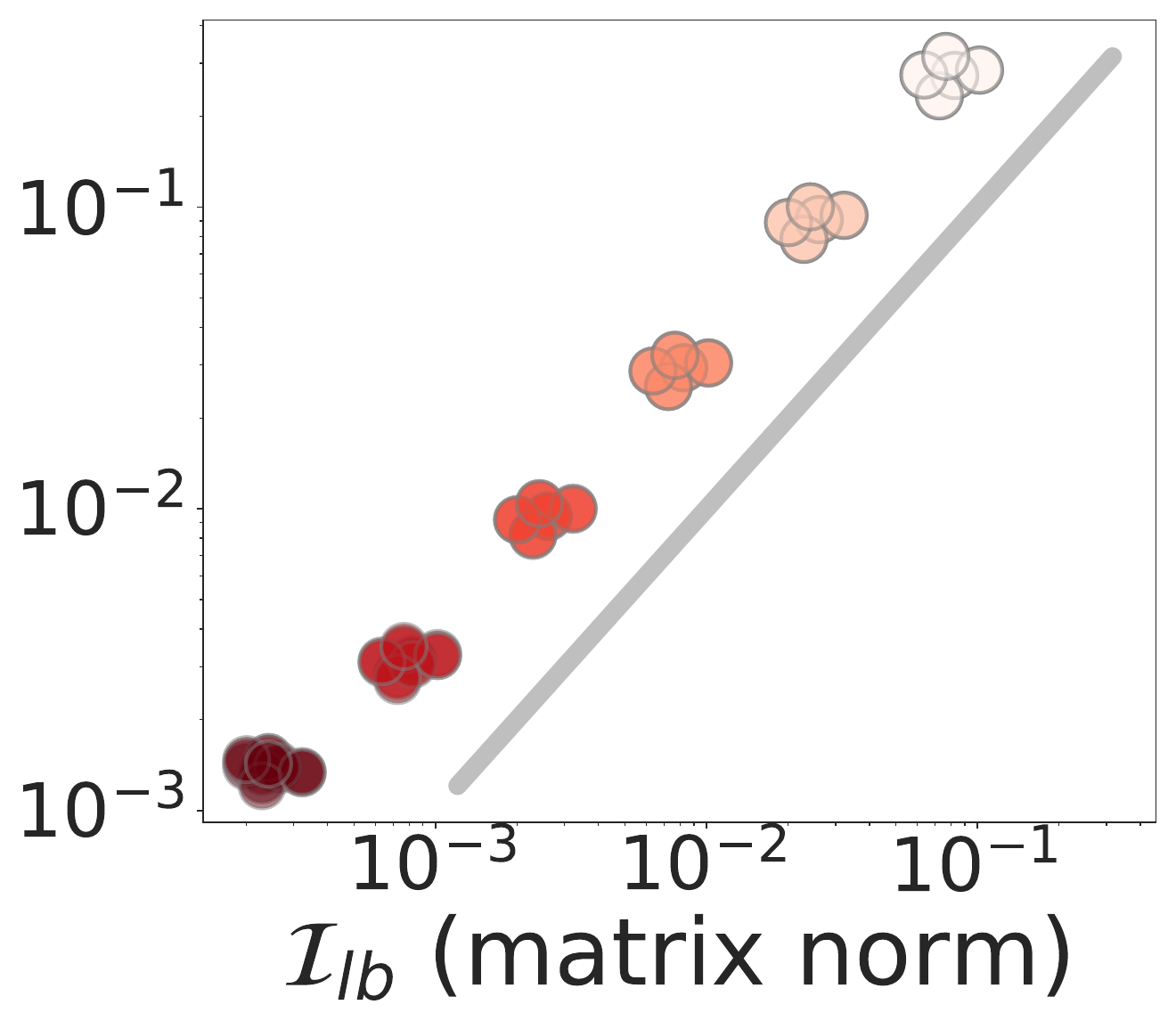}
    \label{fig:finetune-eps-conv5-mnist-sim}
    }
    \caption{
    The impact of the value of $\epsilon$ is evaluated on LeNet with the MNIST dataset. 
    The y-axis is the RMSE.
    (a)(b)(c)(d): DGL attack. (e)(f)(g)(h): GS attack. 
    It is observed that $\mathcal{I}$ (matrix inversion) is effective with $\epsilon \in [1,10]$ but not $\epsilon=0.1$.
    It shows that (1) there exists a range of $\epsilon$ where $\mathcal{I}_{lb}$ can lower bound the RMSE; (2) $\mathcal{I}_{lb}$ is an accurate estimator of I$^2$F, thus we can avoid fine-tuning $\epsilon$.
    }
    \label{fig:finetune-eps}
\end{figure}

\section{Discussion}

\subsection{Validity of Assumption 3.4-3.5}
We make assumptions about the Lipschitz continuous Jacobian and gradient in \cref{ass:lip_jacobian} and \cref{ass:smooth}, respectively.
These two assumptions are not necessary for I$^2$F but are only used to provide a theoretical validation of I$^2$F when the noise $\delta$ is not infinitesimal.
To discuss the validity of these two assumptions in practice, we calculate the value of $\mu_J$ and $\mu_L$ of LeNet with two datasets.
For the CIFAR10 dataset, $\mu_L = 0.5014$ and $\mu_J=1.7 \times 10^{-13}$. For the MNIST dataset, $\mu_L=0.7192$ and $\mu_J=3.7 \times 10^{-13}$.
These values are not so large that they are reasonable in practice.

\subsection{Extension of I$^2$F to the GS Attack}
The derivation of I$^2$F is considered in the DGL attack as defined in \cref{eq:inversion_attack}, but I$^2$F can also be applied to the GS attack.
Note that the minimizer of the DGL attack is one solution to the GS attack.
That means the GS attack can be attained by an optimal DGL attack which is our assumption.
Therefore, the DGL attack-based theorem is applicable to the GS attack.

Empirically, we evaluate $\mathcal{I}_{lb}$ on GS attack in \cref{fig:vary_data_attack_model,fig:bound-pruning}. It shows that $\mathcal{I}_{lb}$ is linearly correlated to the metrics of RMSE, which proves the utility of I$^2$F under GS attack.

\subsection{Extension of I$^2$F with Prior Knowledge}
Our theorem of I$^2$F can be extended to take into account the prior knowledge. Consider the inversion optimization problem with prior knowledge as $\min_x L'_I(x;g) = L_I(x;g) + I_C(x)$ 
where $I_C(x)$ constrains $x$ in the prior space $C$ and $L_I(x;g) = \|\nabla_{\theta}L(x;\theta)-g\|$ defined in \cref{eq:inversion_attack}. 
Then the optimization problem can be rewritten as $\min_{x \in C} L_I(x;g)$.
Thus, as long as the original image $x_0$ is in the feasible region defined by $I_C(x)$, our \cref{ass:reverse} and theorems are also applicable. Intuitively, a good regularization should satisfy the requirement, otherwise, it will unreasonably reject the correct recovery.

\subsection{Discussion with Prior Works}

Closest to our work, \cite{hannun2021measuring} provided a second-order worst-case metric for analyzing privacy attacks. However, our work provides novel contributions both on technique and implications which essentially root from the proposed I$^2$F metric.

\textbf{Technical difference.}
First, we focus on a different scope against \citep{hannun2021measuring}.
\cite{hannun2021measuring} proposed Fisher information loss (FIL) to measure the information leakage risk in the context of model inversion and attribute inference, e.g., only the attribute inference is considered in their experiments. 
Instead, we evaluate the privacy risk under gradient inversion attacks.
Second, our metric is more scalable and applicable to large models. 
For example, in Eq. (18) in \citep{hannun2021measuring}, the inverse of the Hessian matrix needs to be calculated even when quantifying the information leakage of only one sample, which is inefficient and intractable for large models.
Because of the computation inefficiency, only linear regression and logistic regression models are considered in their theories and experiments.
Instead, we verify the feasibility of I$^2$F in much larger models like ResNet152 on ImageNet in \cref{fig:resnet152-imagenet}.

\textbf{Our new findings.}
First, though the unfairness of information leakage of different samples was discussed in \citep{hannun2021measuring}, we investigate the issue in a different attack method and justify the commonness of the unfairness in different attacks.
Second, we additionally provide other insights than \citep{hannun2021measuring}. For instance, the influence on gradient inversion of different initialization methods is studied. We also find the influence of perturbations is not equivalent even in the same noise scale.
We believe these insights are also critical to the privacy and security community, especially in the area of gradient inversion.

Besides, \citep{guo2022bounding,hayes2023bounding} propose to bound the reconstruction attack in terms of the attack success rate and the expectation of the $L_2$ distance between the recovered and original image. 
Nevertheless, their conclusions are based on the differential privacy (DP) quantification framework so it is hard to analyze the influence of other defense mechanisms such as gradient pruning and arbitrary noise.
Also, they bound the privacy risk from the statistical sense raised by the randomness of DP, while our work can evaluate the sample-wise worst-case privacy risk at any time during the model training.
Moreover, they assume the access of the attacker to the remaining samples (except for the privacy sample) or the multi-round gradient, which is not practical in real gradient inversion scenarios.

\subsection{Discussions about the Worst-case Assumption}

Our work is mainly built upon the \cref{ass:reverse} that there is only a unique minimizer for $L_I(x;g)$ given a gradient vector $g$.
Because of the hardness of optimizing a non-linear objective in attack (\cref{eq:inversion_attack}), the worst-case may not be reachable in practice.
Here, we discuss when the assumption has to be relaxed and why our method is still applicable.
In addition, we emphasize that a stronger attacker exists theoretically, resulting in the necessity of a worst-case assumption.

\textbf{Non-bijective inversion mapping $G_r(g)$.}
$G_r(g)$ could be non-bijective when the loss function is non-convex.
In other words, given the same $g$, the output of $G_r(g)$ could include multiple choices.
We want to argue that this case does not conflict with our assumption.
Consider an attack given the exact gradient of a sample.
Note that the sample itself is a solution to \cref{eq:inversion_attack}. 
Thus, given the exact gradient of a sample, the attack can exactly recover the sample. Even if the solution is non-unique, we can still essentially assume the attack can attain the sample in the worst case.

\textbf{Optimizing the gradient inversion objective may not converge to the original image.}
Note that the original image is always an optimal solution for the inversion loss in \cref{eq:inversion_attack}.
Even though the convergence is not guaranteed, there always exists an algorithm that can converge to the original image.
To our best knowledge, there is no evidence to show the attack cannot approach the worst case where the original input is recovered. Instead, empirical results have shown that the images can be recovered almost perfectly \citep{geiping2020inverting}. Thus, due to the sensitivity of privacy, a worst-case assumption is necessary to strictly bound possible privacy risks with arbitrary strong attacks, which is commonly imposed by the literature \citep{dwork2006differential,abadi2016deep}.

\section{Realistic Impact}

Federated learning (FL)~\citep{mcmahan2017communication} is a popular distributed training paradigm that benefits from the data and computation sources from multiple clients.
As a principle of FL, clients will upload the local gradient based on the private data to the server for the concerns of data privacy and safety, instead of directly sharing the private data of each client.
However, recent works~\citep{geiping2020inverting,zhu2019deep} show that an attacker, who has access to the local gradient (e.g., a malicious server), can leverage the local gradient to recover the private data of the clients, which we call Deep Gradient Leakage (DGL). 
Even with large models like Transformer, the attacker can still successfully recover the private data given the gradient~\citep{hatamizadeh2022gradvit}.

Auditing potential privacy risks is essentially desired for privacy-sensitive applications, including but not limited to finance~\citep{long2020federated}, healthcare~\citep{antunes2022federated,xu2021federated}, and clinical data~\citep{dayan2021federated,roth2020federated}. 
Such privacy concerns have been discussed extensively in previous work.
For instance, \cite{zhang2023privacy} raises the privacy concern of FL in financial crime detection while \citep{kaissis2021end,li2023review} discusses it in the medical and healthcare applications, respectively.

To echo the demands for privacy risk auditing, we provide a fundamental tool for bounding the worst-case risks of DGL.
We show multiple insights, for example, the unfairness of privacy protection using random noise defense.
Thus, we expect our work can call the attention of the community to the privacy concern raised by DGL, especially the worst-case instance-level privacy risk. Moreover, We expect I$^2$F to be a keystone for designing more powerful defense mechanisms.

\end{document}